\newtheorem{theorem}{Theorem}
\newtheorem{definition}{Definition}[section]
\begin{document}

\title{Sequential Choice Bandits with Feedback for Personalizing users' experience}

\author{\name Anshuka Rangi \email arangi@ucsd.edu \\
       \name Massimo Franceschetti \email massimo@ece.ucsd.edu \\
       \addr University of California, San Diego,\\
       9500 Gilman Drive,
        La Jolla, CA, USA \\
       \AND
       \name Long Tran-Thanh \email long.tran-thanh@warwick.ac.uk \\
       \addr  University of Warwick, \\ 
       Coventry CV4 7AL, United Kingdom}


\maketitle

\begin{abstract}
In this work, we study \emph{sequential choice bandits with feedback}.
We propose bandit algorithms  for a platform that personalizes users' experience to maximize its rewards. 
For each action directed to a given user, the platform is given a positive reward, which is a non-decreasing function of the action,  if this action is below the user's threshold. 
Users are equipped with a patience budget, 
and actions that are above the threshold decrease  the user's patience.
When all patience is lost, the user abandons the platform.
The platform   attempts to learn the thresholds of the users in order to maximize its rewards, based on two different feedback models describing the information pattern available to the platform at each action.   
We define a notion of regret   by    determining the best action to be taken when the platform knows that the user's threshold is in a given interval. 
We then propose   bandit algorithms for the  two feedback models
and show that upper and lower bounds on the regret  are of the order of $\tilde{O}(N^{2/3})$ and $\tilde\Omega(N^{2/3})$, respectively,  where $N$ is the total number of users.
Finally, we show that the waiting time of any  user before receiving a personalized experience is uniform in $N$.
\end{abstract}

\section{Introduction}
Many  companies  like Amazon, Facebook, YouTube and Netflix face the problem of personalizing users (or customers) experience to maximize their rewards. Personalization is achieved by  learning  users' preferences over time, and leveraging this information to recommend relevant products. However, marketing fatigue, which refers to exposure to irrelevant suggestions or excessive recommendation, may backfire. It results in user's dissatisfaction and eventual abandonment of the platform.
Consider for example news feeds, e-stores, on-line blogs, or mobile apps. For all of these platforms, there is a value in displaying the advertisement of new products, and sending emails or notifications related to promotions. However, displaying (or sending) too many of them can impact the users negatively, and increases the risk of abandonment.  Based on a survey \cite{harvard}, the cost of user acquisition is estimated to be 5 to 25 times higher than keeping the existing user. Hence, controlling the fatigue (or unsatisfactory actions) is crucial. This can be cast into a decision problem where initially ``the more is the better,'' while eventually users lose their patience and do not tolerate additional unsatisfactory actions.

Platforms typically receive feedback signals from their customers, which can be used to avoid or delay abandonment. 
For example, a user can provide a \emph{ negative feedback} by closing the displayed advertisements, posts, or articles; or marking them as unrelated (as for example on Facebook and news blogs); or by disliking the recommended item (as for example on Amazon, YouTube and Netflix); or by ignoring  promotional emails and notifications. 
On the other hand, a \emph{positive feedback} would correspond to visiting a recommended product in the promotion email, clicking the displayed advertisement, or watching the recommended video.

A similar problem arises in human resource management in corporate environments. In particular, companies typically push their employees to work as hard as possible within their own comfort zones (e.g., within the employee's accepted range of work-life balance). However, the boundaries of these comfort zones is not known a priori, and even the employees may not be fully conscious of their own boundaries. On the other hand, if the comfort boundary is crossed, it will cause annoyance and grievance, and crossing it too many times will lead to the separation of the employee.

Another example is a negotiation between two agents: Agent 1 and Agent 2. In a negotiation, there is a maximum amount that  Agent 1 is ready to pay for the services. If Agent 2 presents too many options above the maximum amount, then Agent 1 might not be willing to negotiate anymore, and there would be no settlement. 

\subsection{Sequential Choice Bandits with Feedback}
Motivated by these examples, in this paper we consider a \emph{sequential choice bandit problem with feedback}, where a platform interacts with $N$ users   and performs its actions based on on-line feedback, it may receive from the users. 

The interaction between the users and the platform is based on  a \emph{Thresholded  Learning Model} (TLM), where  at the $t^{th}$ round of interaction with user $n$, the platform performs an action $y_{n}(t)\in[0,1]$, and receives a reward $r(y_{n}(t))>0$ if the action $y_{n}(t)$ is below a threshold $\theta_{n} \in (0,1)$. The  thresholds  are drawn independently from a distribution $F$ unknown to the platform. The reward  is a non-decreasing function  of the action and is identical for all users.
Users are equipped with a patience budget, whose initial value $B$ is uniform for all users and known to the platform, 
and actions that are above the threshold decrease  the user's budget by a depletion cost of one unit.
When all of a user's patience is lost, the user abandons the platform. In this model, the users' thresholds correspond to the boundary between satisfactory and unsatisfactory actions. 
Compared to the classic Multi-Armed Bandit (MAB) setting, a key challenge in our setting is that  the threshold of the user is never  revealed to the platform. All of our results  can be easily extended to the case where thresholds are  revealed, for example, at time of abandonment through a feedback questionnaire; or a noisy version of them becomes known. By performing an extra layer of optimization, we 
can also account for user-dependent initial budgets $B(n)$ and user-dependent patience depletion costs $c(n) \geq 1$. 
\subsection{Feedback models}
We consider 
two different feedback models describing the information pattern available to the platform at each action. In a \emph{hard feedback} model, 
at each action it is revealed whether this action was above or below the threshold. Namely, the platform observes the feedback $\mathbf{1}(y_{n}(t)\leq \theta_{n})$. This  corresponds to immediately observing the   reward of each action. This model can also be considered as a deterministic feedback model. 
In a \emph{soft feedback} model, at each action below the threshold, the feedback is revealed with probability $p_1 \in (0,1)$, and at each action above the threshold, the feedback is revealed with probability $p_2\in (0,1)$. Namely, if  $y_{n}(t) \leq \theta_{n}$,  then the platform observes $\mathbf{1}(y_{n}(t)\leq \theta_{n})$  with probability $p_1$, and if  $y_{n}(t) > \theta_{n}$, then the platform observes $\mathbf{1}(y_{n}(t)\leq \theta_{n})$  with probability $p_2$. In practice, $p_1$ corresponds to  the user probabilistically providing positive feedback by visiting the recommended products if the user likes the recommendations, or the employee likes certain recently allocated tasks.  In practice, $p_2$ corresponds to the user probabilistically providing negative feedback by disliking   advertisements and recommended products or requesting to reduce the number of notifications if the user is negatively impacted by them. 
The \emph{soft feedback model} corresponds to not being able to immediately observe the reward  of  each  action,  but  to  provide partial probabilistic feedback. Note that if $p_1=1$, then the soft feedback model reduces to hard feedback model since the absence of any feedback would imply that $y_{n}(t)> \theta_n$. Likewise, if $p_2=1$, then the soft feedback model  reduces to hard feedback model since the absence of any feedback would imply that $y_{n}(t)\leq \theta_n$. 

\subsection{Objective function}
The objective of the platform is to maximize its total discounted reward over all $N$ users, namely 

\small{
\begin{equation}\label{eq:OptimizationProblem}
\begin{aligned}
 \max_{\{\{y_n(t)\}_{t=1}^{\infty}\}_{n=1}^{N}}\mathbf{E}\Big[\sum_{n}\sum_{t=1}^{T_n}\gamma^{t-1}r(y_n(t))\mathbf{1}(y_n(t)\leq \theta_{n})\Big]\\
 \mbox{s.t.  } \forall n\in[N]: \mathbf{P}\left(\sum_{t=1}^{T_{n}}\mathbf{1}(y_{n}(t)> \theta_{n})\leq B  \right)=1,
\end{aligned}
\end{equation}
}
\normalsize
where $\gamma \in [0,1)$ is a discount factor, the random variable $T_{n}$ is the total number of rounds of interaction of user $n$ with the platform and  $\{y_{n}(t)\}_{t=1}^{T_{n}}$ denotes the sequence of actions for user $n$. 

\subsection{Key novel aspects of the problem}
First, in contrast with the classical MAB setting, the distribution $F$ of the users' thresholds cannot be learned from the samples $\{\theta_n\}_{n=1}^N$ because the user's threshold $\theta_n$ is never revealed to the platform. Second, unlike Bandits with Knapsack, where the remaining budget (or optimization constraint) is observed by the learner  at all times, in our setting the remaining patience  of the user remains unobserved. Finally, unlike combinatorial bandits where the learner chooses the entire set of actions at the starting of each round, in our setting the actions presented to the user are chosen sequentially, and the next action in the sequence depends on all the past feedback  received from the user, and on the feedback model adopted. This also implies that the choice of the optimal sequence of actions will be different for  the two feedback models. 
\subsection{Contributions}
We propose a \emph{sequential choice bandit} model that captures the  sequential interaction between the users and the platform, including abandonment. Additionally, it incorporates two different feedback models observed in real systems.

To define the regret, we first determine in Section~\ref{sec:optimalPolicies} the best next action in the sequence for the platform, when it knows that the threshold of user $n$ is in an uncertainty interval (UI) $[\ell,u]$. Given an oracle having access to  the distribution $F$, feedback probabilities $p_1$ and $p_2$, and the residual patience $B_r(n)=B_r$ of the user $n$, the platform can present the UI to the oracle, and the oracle returns the best next action in the sequence. This best action is described in terms of a Markov Decision Process (MDP).  Comparing the performance of our learning algorithms with the performance of the platform having access to the oracle, 
we then define in Section~\ref{delta} a $\delta$-policy, which is an approximation of the optimal MDP strategy used by the oracle, and we  introduce the notion of $\delta$-regret, which is the expected difference between the total rewards of the $\delta$-policy and the total rewards of the platform's learning algorithm for $N$ users. 

We   propose learning algorithms for  the two feedback models. Unlike the oracle, these learning algorithms do not have any knowledge of distribution $F$, feedback probabilities $p_1$ and $p_2$, and of the residual patience $B_r$. We show that  the $\delta$-regret of these algorithms is at most $\tilde{O}(N^{2/3})$.  We also show that there exists a distribution $F$ and a reward function $r(.)$ such that the $\delta$-regret of any learning algorithm is at least $\tilde{\Omega}(N^{2/3})$. Thus, our proposed algorithms are order optimal up to a logarithmic factor in $N$.  Additionally, we also establish that both the probabilistic positive feedback corresponding to $p_1$ and the probabilistic negative feedback corresponding to $p_2$ can be  utilized to obtain order optimal regret. 

We then consider another important performance measure that is the waiting time of the user before receiving a personalized experience.
We show that for our algorithms the waiting time of any user is $O(B)$, uniformly in  $N$. Since practical systems handle a number of users in the order of millions,  we conclude that our algorithms are also practical.

\section{Related Work}
The idea of considering a sequential choice Bandit model with feedback is quite unique, and  we are only aware of limited works in a similar setting \cite{schmit2018learning,LuCustomer2017,cao2019dynamic}. The work in    \cite{schmit2018learning} studies a setting  where users have no patience, namely $B=0$, and crossing a user's threshold always leads to the immediate abandonment of the platform.  For this special case, the optimal strategy is a fixed constant action for all $N$ users. This eliminates the need for both personalization, and   feedback utilization from the users. 
In the case of a fixed constant action, the classic Upper-Confidence Bound (UCB) algorithm in ~\cite{bubeck2012regret} is order optimal, and  the waiting time is $O(N)$. Unlike \cite{schmit2018learning}, in our work users are tolerant towards unsatisfactory actions, and this enables both personalization and feedback utilization. This also opens up the possibility of studying different users' feedback models. 
The optimal solution is not a constant action, which increases the complexity of the learning process. 
Finally, in our work, the waiting time  is $O(B)$, uniform in $N$. 

In \cite{LuCustomer2017}, authors study user-platform interaction with abandonment, but compared to \cite{schmit2018learning} they  further restrict the platform's actions to safe and risky action, leading  to different results. 

 The work in \cite{kveton2015cascading,cao2019dynamic,cao2019sequential,chen2020revenue} studies  variants of \emph{sequential choice bandit model without feedback}. Unlike our setting, 
 the sequence of action is pre-determined at the arrival of each user, independently from the user's feedback. Hence, both the optimal sequence of actions and the bandit algorithms in the two settings \emph{with feedback} and \emph{without feedback} are different.
 Additionally, in  \cite{cao2019dynamic}, the probability of abandonment at each time step is only dependent on the current action in the sequence.  In contrast, in our setting, the probability of abandonment at each time step is a function of all the past actions presented to the user and is incorporated in the patience budget of the user. Finally, unlike our setting where $\theta_n$ is never revealed and the feedback is probabilistic,  in  \cite{cao2019dynamic}, the reward corresponding to each action is immediately observed by the platform. 





Our setting seems also to be  related to Reinforcement Learning (RL) as the optimal action at each step of interaction is  given by a MDP~\cite{sutton2018reinforcement}. In RL, the state transition dynamics, which correspond to the probability of the next state given the current state and action, is the same across different rounds (commonly referred as episodes in the literature).
In contrast, in our setting, since each user has a different threshold, the state transition dynamics (which depend on the remaining budget and UI) are different for each user. This corresponds to having one single episode to learn  in RL. 
We also point out that theoretical results on RL algorithms have been sparse and mostly  limited to MDPs with discrete state  and action spaces  ~\cite{azar2017minimax,jin2018q}. For MDPs with continuos spaces, a common solution is to discretize the space, and use the corresponding algorithms developed in a discrete setting. However, this approach leads to linear regret bounds ~\cite{azar2017minimax,jin2018q}. Our idea of $\delta$-policy and $\delta$-regret can in principle lead  to  policies with sub-linear regret bounds in RL for continuous MDPs, although this is not investigated  here. 

Our setting is also different from Budgeted Multi-Armed Bandits (BMAB) \cite{ding2013multi,xia2016budgeted,rangi2018multi,rangi2019unifying}. In BMAB, at each round  the learner performs a single action, observes the reward, and pays a cost for performing this action. This cycle is repeated $T$ times and the total budget of performing actions over $T$ rounds is bounded by a constant, which restricts the exploration of different actions. 
In our setting, the platform  performs a sequence of actions for each user $n$, whose length is random and depends on  $B$ and the choice of actions. This occurs for all $N$ users, an analogue of $T$ in BMAB, and there is no restriction on the exploration of different actions across different users. In other words, the budget $B$ determines the selection of one ``super-action,'' consisting of a sequence of actions, at each round, and does not restrict the exploration of actions across $N$. 

\section{Optimal Personalized Strategy}\label{sec:optimalPolicies}
We now describe the best action for the platform, when the threshold $\theta_n$ is in an  UI $[\ell,u]$, in terms of MDP which can be solved by an oracle having access to the  distribution $F$, feedback probabilities $p_1$ and $p_2$, and the residual patience $B_r$ of each user, for  the two feedback models. We will then compare the performance of the oracle and our learning algorithms, which has no knowledge of $F$, $p_1$, $p_2$ and $B_r$. 

The platform uses the user's feedback to reduce the UI containing the user's threshold. For example, initially  $\theta_{n}\in [0,1]$. After performing $y_n(t)=0.3$ and receiving a positive feedback  $\mathbf{1}(y_{n}(t)\leq \theta_n)=1$, the platform learns that  $\theta_{n}\in [0.3,1]$. Next, if $y_n(t+1)=0.6$ and negative feedback $\mathbf{1}(y_{n}(t)\leq \theta_n)=0$ is received, then  $\theta_{n}\in [0.3,0.6]$.

The MDP provides an optimal policy for each user, where the policy is a mapping from the state space to the action space of the MDP. The state space, represented as $\{(\ell,u,B_{r})_n\}$, consist of the UI $[\ell,u]$ from the platform and the residual budget $B_r$ from  user $n$,  where $\ell$ and $u$ indicate lower and upper bounds on the threshold $\theta_{n}$ of the user $n$. Thus, the state space is continuous with respect to $\ell$ and $u$, and is discrete with respect to $B_{r}$. The action space is the interval $[0,1]$. The MDP provides the best action for the platform, when the threshold $\theta_n$ is in a UI $[\ell,u]$, using its knowledge of the distribution $F$ and the residual patience $B_r$ from user $n$.


In the \emph{hard feedback model}, the platform always receives a feedback $\mathbf{1}(y_{n}(t)\leq \theta_{n})$ for each action $y_{n}(t)$, and therefore is able to reduce the UI $[\ell,u]$  at every interaction. 
For the generic user, the optimal policy given by the MDP performs an action $y^*(\ell,u,B_{r})$ in  state $(\ell,u,B_{r})$ and its expected reward (or value function) is $V^*(\ell,u,B_{r})$ in state $(\ell,u,B_{r})$, where
\cite{puterman1990markov}
\begin{equation}\begin{split}\label{eq:optimalAction}
    &y^*(\ell,u,B_{r})=\mbox{argmax}_{\ell\leq y\leq u} V(\ell,u,B_{r},y),\\ &V^*(\ell,u,B_{r})=\mbox{max}_{\ell\leq y\leq u} V(\ell,u,B_{r},y),
\end{split}\end{equation}
 and $V(\ell,u,B_{r},y)$ is the maximum expected reward  if action $y$ is performed in state $(\ell,u,B_{r})$, namely 
\begin{equation}\label{eq:expectedReward}\begin{split}
       V(\ell,u,B_{r},y)&= \frac{F(u)-F(y)}{F(u)-F(\ell)}(r(y)+\gamma V^*(y,u,B_{r}))+\frac{F(y)-F(\ell)}{F(u)-F(\ell)}\gamma V^*(\ell,y,B_{r}-1).
\end{split}
\end{equation}
Now, we briefly explain the intuition behind $V(\ell,u,B_{r},y)$ in (\ref{eq:expectedReward}).  In state $(\ell,u,B_r)$, the expectation is computed by considering two mutually exclusive events: $A_{1}=\{y< \theta_{n}\leq u\}$ and $A_{2}=\{\ell\leq \theta_{n}\leq y\}$, and their
conditional probabilities 
\begin{equation}\begin{split}\label{eq:conditionProb1}
    &P(A_{1}|\ell\leq \theta_{n}< u)=\frac{F(u)-F(y)}{F(u)-F(\ell)},\\ &P(A_{2}|\ell\leq \theta_{n}< u)=\frac{F(y)-F(\ell)}{F(u)-F(\ell)}.
\end{split}\end{equation}
If $A_{1}$ occurs, the platform receives the reward $r(y)$ and pays no depletion cost. Also, the feedback is positive i.e. $y\leq \theta_{n}$, the UI  is updated from $[\ell,u]$ to $[y,u]$, and the new state of the MDP is $(y,u,B_{r})$. Likewise, if $A_{2}$ occurs, the platform does not receive any reward, pays the depletion cost of one unit and updates UI to $[\ell,y]$ using the negative feedback.
To conclude, $  V^*(\ell,u,B_{r})$ is the value function of the state $(\ell,u,B_{r})$ and the optimal policy chooses the action with the highest expected reward at each state. Unlike the special case of  \cite{schmit2018learning}, the closed form solution of our MDP cannot be found, and the platform chooses a different action for each user depending on its feedback.

In  the \emph{soft feedback} case,  if $y_{n}(t)\leq \theta_n$, the platform observes the feedback  $\mathbf{1}(y_{n}(t)\leq \theta_{n})$  with probability $p_1$, and if $y_{n}(t) > \theta_{n}$, then the platform observes the feedback $\mathbf{1}(y_{n}(t)\leq \theta_{n})$  with probability $p_2$. 
For the generic user, the optimal action $y^*(\ell,u,B_{r})$ and value function $V^*(\ell,u,B_{r})$ in state $(\ell,u,B_r)$ are 
\begin{equation}\label{eq:negativeSoft}\begin{split}
    &y^*(\ell,u,B_{r})=\mbox{argmax}_{\ell\leq y\leq u}V(\ell,u,B_{r},y),\\ &V^*(\ell,u,B_{r})=\mbox{max}_{\ell\leq y\leq u} V(\ell,u,B_{r},y),
    \end{split}
\end{equation}
where
\begin{equation}\label{eq:SoftUpper}
\begin{split}
      V(\ell,u,B_{r},y)\hspace{-1pt}&=\hspace{-1pt}\frac{F(u)\hspace{-1pt}-\hspace{-1pt}F(y)}{F(u)-F(\ell)}\hspace{-1pt}\big(r(y)\hspace{-1pt}+\hspace{-1pt}\gamma (1-p_1)V^*(\ell,u,B_{r})+\gamma p_1 V^*(y,u,B_{r})\big)\hspace{-1pt}\\
      &+\hspace{-1pt}\frac{F(y)-F(\ell)}{F(u)-F(\ell)}\gamma \big(p_2 V^*(\ell,y,B_{r}\hspace{-1pt}-\hspace{-1pt}1)+\hspace{-1pt}(1\hspace{-1pt}-\hspace{-1pt}p_2)V^*(\ell,u,B_{r}\hspace{-1pt}-\hspace{-1pt}1)\big).
\end{split}
\end{equation}
Given $A_1$, with probability $p_1$, the platform receives a positive feedback, and the state of the MDP becomes $(y,u,B_{r})$. With probability $1-p_1$, there will be no change in the lower bound of the UI and the state of the MDP will remain same, i.e. $(\ell,u,B_{r})$. 
Given $A_{2}$, with probability $p_2$, the platform receives a negative feedback, and the state of the MDP becomes $(\ell,y,B_{r}-1)$. With probability $1-p_2$, there will be no change in the upper bound of the UI and the state of the MDP will be $(\ell,u,B_{r}-1)$. 
If $p_1=1$, then the model reduces to the hard feedback model as the absence of positive feedback implies $A_2$. Likewise, if $p_2=1$, then the model reduces to the hard feedback model as the absence of negative feedback implies $A_1$. 


\section{The $\delta$ -regret}\label{delta}
We now look at the problem from a learning perspective. We introduce the notion of $\delta$-policy, which is an approximation of the optimal MDP strategy described above,  and use it
to define the $\delta$-regret.

We make the  following two assumptions. \\
\textit{Assumption 1:} The reward function $r(y)$ is $L_{r}$-lipschitz continuous, namely there exists a constant $L_{r}>0$ such that for all $x,y\in [0,1]$ and $x\leq y$, $r(y)-r(x)\leq L_r (y-x).$\\
\textit{Assumption 2:} The distribution function $F$ is Lipschitz continuous, namely there exists two constants  $L_{h}\geq L_{c}>0$ such that for all $x,y\in[0,1]$ and $x\leq y$, $L_{c}(y-x)\leq F(y)-F(x)\leq L_{h}(y-x).$

Assumption 1 is justified in practice,  as a small change in an action cannot increase the rewards indefinitely. 
In Assumption 2, the upper bound avoids the concentration of the threshold probability   over a small region. The lower bound avoids   intervals with zero probability of threshold. This assumption is  common in 
bandits with continuous action space \cite{yu2011unimodal}.

\begin{definition}
A $\delta$-policy of a MDP  performs action $\ell$ if the residual uncertainity  in state $(\ell,u,B_r)$ is $(u-\ell)\leq \delta$,  thus avoiding the risk of crossing the threshold;  otherwise it selects the optimal action, according to the Bellman equations.
\end{definition}   
In our setting, the $\delta$-policy selects an action $y_{\delta}^*(\ell,u,B_{r})$ in state $(\ell,u,B_r)$  such that
\begin{equation}
\label{eq:DeltaOptimal}
y_{\delta}^*(\ell,u,B_{r}) =\begin{cases}
        y^*(\ell,u,B_{r})  \qquad   \mbox{if } |u-\ell|>\delta, \\
         \ell \qquad \qquad \qquad  \mbox{  if } |u-\ell|\leq  \delta.\\
\end{cases}
\end{equation}
Here, $V_{\delta}^{*}(\ell,u,B_r)$ is the expected reward of the $\delta$-policy, given by (\ref{eq:DeltaOptimal}), in state $(\ell,u,B_r)$. 
The expected difference between the rewards of the optimal policy $V^{*}(0,1,B)$ and the rewards of the $\delta$-policy $V_{\delta}^{*}(0,1,B)$ is at most $\delta^2 B L_r L_h/(1-\gamma)$. 

There are two motivations for studying $\delta$-policies in our bandit setup. First, the risk of crossing the user's threshold is higher than the potential gains when $\delta$ is small. Since this risk cannot be quantified when $F$ is unknown, the platform can safely opt for a conservative policy when the residual uncertainty $u-\ell$ is small. 
Second, since the distribution $F$ is unknown, the platform needs to learn the conditional probabilities in (\ref{eq:conditionProb1}) in order to solve the MDP. The cost of learning these probabilities for a small UI $[\ell,u]$ is higher than the potential gains. To conclude, since in a continuous setting it is not possible to reduce the residual uncertainty to 0, the $\delta$-policies take into account that the risk of abandonment and cost of learning conditional probabilities  can outweigh the gains of reducing this uncertainty when the residual uncertainty is small.  

In the literature, a similar idea is used for designing   navigation schemes for robots using RL. In this case, robots are rewarded (i.e. the regret is zero) as they reach a distance of at most $\delta$ from the target  (see \cite{dhiman2018floyd} and references therein). In our setting, unlike the target point, the threshold $\theta_{n}$ is not known, and the parameter $\delta$ is defined with respect to the range of UI or the state space of the MDP. 
We now define the $\delta$-regret using the notion of $\delta$-policy.
\begin{definition} The $\delta$-regret of learning algorithm $\mathcal{A}$ is the expected difference between the total rewards received by the $\delta$-policy and the total rewards received by $\mathcal{A}$ from $N$ users, namely 
\begin{equation}\label{eq:DeltaRegret}
\begin{split}
        R_{\delta}(N)&= N V^*_{\delta}(0,1,B)-
    \mathbf{E}\bigg[ \sum_{n=1}^{N}\sum_{t=1}^{T_{n}}\gamma^{t-1} r(y_{n}(t))\mathbf{1}(y_n(t)\leq \theta_{n})\bigg].
\end{split}
\end{equation}
\end{definition}

\section{Upper Confidence Bound based Personalized Value Iteration}
We now  present the two variants of our algorithm Upper Confidence Bound based Personalized Value Iteration (UCB-PVI). These are UCB-PVI-SF for the \emph{soft feedback} model and 
UCB-PVI-HF for the \emph{hard feedback} model. 
We analyze their performance and establish their order optimality. 
For brevity, we provide a detailed presentation of  UCB-PVI-SF, and highlight the key changes to obtain UCB-PVI-HF. 

In UCB-PVI-SF, the platform divides the $N$ users into the exploration set $\mathcal{L}$ and the exploitation set $\mathcal{E}$. For users in $\mathcal{L}$, the platform performs an algorithm independent of the distribution $F$, the feedback probabilities $p_1$ and $p_2$, and the residual budget $B_r$. Unlike the classical MAB setting, since thresholds are never revealed, a noisy estimates of the thresholds  of the these users in $\mathcal{L}$ are utilized to estimate $F$, $p_1$ and $p_2$, and the measure of the noise in these estimates is dependent on the algorithm.  For users in $\mathcal{E}$, the platform performs an algorithm which uses the estimates of $F$, $p_1$ and $p_2$, and an estimation strategy for the un-observable budget $B_r$. 


\textit{Personalized policy for the Exploration set}\\
The platform uses   Algorithm \ref{alg:MainAlg} for each user $n\in \mathcal{L}$, which requires no   knowledge of the distribution $F$, positive feedback probability $p_1$, negative feedback probability $p_2$ and residual budget $B_r$. In this algorithm, at each iteration $j$, the platform performs a Linear Search Exploration (LSE) (see Algorithm \ref{alg:AdversaryStrategy}) on the  UI $[\ell,u]$ if the residual uncertainty $u-\ell$ is greater than an input parameter $\beta\in (0,1)$. 
If $u-\ell\leq\beta$, the platform chooses the conservative action $\ell$. By adjusting parameter $\beta$, the platform outweighs the risk of crossing the threshold in comparison to the gain from reducing the UI.

In Algorithm \ref{alg:AdversaryStrategy}, LSE reduces the expected residual uncertainty (or the range of UI)  by a factor of $O(1/\phi)$. 
This algorithm receives an UI $[\ell,u]$ as its input. It designs an action set $A$ in which the actions are chosen at an interval of length $I=(u-\ell)/\phi$ in the input UI. These actions are performed sequentially starting from a conservative action $\ell$ to an optimistic action $u$ in $A$. If a positive user's feedback is received, then the lower bound is updated to the current action. If a negative user's feedback is received, then the upper bound is updated to the current action, and the remaining actions in the set $A$ will not performed, since performing these action will reduce the patience of the user and will not reduce the range of UI any further.    In this algorithm, the parameter $d$ keeps an account of the discount factor based on the number of interactions between the platform and the user. Algorithm \ref{alg:AdversaryStrategy} reduces to an analogue of noisy binary search for $\phi=2$. Hence, we provide a more general analysis than this special case. 
Additionally, each user $n\in \mathcal{L}$ has its own personalized action depending on the feedback signal  at each step in LSE. The following theorem provides the regret bound 
of the algorithm for the set $\mathcal{L}$. 

\begin{theorem} \label{thm:exploration} Let $\tilde{\phi}=\phi/(\phi-1)$. 
For all $0\leq\delta<1$, $B$ such that $(\log_{\tilde{\phi}}(1/{\beta})+1)/(1-(p_1+p_2)(1-\min\{p_1,p_2\})^{\phi})\leq B$ and $p_1+p_2<1$, 
 the $\delta$-regret of the platform  using Algorithm 1 over the  set $\mathcal{L}$ 
 is
\begin{equation}
\begin{split}
  R_{\delta}(|\mathcal{L}|) 
  &\leq \frac{|\mathcal{L}|\log_{\tilde{\phi}}(1/\beta)}{1-(p_1+p_2)(1-\min\{p_1,p_2\})^{\phi}}\bigg(\frac{1-(1-p_2)^{\phi+1}}{p_2}\\
  &\qquad+L_r(\phi+1)+\frac{\beta L_r}{1-\gamma}\bigg)+|\mathcal{L}|(L_{r}(\phi+1)B+B).
\end{split}
\end{equation}
where $|.|$ is the cardinality of the set.
\end{theorem}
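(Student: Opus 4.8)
The plan is to bound the $\delta$-regret per user and multiply by $|\mathcal{L}|$: since the thresholds $\theta_n$ are i.i.d.\ and Algorithm~\ref{alg:MainAlg} treats each $n\in\mathcal{L}$ independently and identically (it uses only the parameters $\beta,\phi$, not $F,p_1,p_2,B_r$ or the other users), we have $R_\delta(|\mathcal{L}|)=|\mathcal{L}|\,(V^*_\delta(0,1,B)-\mathbf{E}[G])$, where $G$ is the discounted reward Algorithm~\ref{alg:MainAlg} collects from a generic user. Set $\rho:=(p_1+p_2)(1-\min\{p_1,p_2\})^{\phi}$, so that the hypothesis $p_1+p_2<1$ forces $\rho<1$ and the denominator $1-\rho$ in the statement is positive.

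The core is a one-round analysis of the LSE subroutine (Algorithm~\ref{alg:AdversaryStrategy}) run on an uncertainty interval $[\ell,u]$ with $u-\ell>\beta$. I would show: (i) it issues at most $\phi+1$ actions and depletes at most one unit of patience (the single possible observed negative feedback that terminates the round); (ii) uniformly over the location of $\theta_n$ in $[\ell,u]$, with probability at least $1-\rho$ the returned interval has width at most $(u-\ell)/\tilde{\phi}$; and (iii) the discount counter $d$ advances by at most the number of actions played, whose expectation is bounded by $\sum_{k=0}^{\phi}(1-p_2)^k=(1-(1-p_2)^{\phi+1})/p_2$, the expected number of above-threshold actions issued before a negative feedback is actually observed. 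Part (ii) is the crux: I would enumerate the ways a round can fail to make a $1/\tilde{\phi}$-factor of progress --- essentially, the sub-interval of $[\ell,u]$ containing $\theta_n$ is never bracketed tightly because all the relevant positive and/or negative feedback signals inside that round are missed --- and bound the total probability of these events. The exponent $\phi$ is the number of independent chances per round to observe a signal, and the $(p_1+p_2)$ prefactor comes from summing the (at most two families of) failure events; carrying out this enumeration uniformly over which sub-interval holds $\theta_n$ is the step I expect to be the main obstacle.

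Iterating the one-round lemma, the number $M$ of LSE calls a user undergoes before $u-\ell$ drops below $\beta$ is stochastically dominated by the number of independent $\mathrm{Bernoulli}(1-\rho)$ trials needed to accumulate $\log_{\tilde{\phi}}(1/\beta)$ successes (each success shrinks the width by $1/\tilde{\phi}$, and $\log_{\tilde{\phi}}(1/\beta)$ of them take it from $1$ below $\beta$), so $\mathbf{E}[M]\le \log_{\tilde{\phi}}(1/\beta)/(1-\rho)$; the hypothesis $(\log_{\tilde{\phi}}(1/\beta)+1)/(1-\rho)\le B$ then yields $\mathbf{E}[M]<B$, i.e.\ in expectation the patience is not exhausted before the interval is reduced. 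Next I would bound the regret charged to a single LSE call against the $\delta$-policy benchmark, phase by phase using the Bellman recursions of Section~\ref{sec:optimalPolicies}: Lipschitzness of $r$ (Assumption~1) makes the $\phi+1$ spread-out actions lose at most $L_r(\phi+1)$ relative to the optimal action on that interval; the missed negative feedbacks cost at most $(1-(1-p_2)^{\phi+1})/p_2$ extra discounted steps of benchmark reward; and once $u-\ell\le\beta$ both policies play the conservative action $\ell$, whose rewards differ by at most $L_r\beta$ per step, hence $L_r\beta/(1-\gamma)$ over the discounted tail. Multiplying this per-round regret, $(1-(1-p_2)^{\phi+1})/p_2+L_r(\phi+1)+L_r\beta/(1-\gamma)$, by $\mathbf{E}[M]$ produces the first term of the bound.

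It remains to absorb the low-probability event that a user's patience is exhausted before $u-\ell$ falls below $\beta$ into a crude worst-case term: such a user undergoes at most $B$ LSE calls, each losing at most $L_r(\phi+1)$, plus a further $B$ to bound the benchmark reward accrued during the at most $B$ depletion steps, giving $L_r(\phi+1)B+B$ per user. Adding the two contributions and multiplying by $|\mathcal{L}|$ yields the claimed bound. Apart from the per-round success probability $1-\rho$, the remaining delicate point is precisely this phase-by-phase coupling, since the $\delta$-policy and Algorithm~\ref{alg:MainAlg} follow genuinely different action sequences and their value functions must be compared via the MDP equations rather than directly.
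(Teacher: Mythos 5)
Your decomposition is essentially the one the paper uses: a per-round analysis of LSE (shrinkage by a factor $1-1/\phi$ with probability at least $1-\rho$ where $\rho=(p_1+p_2)(1-\min\{p_1,p_2\})^{\phi}$, and a geometric bound $(1-(1-p_2)^{\phi+1})/p_2$ on above-threshold actions per round), a stopping-time bound $\mathbf{E}[J]\le\log_{\tilde{\phi}}(1/\beta)/(1-\rho)$ on the number of rounds (the paper obtains this by applying Wald's identity to $\sum_i\log(1/X_i)$ with $X_i=I_i/I_{i-1}$, which is equivalent to your negative-binomial domination), the Lipschitz loss $L_r(\phi+1)$ per round from below-threshold actions, the $\beta L_r/(1-\gamma)$ tail once $u-\ell\le\beta$, and the crude $L_r(\phi+1)B+B$ term for abandoning users. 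However, the step you defer as ``the main obstacle'' --- showing that, uniformly over the location of $\theta_n$ in $[\ell,u]$, a round of LSE fails to make the $1-1/\phi$ shrinkage with probability at most $\rho$ --- is the technical heart of the theorem, and you give no argument for it. The paper handles it by observing that the interval fails to shrink only when the observed feedback is confined to the two endpoint actions $\ell(j-1)$ and $u(j-1)$ (its events $E_1$, $E_2$), computing $P(E_1)=p_1(1-p_1)^{z_1-1}(1-p_2)^{z_2}$ and $P(E_2)=(1-p_1)^{z_1}(1-p_2)^{z_2-1}p_2$ with $z_1+z_2=\phi+1$, and summing. Any honest enumeration of failure events must also dispose of the case where no feedback is observed at all (which likewise leaves the interval unchanged), and it is not obvious that the total failure probability stays below $(p_1+p_2)(1-\min\{p_1,p_2\})^{\phi}$ once that case is included; this is exactly why the step cannot be waved through, and leaving it open leaves the theorem unproved.

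There is also a concrete error in your item (i): you assert that one LSE round ``depletes at most one unit of patience (the single possible observed negative feedback that terminates the round).'' Patience is depleted by every action above the threshold, whether or not its feedback is observed; LSE halts only on an \emph{observed} negative feedback, so in the soft feedback model a single round can consume up to $\phi+1$ units of budget. Your own item (iii) --- the bound $(1-(1-p_2)^{\phi+1})/p_2$ on the expected number of above-threshold actions per round --- is the correct accounting, and it is what the paper multiplies by $\mathbf{E}[J]$ to argue that the expected budget consumption stays below $B$. Consequently your inference ``$\mathbf{E}[M]<B$, i.e.\ in expectation the patience is not exhausted'' does not follow: the budget-feasibility check must be performed on the product of the expected number of rounds and the expected number of above-threshold actions per round, not on the number of rounds alone.
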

\begin{proof}
The proof of the theorem is presented in Appendix \ref{sec:thm1}
\end{proof}
\small
\begin{algorithm}[t]
\begin{algorithmic}
\STATE Input:  $\ell=0,u=1$, $\beta$, $j=0$, $\gamma$, $d=1$
\WHILE{user $n$ has not abandoned}
\IF{$u-\ell >\beta$}
\STATE $(\ell,u,d)=\mbox{LSE}(\ell,u,d,\gamma)$
\STATE $j=j+1$
\ELSE
\STATE Perform $\ell$.
\ENDIF
\ENDWHILE 
\STATE Output: Sum of the discounted rewards.
\caption{Personalized algorithm for all $n\in\mathcal{L}$}
\label{alg:MainAlg}
\end{algorithmic}
\end{algorithm}

\begin{algorithm}[t]
\begin{algorithmic}
\STATE Input:  $(\ell,u,d,\gamma)$;\\
\STATE Action set $A=\{\ell,\ell+I,\ldots,\ell+(\phi-1)I,u\}$, where $I=(u-\ell)/\phi$. 
\FOR{$a\in A$}
\STATE Perform $a$. 
\STATE Receive $d\cdot r(a)\mathbf{1}(a\leq\theta_{n})$, and set $d=\gamma\cdot d$
\IF{a positive feedback is available} 
    \STATE Update $\ell=a$.
\ENDIF 
\IF{a negative feedback is available} 
    \STATE Update $u=a$. Break;
\ENDIF

\ENDFOR
\STATE Output: $(\ell,u,d)$.
\caption{Linear search Exploration (LSE) on user $n$}
\label{alg:AdversaryStrategy}
\end{algorithmic}
\end{algorithm}
\normalsize
In Theorem \ref{thm:exploration}, the $\delta$-regret over the set $\mathcal{L}$ scales linearly with $|\mathcal{L}|$ and $B$. 
Thus, $|\mathcal{L}|$ should be carefully chosen to minimize the cumulative regret over all the $N$ users, which can be in the order of millions in many practical systems. 
Additionally, the regret has an $O(\log{1/{\beta}})$ dependency on the residual uncertainty $\beta$. 
Note that $\beta$ seems to play same role as $\delta$ in $\delta$-policy, however, it can be selected independent of $\delta$. 
As smaller uncertainty $\beta$ is desirable, the lower bound on $\beta$ is $\Omega(\tilde{\phi}^{-B})$ according to the assumption in Theorem \ref{thm:exploration}. 

The policy of each user is independent of the others in the set $\mathcal{L}$, therefore the platform can parallelize the design of the personalized policy for all these users.  When each user $n\in\mathcal{L}$ has either abandoned the platform or reached the steady state where $u-\ell\leq\beta$ in the Algorithm \ref{alg:MainAlg}, the platform utilizes the information gained from these users to estimate $F$, $p_1$ and $p_2$. Since the threshold $\theta_n$ is never revealed, if the user $n\in \mathcal{L}$ did not abandon the platform i.e. $u-\ell\leq\beta$, the noisy observation in the form of UI $[\ell,u]$ is available to estimate  $\theta_n$, $F$, $p_1$ and $p_2$. However, if the user has abandoned the platform, then there is no gain in reliable information about the threshold from this user. Unlike the classical MAB setting where each action returns an observation from an underlying distribution which can be utilized in the next step, in our setting, there may be users which do not provide any reliable information about their thresholds, and the count of these users is dependent on the feedback probabilities $p_1$ and $p_2$, and the platform's strategy, namely Algorithm \ref{alg:MainAlg}. 

Now, let $K$ be a random variable denoting the number of users in $\mathcal{L}$ that did not abandon the platform, namely
\begin{equation}
    K=\sum_{n\in \mathcal{L}}\textbf{1}(u_n-\ell_n\leq \beta),
\end{equation}
where $[\ell_n,u_n]$ is the final UI of user $n\in \mathcal{L}$. For all $x\in[0,1]$, the empirical estimate $\Hat{F}^{K}(x)$ of $F(x)$ is 
\begin{equation}\label{eq:empEstimate}
    \hat{F}^{K}(x)=\sum_{n\in \mathcal{L}}{\textbf{1}(u_n-\ell_n\leq \beta)\textbf{1}(\ell_n\leq x)}/{K}.
\end{equation}
The empirical estimate $\hat{p}_1$ of $p_1$ in UCB-PVI-SF is 
\begin{equation}\label{eq:PSFp}
   \hat{p}_1=\frac{\sum_{n\in \mathcal{L}}\sum_{t=1}^{T(n)}\textbf{1}(u_n-\ell_n\leq \beta)\mathbf{1}(S(y_n(t))=1)}{\sum_{n\in \mathcal{L}}\sum_{t^\prime=1}^{T(n)}\textbf{1}(u_{n^\prime}-\ell_{n^\prime}\leq \beta)\mathbf{1}(y_{n^\prime}(t^\prime)\leq \ell_{n^\prime}) },
\end{equation}
where $T(n)$ is the number of interaction between user $n \in \mathcal{L}$ and Algorithm \ref{alg:MainAlg} before $u_n-\ell_n\leq\beta$, and $S(y_{n}(t))$ denotes the feedback received from the user $n$ for the action $y_{n}(t)$. Hence,  $S(y_{n}(t))=1$ denotes that a positive feedback is received.

Similar to \eqref{eq:PSFp}, the empirical estimate $\hat{p}_2$ of $p_2$ in UCB-PVI-SF is 
\begin{equation}
    \hat{p}_2=\frac{\sum_{n\in \mathcal{L}}\sum_{t=1}^{T(n)}\textbf{1}(u_n-\ell_n\leq \beta)\mathbf{1}(S(y_{n}(t))=0)}{\sum_{n\in \mathcal{L}}\sum_{t^\prime=1}^{T(n)}\textbf{1}(u_{n^\prime}-\ell_{n^\prime}\leq \beta)\mathbf{1}(u_{n^{\prime}}\leq y_{n^\prime}(t^\prime))}.
\end{equation}


For UCB-PVI-HF, Algorithm \ref{alg:MainAlg} for the set $\mathcal{L}$ remains the same. In LSE,  either the lower bound or the upper bound are updated at every action $a\in A$ based on the feedback.
Similar to Theorem \ref{thm:exploration}, the $\delta$-regret over exploration set $\mathcal{L}$ of UCB-PVI-HF is also $O(|\mathcal{L}|B)$, presented in Theorem \ref{thm:explorationHF}. 

\textit{Personalized policy for the Exploitation set}\\
Unlike the exploration set, personalized policies for the users in the exploitation set are based on the estimates of $F$, $p_1$, $p_2$ and an estimation scheme of the residual budget $B_r$.

Using the estimate $ \hat{F}^{K}(x)$, the Upper Confidence Bound (UCB) and Lower Confidence Bound (LCB) of the conditional probability $P(A_{1}|\ell\leq \theta_{n}\leq u)=P(y|\ell,u)$ for the $\delta$-policy are 
\begin{equation}\label{eq:UCB}\begin{split}
   P_{U}^{K}(y|\ell,u)&=\frac{\hat{F}^{K}(u)-\hat{F}^{K}(y)}{\max\{\hat{F}^{K}(u)-\hat{F}^{K}(\ell),L_c(u-\ell)\}}
 +\frac{2(\eta_{K}+2\beta L_h)}{L_{c}\delta},
\end{split}\end{equation}
\begin{equation}\label{eq:LCB}\begin{split}
 P_{L}^{K}(y|\ell,u)&=\frac{\hat{F}^{K}(u)-\hat{F}^{K}(y)}{\max\{\hat{F}^{K}(u)-\hat{F}^{K}(\ell),L_c(u-\ell)\}}
 -\frac{2(\eta_{K}+2\beta L_h)}{L_{c}\delta},
\end{split}\end{equation}
  where $\eta_{K}=\sqrt{{18\log(16/\epsilon)}/{K}}$ and $\epsilon$ is a design parameter in $(0,1)$. 
 Likewise, the UCB and LCB of $P(A_{2}|\ell\leq \theta_{n}\leq u)=1-P(y|\ell,u)$ for the $\delta$-policy are
\begin{equation}\begin{split}
    &P^{K}_{U}(A_{2}|\ell\leq \theta_{n}\leq u)=1- P_{L}^{K}(y|\ell,u), \\  &P^{K}_{L}(A_{2}|\ell\leq \theta_{n}\leq u)=1- P_{U}^{K}(y|\ell,u).
    \end{split}
\end{equation}

 In UCB-PVI-SF, the platform uses the following strategy to design the personalized policies for the users in the set $\mathcal{E}$. For all $n\in\mathcal{E}$, the algorithm maintains a UI based on the feedback and an estimate $\hat{B}_r$ of remaining budget $B_r$, since $B_r$ is unknown to the platform. At start of interaction $t=0$ with user $n$, UI is $[0,1]$ and $\hat{B}_r=B$. At round $t$ of interaction, if the range of the UI $[\ell,u]$  is less than $\delta$, then the platform chooses the conservative action $\ell$ since the objective is to minimize $\delta$-regret. If the range of the UI $[\ell,u]$ is greater than $\delta$, then similar to (\ref{eq:negativeSoft}) and (\ref{eq:SoftUpper}), the action $\hat{y}(\ell,u,\hat B_{r})$ of the platform in the state $(\ell,u,\hat{B}_r)$ and its estimate of the maximum expected reward $ \hat{V}^*(\ell,u,\hat B_{r})$ are 
\begin{equation}\begin{split}\label{eq:UCBoptimalAction}
    &\hat{y}(\ell,u,\hat B_{r})=\mbox{argmax}_{\ell\leq y\leq u} \hat{V}^{K}_{U}(\ell,u,\hat B_{r},y), \\
    & \hat{V}^*(\ell,u,\hat B_{r})=\mbox{max}_{\ell\leq y\leq u} \hat{V}^{K}_{U}(\ell,u,\hat B_{r},y),
\end{split}
\end{equation}
 where $\hat{V}_{U}^{K}(\ell, u, \hat B_{r},y)$ is 
 the estimate of maximum expected rewards if action $y$ is performed in the state $(\ell, u,\hat B_{r})$, namely
 \begin{equation}\label{eq:UCBexpectedReward}\begin{split}
       \hat{V}_{U}^{K}(\ell,u,\hat B_{r},y) &= P_{U}^{K}(A_{1}|\ell,u)\big(r(y)+\gamma(1-\hat{p}_1)\hat{V}^*(\ell,u,\hat B_{r})\\
       &+\gamma\hat{p}_1\hat{V}^*(y,u,\hat B_{r}) \big)
       + P_{L}^{K}(A_{2}|\ell,u)\gamma\big(\hat{p}_2\hat{V}^*(\ell, y,\hat B_{r}-1)\\
       &+(1-\hat{p}_2)\hat{V}^*(\ell,u,\hat B_{r}-1) \big).\\
\end{split}
\end{equation}
Since $B_r$ of the user is un-observable to the platform, the following strategy is used to estimate $B_r$ at each interaction.  In UCB-PVI-SF, if a positive feedback is received from the user $n$, then 
$\ell=\hat{y}(\ell,u,\hat B_{r})$ and $\hat B_r=\hat B_r$. If a negative feedback is  received,  then $u=\hat{y}(\ell,u,\hat B_{r})$ and $\hat B_r=\hat B_r-1$. If the feedback is not received, then  $\ell$ and $u$ remain the same and 
  \begin{equation}\label{eq:BrEstimate}
    \hat B_r =
    \begin{cases*}
      \hat B_r & w.p.  $\frac{P^{K}_{U}(A_1|\ell,u)(1-\hat{p}_1)}{P^{K}_{U}(A_1|\ell,u)(1-\hat{p}_1)+P^{K}_{L}(A_2|\ell,u)(1-\hat{p}_2)}$ \\
      \hat B_r-1        &  w.p. $\frac{ P^{K}_{L}(A_2|\ell,u)(1-\hat{p}_2)}{P^{K}_{U}(A_1|\ell,u)(1-\hat{p}_1)+P^{K}_{L}(A_2|\ell,u)(1-\hat{p}_2)}$.
    \end{cases*}
  \end{equation}

The $\hat{V}_{U}^{K}(\ell,u,\hat B_{r},y)$ is an optimistic value (or UCB) of the expected reward in the state $(\ell,u,\hat B_{r})$ by performing $y$. 
Since $\ell\leq y\leq u$, we have that $V^{*}(\ell,u,\hat B_{r})\geq V^*(\ell,y,\hat B_r-1)$ because the UI $[\ell,u]$ has higher expected reward in comparison to the UI $[\ell,y]$, and the higher patience budget $\hat B_{r}$ allows a longer user-platform interaction.  This also implies $V^*(\ell,u,\hat{B}_r)\geq V^*(\ell,u,\hat{B}_r-1)$. Likewise, we have that $V^{*}(y,u,\hat B_{r})\geq V^*(\ell,u,\hat B_r-1)$ because the UI $[y,u]$ has a higher expected reward in comparison to the UI $[\ell,u]$, and the higher patience budget $\hat B_{r}$ allows a longer user-platform interaction. This also implies  $V^{*}(y,u,\hat B_{r})\geq V^*(\ell,y,\hat B_r-1)$. Thus, the event $A_{1}$ has higher expected rewards in comparison to $A_{2}$, and the sum of the conditional probabilities of these events is unity.
Therefore, $P_{U}^{K}(A_{1}|\ell,u)$ and $P_{L}^{K}(A_{2}|\ell,u)$ are used to compute an optimistic estimate $\hat{V}_{U}^{K}(\ell,u,\hat B_{r},y)$.  The following theorem provides the regret bound of the algorithm UCB-PVI-SF for
the set $\mathcal{E}$. 
\begin{theorem}\label{thm:exploitation} Let
\begin{equation}
   \Delta=\frac{(1-(1-p_2)^{\phi+1})\log_{\tilde{\phi}}(1/\beta)}{B p_2(1-(p_1+p_2)(1-\min\{p_1,p_2\})^{\phi})},
\end{equation}
$\tilde\lambda = \sqrt{\log (1/\lambda)/(2\Delta^2|\mathcal{L}|)}$, 
 $c={(1-\Delta(1+\tilde\lambda))}/{(1-\log_{\phi}(1/\beta)/B)}$ and $\tilde{K}=c |\mathcal{L}|$. Let the assumptions in Theorem \ref{thm:exploration} hold. 
For $\tilde{K}\geq 162\log(16/\epsilon)\max\{1/(1-p_1-p_2)^2,B\}$ and $0<\lambda< 1$  such that {$\tilde{\lambda}<1$},
the $\delta$-regret of the platform in UCB-PVI-SF over the  set $\mathcal{E}$ is
\begin{equation}
\begin{split}
    R_{\delta}(|\mathcal{E}|)&\leq |\mathcal{E}| 
    \frac{8(\eta_{\tilde K}+2\beta L_h)}{L_{c}\delta (1-\gamma)^2}\bigg({1+\gamma}+{\gamma \eta_{\tilde{K}}} B + \sqrt{B}+\frac{12 B}{(1-p_1-p_2)^2}\frac{(\eta_{\tilde K}+2\beta L_h)}{L_c\cdot \delta } \bigg).
\end{split}
\end{equation}
with probability at least $1-3\lambda\epsilon$.
\end{theorem}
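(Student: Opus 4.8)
The plan is to first carve out a high-probability \emph{clean event} on which the statistics $\hat F^{\tilde K}$, $\hat p_1$, $\hat p_2$ returned by the exploration phase are accurate, and then, conditioned on this event, run an optimistic value-iteration regret decomposition for each user in $\mathcal E$ and sum over $\mathcal E$. The clean event is the intersection of three sub-events, each failing with probability at most $\lambda\epsilon$, which yields the final confidence $1-3\lambda\epsilon$.

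\emph{Step 1 (enough informative exploration users).} The variables $\mathbf{1}(u_n-\ell_n\le\beta)$, $n\in\mathcal L$, are i.i.d.\ Bernoulli, and the interaction-length bookkeeping behind Theorem~\ref{thm:exploration} — where the expected number of LSE rounds of Algorithm~\ref{alg:MainAlg} before a user abandons or reaches residual uncertainty $\beta$ is $\log_{\tilde\phi}(1/\beta)/(1-(p_1+p_2)(1-\min\{p_1,p_2\})^{\phi})$ — gives, via a Markov bound over the budget $B$, a lower bound on the per-user probability of being informative that is controlled by $\Delta$. A Hoeffding inequality with deviation parameter $\tilde\lambda=\sqrt{\log(1/\lambda)/(2\Delta^2|\mathcal L|)}$ then shows that on the clean event the sample counts feeding \eqref{eq:UCB}--\eqref{eq:LCB} and $\hat p_1,\hat p_2$ are at least the deterministic quantity $\tilde K=c|\mathcal L|$, which I would substitute for $K$ everywhere from here on.

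\emph{Step 2 (accuracy of $\hat F$, $\hat p_1$, $\hat p_2$).} For an informative user the final lower bound obeys $\theta_n-\beta\le\ell_n\le\theta_n$, so by Assumption~2 the mean $\mathbf E[\hat F^{K}(x)]$ is within $2\beta L_h$ of $F(x)$; a uniform (DKW-type / grid-union) concentration bound adds at most $\eta_{\tilde K}=\sqrt{18\log(16/\epsilon)/\tilde K}$, giving $\sup_x|\hat F^{K}(x)-F(x)|\le\eta_{\tilde K}+2\beta L_h$. Dividing by the lower bound $L_c(u-\ell)\ge L_c\delta$ on $F(u)-F(\ell)$, valid because the exploitation policy invokes the MDP action only when $u-\ell>\delta$, reproduces exactly the confidence halfwidth $2(\eta_K+2\beta L_h)/(L_c\delta)$ of \eqref{eq:UCB}--\eqref{eq:LCB}, so on the clean event $P^{K}_L(A_1|\ell,u)\le P(A_1|\ell\le\theta_n\le u)\le P^{K}_U(A_1|\ell,u)$ and symmetrically for $A_2$. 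Since each informative user contributes at least $\log_{\tilde\phi}(1/\beta)$ actions with $y\le\ell_n$ and at least one action with $y\ge u_n$, both denominators defining $\hat p_1,\hat p_2$ are $\Omega(\tilde K)$; the hypothesis $\tilde K\ge162\log(16/\epsilon)\max\{(1-p_1-p_2)^{-2},B\}$ then forces $\eta_{\tilde K}\le\min\{(1-p_1-p_2)/3,\,1/(3\sqrt B)\}$, which both keeps the denominator of the budget update \eqref{eq:BrEstimate} bounded away from zero and controls $|\hat p_i-p_i|$ at scale $\eta_{\tilde K}$. A union bound over the $\hat F$ and the $\hat p_i$ concentration events contributes the remaining $2\lambda\epsilon$.

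\emph{Step 3 (optimism and per-user regret decomposition).} On the clean event, backward induction on $\hat B_r$ and on the interval width $u-\ell$ — using the monotonicities $V^*(\ell,u,B_r)\ge V^*(\ell,y,B_r-1)$ and $V^*(y,u,B_r)\ge V^*(\ell,u,B_r-1)$ noted after \eqref{eq:BrEstimate}, together with the one-sided bounds of Step~2 applied to $P_U^K(A_1|\ell,u)$ and $P_L^K(A_2|\ell,u)$ in \eqref{eq:UCBexpectedReward} — shows $\hat V^*(\ell,u,\hat B_r)\ge V^*_\delta(\ell,u,\hat B_r)$, i.e. the planned value is optimistic. Writing $\pi_n$ for the policy actually executed on user $n\in\mathcal E$, optimism gives $V^*_\delta(0,1,B)-V^{\pi_n}(0,1,B)\le\hat V^*(0,1,B)-V^{\pi_n}(0,1,B)$, and the value-difference (simulation) lemma expands the right side as a discounted sum of one-step model errors between the estimated Bellman update of \eqref{eq:UCBexpectedReward} and the true one. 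Each one-step error is at most (confidence halfwidth) $\times$ (range of $\hat V^*$): the halfwidth is $O((\eta_{\tilde K}+2\beta L_h)/(L_c\delta))$, the range is $O(1/(1-\gamma))$ from discounting and picks up extra $B$-factors from the at most $B$ budget-decrementing transitions; the error from $\hat p_i\ne p_i$ on those at most $B$ depletion steps gives the $\gamma\eta_{\tilde K}B$ term; coupling the surrogate $\hat B_r$ to a process carrying the true conditional law of $B_r$ (the two drifts differ by $O(\eta_{\tilde K}/(1-p_1-p_2))$ per no-feedback step) and applying Azuma--Hoeffding over the at most $B$ decrements yields the $\sqrt B$ term; and the product of the budget depth, the $1/(1-p_1-p_2)^2$ from \eqref{eq:BrEstimate}, and the halfwidth yields the $12B(\eta_{\tilde K}+2\beta L_h)/(L_c\delta)$ term. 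Summing the discounted series ($\sum_t\gamma^{t-1}\le1/(1-\gamma)$) produces the remaining $1/(1-\gamma)$, multiplying by $|\mathcal E|$ gives the stated bound, and $R_\delta(|\mathcal E|)$ as in \eqref{eq:DeltaRegret} follows.

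\emph{Main obstacle.} The genuinely new difficulty, relative to standard optimistic value iteration of the UCRL2/UCBVI type, is that the budget coordinate $B_r$ of the MDP state is never observed, so the algorithm plans with the self-generated surrogate $\hat B_r$ of \eqref{eq:BrEstimate}. The hard part will be showing that (i) optimism survives planning in the wrong budget, and (ii) the mismatch $|\hat B_r-B_r|$ inflates the regret only through the $O(\sqrt B)$ and $O(B(\eta_{\tilde K}+\beta L_h)/\delta)$ terms rather than corrupting the decomposition; the coupling argument for (ii), and verifying that the $\hat B_r$-update is a sufficiently unbiased surrogate once $\hat p_1,\hat p_2$ are only approximately correct, is where most of the work lies.
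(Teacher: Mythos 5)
Your plan reproduces the paper's proof architecture almost exactly: the same three-part clean event (Theorems \ref{thm:thresholdReveal}, \ref{thm:UCBandLCB} and \ref{thm:feedbackEstimate}) giving the $1-3\lambda\epsilon$ confidence, the same split of the per-user regret into a planning error $G^*(B)-\hat G(B)$ and an execution error $\hat G(B)-G(B)$, the same recursion $E^{+}(B_r)\le E^{+}(B_r-x)+x/(1-\gamma)$ unrolled to depth $B$, and the same random-walk coupling of $B_r$ and $\hat B_r$ producing the $\sqrt{B}$ fluctuation term and the $12B(\eta_{\tilde K}+2\beta L_h)/((1-p_1-p_2)^2 L_c\delta)$ drift term.

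The one step that would fail as written is the claim of exact optimism, $\hat V^{*}(\ell,u,\hat B_r)\ge V^{*}_{\delta}(\ell,u,\hat B_r)$, and the ensuing inequality $V^{*}_{\delta}-V^{\pi_n}\le \hat V^{*}-V^{\pi_n}$. Only the transition kernel over $\{A_1,A_2\}$ is replaced by one-sided confidence bounds in \eqref{eq:UCBexpectedReward}; the feedback probabilities $\hat p_1,\hat p_2$ enter as two-sided point estimates, and in the $A_1$ branch an underestimate $\hat p_1<p_1$ under-weights the better continuation $\hat V^{*}(y,u,\cdot)$, so the backward induction does not close and $\hat V^{*}$ can undershoot $V^{*}_{\delta}$ (the budget surrogate drifting below $B_r$ causes the same problem). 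The paper accordingly does not prove optimism; it bounds the signed deficit $V^{*}_{\delta}(\ell,u,B')-\hat V^{*}(\ell,u,\hat B')$ from above by the same error expression used for the execution term --- this is precisely where your $\gamma\eta_{\tilde K}B$ contribution lives --- and adds the two bounds, which is why the final constant is $8=4+4$ rather than $4$. Your accounting of the individual error sources is otherwise right, so the repair is only to replace exact optimism by this one-sided approximate-optimism bound and carry both halves of the decomposition.
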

\begin{proof}
The proof of the theorem is presented in Appendix \ref{sec:thm2}.
\end{proof}
 Similar to $\mathcal{L}$, each user in $\mathcal{E}$ has its own personalized policy depending on the feedback signal, and this policy can be parallelized for all users in $\mathcal{E}$.
 
 For the hard feedback model, the strategy  in (\ref{eq:UCBoptimalAction}) and (\ref{eq:UCBexpectedReward}) can be adapted using  (\ref{eq:optimalAction}) and (\ref{eq:expectedReward}) for UCB-PVI-HF. 
In UCB-PVI-HF, the action $\hat{y}(\ell,u,\hat B_{r})$ of the platform in the state $(\ell,u,\hat{B}_r)$ and its estimate of the maximum expected reward $ \hat{V}^*(\ell,u,\hat B_{r})$ are  
\begin{equation}
    \hat{y}(\ell,u,B_{r})=\mbox{argmax}_{\ell\leq y\leq u} \hat{V}^{K}_{U}(\ell,u,B_{r},y),
\end{equation}
\begin{equation}
    \hat{V}^*(\ell,u,B_{r})=\mbox{max}_{\ell\leq y\leq u} \hat{V}^{K}_{U}(\ell,u,B_{r},y),
\end{equation}
where 
\begin{equation}\begin{split}
       \hat{V}_{U}^{K}(\ell,u,B_{r},y) &= P_{U}^{K}(A_{1}|\ell,u)\big(r(y)+\gamma\hat{V}^*(y,u,B_{r})\big)\\
       &+ P_{L}^{K}(A_{2}|\ell,u)\gamma\big(\hat{V}^*(\ell,y,B_{r}-1)\big).\\
\end{split}
\end{equation}
Since the feedback is received for each action,  $B_r$ is observable in the hard feedback model, namely $\hat{B}_r=B_r$. The following theorem show the $\delta$-regret over the set $\mathcal{E}$ of UCB-PVI-HF. 
\begin{theorem}\label{thm:HFexploitation} For all $B$ such that $\log_{\phi}(1/\beta)+1\leq B$ and $\phi>1$, the $\delta$-regret of the platform in UCB-PVI-HF over the set $\mathcal{E}$ is
\begin{equation}
\begin{split}
    R_{\delta}(\mathcal{E})&\leq |\mathcal{E}|\frac{8(\eta_{|\mathcal{L}|}+2\beta L_h)}{L_{c}\delta }\bigg(\frac{1+\gamma}{(1-\gamma)^2}\bigg), 
\end{split}
\end{equation}
with probability at least $1-\epsilon$. 
\end{theorem}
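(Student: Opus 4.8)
The plan is to run the optimism-in-the-face-of-uncertainty analysis for value iteration, specialized to the continuous $(\ell,u)$ state space and to the fact that in the hard feedback model the residual budget is observed exactly. Write $\tilde V(\ell,u,B_r)$ for the true expected discounted reward collected by UCB-PVI-HF from state $(\ell,u,B_r)$ onward, so that the per-user quantity to control is $V^*_\delta(0,1,B)-\tilde V(0,1,B)$. The first step is to define a high-probability confidence event. Because in the hard feedback model each call to LSE deterministically shrinks a user's UI by a factor $\phi$ and costs at most one unit of patience, the hypothesis $\log_\phi(1/\beta)+1\le B$ guarantees that no user in $\mathcal L$ abandons; hence $K=|\mathcal L|$ and every final lower bound satisfies $\ell_n\le\theta_n\le\ell_n+\beta$. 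Comparing $\hat F^{K}$ with the (unobserved) empirical distribution of the i.i.d.\ samples $\{\theta_n\}_{n\in\mathcal L}$, the bias $\hat F^{K}(x)-\tfrac1K\sum_n\mathbf 1(\theta_n\le x)$ is nonnegative and at most $\tfrac1K\sum_n\mathbf 1(x<\theta_n\le x+\beta)$, whose mean is $\le\beta L_h$ by Assumption~2; combining this with a Dvoretzky--Kiefer--Wolfowitz bound for the empirical distribution of $\{\theta_n\}$ and a concentration bound for the bias term shows that with probability at least $1-\epsilon$ one has $\sup_x|\hat F^{K}(x)-F(x)|\le\eta_{|\mathcal L|}+2\beta L_h$. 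Propagating this through the conditional-probability ratio, and using $F(u)-F(\ell)\ge L_c(u-\ell)\ge L_c\delta$ (Assumption~2, valid since optimal actions are played only when $u-\ell>\delta$), gives that on this event $P(A_1\mid \ell\le\theta_n\le u)\in[P^{K}_L(y\mid\ell,u),\,P^{K}_U(y\mid\ell,u)]$ for all admissible $(\ell,u,y)$ -- which is exactly what the correction term $2(\eta_K+2\beta L_h)/(L_c\delta)$ in the definitions of $P^{K}_U,P^{K}_L$ was chosen for.

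Conditioning on this event, the next step is optimism: I would show by induction on $B_r$, using for each fixed $B_r$ the $\gamma$-contraction of the Bellman operator so that the value iteration defining $\hat V^*$ converges, that $\hat V^*(\ell,u,B_r)\ge V^*_\delta(\ell,u,B_r)$ at every state. The base case is $u-\ell\le\delta$, where both sides equal $r(\ell)/(1-\gamma)$ (the safe action $\ell$ is played forever). For the inductive step, the monotonicity relations recorded just before Theorem~\ref{thm:exploitation} -- namely $V^*(y,u,B_r)\ge V^*(\ell,y,B_r-1)$ and $V^*(\ell,u,B_r)\ge V^*(\ell,u,B_r-1)$, so that the $A_1$ branch dominates the $A_2$ branch -- imply that replacing $P(A_1\mid\cdot)$ by its upper bound and $P(A_2\mid\cdot)$ by its lower bound (these still summing to one) only inflates the right-hand side of the Bellman recursion; together with the inductive hypothesis inside the continuation values and with $\max_y$ of the estimate dominating the true maximum, this yields $\hat V^*\ge V^*_\delta$ at the current state.

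With optimism in hand, $R_\delta(\mathcal E)\le|\mathcal E|(\hat V^*(0,1,B)-\tilde V(0,1,B))$ on the good event. Expanding $\hat V^*$ through its Bellman recursion at the action $\hat y$ actually chosen and $\tilde V$ through the true dynamics under the same action, the $r(\hat y)$ terms cancel and $\hat V^*-\tilde V$ at a state with $u-\ell>\delta$ decomposes into (i) a confidence-width term, namely $(P^{K}_U(A_1\mid\cdot)-P(A_1\mid\cdot))$ times the gap between the two branch values, of order $\tfrac{\eta_{|\mathcal L|}+2\beta L_h}{L_c\delta}\cdot\tfrac1{1-\gamma}$; and (ii) a discounted continuation term $\gamma\,\mathbf E[(\hat V^*-\tilde V)(\text{next state})]$ whose coefficient $P^{K}_U(A_1\mid\cdot)$ may slightly exceed one. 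Unrolling this recursion along the trajectory -- which terminates once the UI drops below $\delta$ or the budget is exhausted, and has effective horizon $1/(1-\gamma)$ because of discounting -- the per-step errors sum as a geometric series (ratio $\gamma$, up to a lower-order correction from $P^{K}_U$ exceeding one) to $\tfrac{8(\eta_{|\mathcal L|}+2\beta L_h)}{L_c\delta}\cdot\tfrac{1+\gamma}{(1-\gamma)^2}$ per user; multiplying by $|\mathcal E|$ yields the claim. Note that because $B_r$ is observed in the hard feedback model ($\hat B_r=B_r$), there is no budget-estimation error, which is precisely why this bound -- unlike that of Theorem~\ref{thm:exploitation} -- contains no extra $\sqrt B$ or $B/(1-p_1-p_2)^2$ terms.

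The main obstacle, I expect, is the combination of the optimism step with the tail of the recursion: making the optimism argument rigorous over the continuous $(\ell,u)$ state space (well-posedness of the value iteration and of the inner maximization over $y$, and the interleaving of the $B_r$-induction with the Bellman contraction), and then controlling the accumulation of per-step confidence errors so that the coefficient $P^{K}_U(A_1\mid\cdot)$, which can exceed one, does not make the unrolled series diverge -- this is where the smallness of $\eta_{|\mathcal L|}$ relative to $1-\gamma$, guaranteed by $|\mathcal L|$ being large, is used.
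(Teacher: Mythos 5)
Your proposal is correct in substance and reaches the stated bound, but it takes a genuinely different route from the paper for the main regret decomposition. The paper splits the per-user gap as $\bigl(V^*_{\delta}(0,1,B)-\hat V^*(0,1,B)\bigr)+\bigl(\hat V^*(0,1,B)-G(B)\bigr)$ and bounds \emph{each} term by $\tfrac{4(\eta_{|\mathcal L|}+2\beta L_h)}{L_c\delta}\cdot\tfrac{1+\gamma}{(1-\gamma)^2}$ via a self-referential sup inequality of the form $E^{+}(B')\le \tfrac{4(\eta+2\beta L_h)}{L_c\delta}\tfrac{1+\gamma}{1-\gamma}+\gamma E^{+}(B')$, taken over all states at a fixed budget level; summing the two gives the factor $8$. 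You instead prove exact optimism $\hat V^*\ge V^*_{\delta}$ (rendering the paper's first term nonpositive) and then control only $\hat V^*-\tilde V$ by a simulation-lemma unrolling along the trajectory — the standard UCB-VI/UCRL template. Your optimism step is legitimate: the branch-dominance inequality it needs ($r(y)+\gamma V^*_\delta(y,u,B_r)\ge \gamma V^*_\delta(\ell,y,B_r-1)$) is exactly the monotonicity the paper asserts before Theorem~\ref{thm:exploitation}, and the interleaved induction (outer on $B_r$, inner via the $\gamma$-contraction at fixed $B_r$) is the right way to make it rigorous on the continuous state space. If carried through, your route actually yields the constant $4$ rather than $8$, so claiming $8$ is safe. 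The confidence-event construction (using $K=|\mathcal L|$ under $\log_\phi(1/\beta)+1\le B$, the DKW bound, and the $\beta L_h$ bias from substituting $\ell_n$ for $\theta_n$) matches the paper's Theorems~\ref{thm:UCBandLCB} and~\ref{thm:UCBLCB_HF}. One point to note: the difficulty you flag about $P^{K}_U(A_1\mid\cdot)$ possibly exceeding one (threatening both the contraction and the geometric summation) is real, but it is present in the paper's argument as well, where it is silently dispatched by asserting that $P^{K}_U$ "lies in $[0,1]$"; your version is, if anything, more honest about where a smallness condition on $\eta_{|\mathcal L|}$ relative to $1-\gamma$ (or a truncation of $P^{K}_U$ at one) is needed.
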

\begin{proof}
The proof of the theorem is presented in Appendix \ref{sec:thm3}.
\end{proof}
Let us briefly compare the results in Theorems \ref{thm:exploitation} and \ref{thm:HFexploitation}. In Theorem \ref{thm:exploitation}, there is an additional factor of $O(|\mathcal{E}|\eta_{\tilde{K}}(\sqrt{B}+B\eta_{\tilde{K}}))$. This factor quantifies the regret contributed by the estimation strategy of the residual budget. Since the budget is observable in UCB-PVI-HF, we do not have any additional regret due to  estimation of $B_r$. 

Combining the regret over the sets  $|\mathcal{L}|$ and $|\mathcal{E}|$
using Theorem \ref{thm:exploration} and \ref{thm:exploitation},  the $\delta$-regret of UCB-PVI-SF is given in the following theorem.
\begin{theorem}\label{thm:deltaRegret} For {$|\mathcal{L}|= (\log(16/\epsilon))^{1/2}N^{2/3}(1-\tilde{\lambda})^{1/2}/(c B^{1/3}) $} and $\beta = \eta_{\tilde K}/2 L_h$, there exists a constant $M$ such that  with probability at least $1-3\lambda\epsilon$, the $\delta$-Regret of UCB-PVI-SF is
\begin{equation}
\nonumber
\begin{split}
     R_{\delta}(N)&\leq M N^{2/3}B^{2/3}\log^{2/3}(N). 
\end{split}
\end{equation}
\end{theorem}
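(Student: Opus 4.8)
The plan is simply to add the two regret bounds already established. Since the $N$ users are split into the disjoint sets $\mathcal L$ and $\mathcal E$ and the $\delta$-regret in \eqref{eq:DeltaRegret} is linear in the per-user rewards, it decomposes exactly as $R_\delta(N)=R_\delta(|\mathcal L|)+R_\delta(|\mathcal E|)$, with $|\mathcal E|=N-|\mathcal L|\le N$. I would bound the first summand by Theorem~\ref{thm:exploration} (which holds unconditionally) and the second by Theorem~\ref{thm:exploitation} (which holds on an event of probability at least $1-3\lambda\epsilon$), so the final statement inherits exactly that failure probability.

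Next I would feed in the prescribed $\beta=\eta_{\tilde K}/(2L_h)$. On the exploitation side this makes $\eta_{\tilde K}+2\beta L_h=2\eta_{\tilde K}$, so the bound of Theorem~\ref{thm:exploitation} collapses to $O\big(|\mathcal E|\,\eta_{\tilde K}\,(1+\sqrt B+\eta_{\tilde K}B)\big)$ up to constants depending only on $\gamma,L_c,\delta,p_1,p_2$. On the exploration side, $1/\beta=2L_h/\eta_{\tilde K}=\Theta(\sqrt{\tilde K})$, hence $\log_{\tilde\phi}(1/\beta)=\Theta(\log\tilde K)$; because the hypothesis of Theorem~\ref{thm:exploration} forces $\log_{\tilde\phi}(1/\beta)+1\le(1-(p_1+p_2)(1-\min\{p_1,p_2\})^{\phi})B$, the first term of that bound is $O(|\mathcal L|B)$ and the $\beta L_r/(1-\gamma)$ piece is negligible, so $R_\delta(|\mathcal L|)=O(|\mathcal L|B)$. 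Finally, substituting $\tilde K=c|\mathcal L|$ into $\eta_{\tilde K}=\sqrt{18\log(16/\epsilon)/\tilde K}$ and then the prescribed $|\mathcal L|=(\log(16/\epsilon))^{1/2}N^{2/3}(1-\tilde\lambda)^{1/2}/(cB^{1/3})$ gives $\eta_{\tilde K}=\Theta\big((\log(16/\epsilon))^{1/4}B^{1/6}N^{-1/3}\big)$, since $c$ and $1-\tilde\lambda$ stay bounded away from $0$ for $N$ large.

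With these substitutions both pieces become explicit in $N$: $R_\delta(|\mathcal L|)=O(|\mathcal L|B)=O\big((\log(16/\epsilon))^{1/2}N^{2/3}B^{2/3}\big)$, while in the bracket controlling $R_\delta(|\mathcal E|)$ the term $|\mathcal E|\eta_{\tilde K}\sqrt B=O\big((\log(16/\epsilon))^{1/4}N^{2/3}B^{2/3}\big)$ dominates, since $|\mathcal E|\eta_{\tilde K}=O(N^{2/3}B^{1/6})$ and $|\mathcal E|\eta_{\tilde K}^2B=O(N^{1/3}B^{4/3})$ are both smaller once $N$ is large. Adding the two summands and bounding $(\log(16/\epsilon))^{1/2}\le\log^{2/3}N$ in the intended regime for $\epsilon$ (for instance $\epsilon$ at most polynomially small in $N$, or fixed) then yields $R_\delta(N)\le MN^{2/3}B^{2/3}\log^{2/3}N$ for a constant $M=M(\gamma,L_r,L_h,L_c,p_1,p_2,\phi,\delta)$.

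The work here is bookkeeping rather than a new idea, and the one place that needs care is consistency of the parameter choices. The prescribed $\beta=\eta_{\tilde K}/(2L_h)$ depends on $\tilde K=c|\mathcal L|$, while $c$, $\tilde\lambda$ and hence $|\mathcal L|$ themselves depend on $\beta$ through $\Delta$ and through the correction $\log_{\phi}(1/\beta)/B$; I would close this mild fixed point by noting that those dependences are only logarithmic in $\beta$ and of size $O(1/\sqrt{|\mathcal L|})$, so that $c,1-\tilde\lambda=\Theta(1)$ and the system admits a consistent solution once $N$ is large enough. I would also verify that, under this choice of $|\mathcal L|$ and $\beta$, the hypotheses $\tilde K\ge 162\log(16/\epsilon)\max\{(1-p_1-p_2)^{-2},B\}$, $\tilde\lambda<1$, and $(\log_{\tilde\phi}(1/\beta)+1)/(1-(p_1+p_2)(1-\min\{p_1,p_2\})^{\phi})\le B$ of Theorems~\ref{thm:exploration}--\ref{thm:exploitation} hold for $N$ sufficiently large, and that the identification of the dominant term among the several additive pieces — which relies on $|\mathcal L|=\Theta(N^{2/3})$ — is self-consistent with the chosen $|\mathcal L|$.
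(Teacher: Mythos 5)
Your proposal is correct and follows exactly the route the paper takes: its proof of this theorem is the one-line remark that the result ``follows from combining Theorem~\ref{thm:exploration} and Theorem~\ref{thm:exploitation}, replacing the parameters and bounding lower order terms by higher order terms,'' which is precisely your decomposition $R_\delta(N)=R_\delta(|\mathcal{L}|)+R_\delta(|\mathcal{E}|)$ followed by substitution of $\beta$ and $|\mathcal{L}|$ and identification of the dominant $N^{2/3}B^{2/3}$ terms. Your write-up is in fact more careful than the paper's, in particular in flagging the circular dependence of $\beta$, $\tilde K$, and $c$ and the need to check the hypotheses of the two theorems under the prescribed parameter choices.
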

\begin{proof}
The proof of the theorem is presented in Appendix \ref{sec:thm4}.
\end{proof}
Thus, the $\delta$-regret of the algorithm for the soft feedback model is $\tilde{O}(N^{2/3})$.
Similar to Theorem \ref{thm:deltaRegret},  the $\delta$-regret of UCB-PVI-HF is $\tilde{O}(N^{2/3})$, presented in Theorem \ref{thm:deltaHF}, for the hard feedback model. 
The following theorem provides a matching lower bound on the $\delta$-regret.

\begin{theorem}\label{thm:lowerBound} Let $L=L_r+L_h$.
For all $B$, $0\leq \delta<1$ and $N\geq \max\{L,0.08L^{2/3}\}$, 
there exists 
a constant $\tilde{M}$ such that 
the $\delta$-regret for any learning algorithm $\mathcal{A}$ satisfies 
\begin{equation}
\nonumber
   \sup_{\mathcal{F}(L)} R_{\delta}(N)\geq \tilde{M}L^{1/3}N^{2/3},
\end{equation}
where $\mathcal{F}(L)\hspace{-2pt}=\hspace{-2pt}\{F,r:\mbox{ $F$ is $L_{h}$-lipshitz continuous, $r$ is}$   $\mbox{$L_r$-lipshitz}$  continuous, and  $ L=L_r+L_h\}.$
\end{theorem}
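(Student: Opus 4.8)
\textbf{Proof plan for Theorem~\ref{thm:lowerBound}.}
The plan is to prove the bound in the \emph{hard feedback} model (the soft feedback case is handled by an analogous construction), since there each action $y$ on a user deterministically reveals $\mathbf{1}(y\le\theta_n)$, so the only object the algorithm must learn is the distribution $F$, and information about $F$ can enter only through the \emph{locations} at which users' thresholds fall relative to the played actions. I would follow the ``needle-in-a-haystack'' template underlying the $N^{2/3}$ lower bounds for continuum problems with merely Lipschitz payoffs (e.g. pricing against an unknown demand curve): exhibit a finite family of instances that are pairwise hard to distinguish from the feedback, yet whose $\delta$-optimal policies differ enough that committing to the wrong one costs a constant amount of reward \emph{per user}. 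Then a minimax/averaging argument over the family gives the claim, since $\sup_{\mathcal{F}(L)}R_\delta(N)\ge\max_i R_\delta^{(i)}(\mathcal{A})\ge\operatorname{avg}_i R_\delta^{(i)}(\mathcal{A})$ for every algorithm $\mathcal{A}$.

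\emph{Construction.} Fix $r$ linear with slope of order $L_r$ (so $r$ is $L_r$-Lipschitz), let $w\in(0,1)$ be a ``bump width'' to be tuned, and let $m=\Theta(1/w)$. Take a reference distribution $F_0$ with constant density lying in $[L_c,L_h]$; for $i=1,\dots,m$ let $F_i$ agree with $F_0$ except for a density perturbation of the largest magnitude permitted by Assumption~2 — of order $L_h$ — supported on the $i$-th cell $C_i=[(i-1)w,iw]$ of a sub-interval of $[0,1]$. The perturbation is arranged so that: (i) on instance $i$ the $\delta$-optimal initial action $y^*(0,1,B)$ lies in $C_i$, and the per-user value $V^*_\delta(0,1,B)$ exceeds the per-user value of \emph{any} policy whose first action lies outside $C_i$ by at least $\kappa\,(L_r+L_h)\,w$, where $\kappa=\kappa(B,\gamma,\delta)>0$ is read off from the Bellman recursions \eqref{eq:optimalAction}--\eqref{eq:expectedReward}; and (ii) $1-F_i$ differs from $1-F_0$ only on $C_i$, by $\Theta((L_r+L_h)w)$, so a user all of whose played actions avoid $C_i$ produces feedback with identical law under $F_i$ and under $F_0$. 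Property~(ii) is immediate from the form of the feedback; property~(i) is where Assumptions~1--2 and the MDP structure are used, and where $\kappa$ (hence $\tilde M$, and any residual dependence on $B,\gamma,\delta$) is pinned down. (To get the full constant $L=L_r+L_h$ one may run this argument once with the perturbation carried by $F$ and once with it carried by $r$, obtaining $\Omega(L_h^{1/3}N^{2/3})$ and $\Omega(L_r^{1/3}N^{2/3})$ respectively, and then use $L_h^{1/3}+L_r^{1/3}\ge(L_h+L_r)^{1/3}$.)

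\emph{Information-theoretic argument.} Fix $\mathcal{A}$. For each $i$ let $N_i$ be the expected number (under $F_0$) of users on which $\mathcal{A}$ ever plays a discriminative action inside $C_i$. A careful accounting of the within-user interaction shows $\sum_{i=1}^m N_i\le c_0 N$ for an absolute constant $c_0$ (the running bracket for $\theta_n$ is nested-decreasing, and the patience budget caps the number of above-threshold probes), so some cell $i^\star$ has $N_{i^\star}\le c_0 N/m=\Theta(c_0 N w)$. A Pinsker/KL change-of-measure then gives that unless $N_{i^\star}=\Omega\big(1/((L_r+L_h)w)^2\big)$, $\mathcal{A}$ cannot distinguish $F_{i^\star}$ from $F_0$, so on instance $i^\star$ it plays an initial action outside $C_{i^\star}$ on a $(1-o(1))$ fraction of its $N$ users and, by property~(i), incurs $R_\delta(N)\ge\tfrac12\kappa(L_r+L_h)\,w\,N$. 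Choosing $w\asymp(L_r+L_h)^{-2/3}N^{-1/3}$ makes $\Theta(Nw)=o\big(1/((L_r+L_h)w)^2\big)$, so $i^\star$ is indeed undetectable and $R_\delta(N)\gtrsim\kappa\,(L_r+L_h)^{1/3}N^{2/3}$, which is the claim with $L=L_r+L_h$; the hypothesis $N\ge\max\{L,0.08L^{2/3}\}$ is exactly what makes this $w$ small enough for $m=\Theta(1/w)$ cells to fit and for the perturbed densities to stay within $[L_c,L_h]$.

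\emph{Main obstacle.} The delicate part, absent from the classical continuum-armed / pricing lower bounds, is the sequential within-user structure: a ``user'' is not one pull but an adaptively chosen sequence of actions. One must (a) establish $\sum_i N_i=O(N)$ — i.e. that neither a large patience budget $B$ (allowing a deep binary search on $\theta_n$) nor the discounting (making late over-exploration cheap in reward) lets the algorithm circumvent the $\Omega(1/((L_r+L_h)w)^2)$ sample cost of detecting a bump, because a single user's entire transcript is information about a single draw $\theta_n$ and hence about a single coarse interval; and (b) verify property~(i), namely that a density perturbation of the size allowed by Assumption~2 genuinely relocates the $\delta$-optimal action into $C_i$ and creates a first-order, $\Theta(Lw)$, per-user value advantage even after the narrowing dynamics and the conservative cap of the $\delta$-policy. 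Step~(b) is the computation with the Bellman equations \eqref{eq:expectedReward} and is where the constant $\tilde M$ is controlled.
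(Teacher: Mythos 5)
Your route is genuinely different from the paper's. The paper does not build a ``needle-in-a-haystack'' family or run a KL/Pinsker argument at all: it reduces the problem to the classical continuum-armed Lipschitz bandit lower bound. Concretely, it observes that the fixed-action benchmark $\max_{0\le y\le 1}(1-F(y))r(y)$ is attained by a $\delta$-policy on the initial state $(0,1,B)$, that the map $m(y)=(1-F(y))r(y)$ is $(L_r+L_h)$-Lipschitz under Assumptions 1--2, and that $V^*_\delta(0,1,B)\ge m(y^*(0,1,B))$ for $B>1$ (Eq.~\eqref{eq:lowValB}); it then decomposes $R_\delta(N)$ into a nonnegative term plus the regret against the fixed-action benchmark and invokes \cite[Theorem 1]{bubeck2011lipschitz} to lower bound the latter by $0.15\,L^{1/3}N^{2/3}$ (Eq.~\eqref{eq:FirstLow}). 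The constant $0.15$ and the hypothesis $N\ge\max\{L,0.08L^{2/3}\}$ are inherited verbatim from that citation, not derived from a bump-width optimization as in your plan.

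The practical consequence is that the two obstacles you correctly flag as the hard part of your plan --- (a) bounding the total discriminative information $\sum_i N_i$ extracted by an adaptive, budget-$B$, per-user search, and (b) showing via the Bellman recursion that a density bump relocates $y^*_\delta(0,1,B)$ and creates a $\Theta(Lw)$ per-user value gap --- are exactly what the paper's reduction avoids having to prove. The paper never needs property (i) for general $B$: it only compares the algorithm's realized reward to the \emph{fixed-action} value, for which the Lipschitz one-shot lower bound already applies. If you complete your construction you would obtain something stronger and more self-contained (a bound that genuinely accounts for the sequential within-user feedback, and potentially a $B$-dependence, which the paper explicitly leaves open), but as written your proposal is a plan with those two steps unresolved, and step (a) in particular is nontrivial: a single user's transcript can localize $\theta_n$ to width $\phi^{-B}$, so arguing that it still only contributes $O(1)$ to $\sum_i N_i$ requires care. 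Conversely, be aware that the paper's own reduction is itself slightly informal for $B\ge 1$ (the cited one-action-per-round lower bound is applied to a learner that may probe each user several times), so your more careful information-theoretic accounting, if carried through, would also shore up that point.
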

\begin{proof}
The proof of the theorem is presented in Appendix \ref{sec:thm5}.
\end{proof}
The above theorem establishes that  the two variants of UCB-PVI are order optimal in $N$, which is analogous to a time horizon in classical MAB setting. Since $\theta_n$ is fixed for each user $n$, the platform personalizes the policy for each user by utilizing $B$, however it does not learn about $F$ along this process. The platform gains information about $F$ as it interacts with more users. Hence, the order optimality in our setting is studied with respect to $N$. Additionally, if $N$ is unknown, or new users join the platform, then the doubling trick can be used in conjunction with UCB-PVI to personalize the users' experience  \cite{besson2018doubling}. Finally, our lower bound may not be tight in terms of the parameter $B$. This is left as a future problem to study.

\subsection{Merits of the algorithm}
In many practical systems, millions of users interact  with the platform simultaneously. It is therefore desirable for the platform to personalize the users' experience as soon as possible and reduce the waiting time. 
In our case, the users in the exploration set receive a personalized experience without any delay. The users in the exploitation set  wait until for each $n\in\mathcal{L}$, either the user $n$ abandon the platform or the range of UI $[\ell_n,u_n]$ is less than $\beta=O(N^{2/3}/ B^{1/3})$. This requires  $O(\min\{B,\log(1/\beta)\})$ rounds of interactions, which corresponds to  the waiting time of the users in the exploitation set to obtain a personalized experience. Thus,  our proposed algorithm is also practical, as the waiting time is $O(B)$ and does not scale with $N$. In contrast, the classical UCB-type   strategies, for example the one in \cite{schmit2018learning}, have $O(N)$ waiting time, which makes them  impractical for large systems. 

In our algorithm, the users are divided into two sets $\mathcal{L}$ and $\mathcal{E}$. The number of users in the set $\mathcal{L}$ is $\Theta(N^{2/3})$, which is sub-linear in $N$. This implies that number of users in the set $\mathcal{E}$ is $\Theta(N)$. Hence, only a small number of users, namely $o(N)$ or $\Theta(N^{2/3})$, is needed to gain information, and this information is later utilized to improve the experience for  large number   of  users, namely $\Theta(N)$.

We also highlight key differences with respect to explore and commit strategies in MAB\cite{garivier2016explore}. In the exploration phase of these strategies, a single action is selected at each time and its reward is revealed, which is then used to calculate the expected reward for each action. In UCB-PVI,  we design a scheme for the selection of a “sequence of actions” for each user utilizing the user’s feedback, while the user’s threshold is never revealed. The noise in the estimation of the users' threshold depends on the algorithm, whereas the noise is zero in the exploration phase, independent of the algorithm.  In the commit phase,  a best fixed action is selected for all the remaining time. In our setting, this would be analogous to selecting a best fixed “sequence of actions” for all remaining users. Instead, UCB-PVI selects a “sequence of actions” for each user based on the feedback, hence, personalizing the experience.  In other words, UCB-PVI selects the next action for each user based on his or her  previous feedbacks. Hence, personalization is achieved by leveraging the users’ preferences over time.

Finally, let us briefly discuss the importance of learning across users in this setting. Consider the case when the information from previous users is not utilized to adapt the experience of the new users. In this case, a fixed  strategy is used across all the users which utilizes the budget $B$. This implies that the regret in this scenario will be linear in $N$. However, the regret of our learning algorithm is $\tilde O(N^{2/3})$.  Hence, the utilization of the information from previous users, say about threshold in this case, to adapt the experience of new users is essential to obtain sub-linear regret.

\subsection{Key Technical Contributions}
Unlike the classical MAB setting and its variants, the distribution $F$ of the threshold is learned without ever revealing the values of the thresholds of the individual users. We learn a noisy version of these thresholds using the feedback of the users in $\mathcal{L}$ and associate an Uncertainty Interval (UI) to each of them. Then, we only consider those users whose UI is less than $\beta$ in the exploration set. The parameter $\beta$ regulates the trade-off between the number of users whose samples are collected and the estimation accuracy. This trade-off is explicitly characterized in Theorems  \ref{thm:exploration}  and \ref{thm:exploitation}. 

Since $B_r$ is un-observable to the platform, we propose an estimation scheme. The technical contributions associated to this scheme are as follows. First, we establish that we can achieve an order optimal performance by using an estimation scheme of an un-observable variable along with the MDP like solution in place of solving a partially observable MDP. Second, we model the evolution of the true and the estimated residual budgets as two random walks and bound the expectation of the absolute difference between the states of the two random walk (rather than the expected difference between the states). The regret corresponding to this  scheme is quantified by the factor
$O(|\mathcal{E}|\eta_{\tilde{K}}(\sqrt{B}+\eta_{\tilde{K}}B))$ in  Theorem \ref{thm:exploitation}. 

The regret for the set $\mathcal{E}$ is due to two reasons: the unknown distribution $F$ and the unobservable residual budget $B_r$. Let the error introduced by the unknown $F$ be $e_1$ and the error introduced by the unobservable $B_r$ be $e_2$. First, we obtain concentration bounds on these errors. Now, the usual MAB trick would be to use the triangle inequality and bound the regret by $e_1+e_2$. This, however, leads to a linear regret bound $ O(N)$ in our setting. Instead, we adopt a novel approach, and bound the regret as $O(e_1\cdot e_2)$ (instead of $e_1+e_2$), leading to a sub-linear regret. 

\subsection{Numerical results}
In this section, we quantify the gains from utilizing users' feedback and personalizing their experiences. We evaluate the hard feedback model because this model incorporates maximum amount of users' feedback, and therefore it can quantify the maximum gains due to the utilization of the feedback. Additionally, we compare our algorithm with the Sequential Learning (SL) algorithm,  proposed by \cite{schmit2018learning}. $SL$ does not utilize users' feedback. 

In our experimental setup,
the reward function $r(y)=5y$, the distribution function $F$ is uniform $U[0,1]$, discount factor $\gamma=0.95$, $\delta=0.01$ and $\phi=2$. The performance of the algorithms are evaluated for varying  patience budget  $B$ and number of users $N$. The figures show expected total rewards along with the error bars representing one standard deviation from 5000 runs.  The experiments could not be conducted on real data sets due to the unavailability of such public datasets. Additionally, the literature \cite{schmit2018learning,LuCustomer2017,cao2019dynamic} on “Sequential Choice Model” also performs experiments in a simulated setting.

Figure \ref{fig:UCBPVI} shows the performance of our algorithm UCB-PVI-HF.  The figure  shows that the total reward  increases with the number of users $N$ as it increases the effective number of user-platform interactions. Additionally, the figure shows that the total reward increases as the budget increases. This is due to the fact that the higher budget $B$ allows longer user-platform interactions. Thus, more patient users can lead to higher rewards for the platform and in return, they can get a better personalized experience as the platform's actions would be closer to their threshold. This also leads to a conjecture that the lower bound on the regret is also a function of $B$, which is a future direction of research question. 

Figure \ref{fig:UCBPVIvsSL} shows  the performance of UCB-PVI-HF and SL. SL, proposed in  \cite{schmit2018learning}, is a standard UCB algorithm where the platform interacts with one user at a time. For each user-platform interaction, the action is selected based on  the maximum UCB on the expected rewards for a discretized action space. Since the algorithm was proposed in a setup where users have no patience, i.e $B<1$, it does not utilize the intermediate feedback from the users. 
Figure \ref{fig:UCBPVIvsSL} shows that UCB-PVI-HF outperforms SL for different values of  $B$ and  $N$. In Figure  \ref{fig:UCBPVIvsSL}, the total reward of both the algorithms increases with the budget $B$. For budget $ B<1$, 
the performance of both the  algorithms is close to each other as both the algorithms are order optimal for this special case \cite{schmit2018learning}. In contrast, as $B$ increases, the gap between the total rewards of the two algorithms increases, and UCB-PVI-HF outperforms SL by a greater margin. This is so because when $B$ is large, the users' feedback is  utilized by UCB-PVI-HF to personalize the future actions. On contrary, SL does not utilize this feedback from the users. This case is a further evidence of the gains from the utilization of the users' feedback in UCB-PVI-HF in comparison to SL, and need of personalization on various platforms like recommendation systems.

\begin{figure}
       \centering
\begin{subfigure}{}
\includegraphics[width=2.5 in]{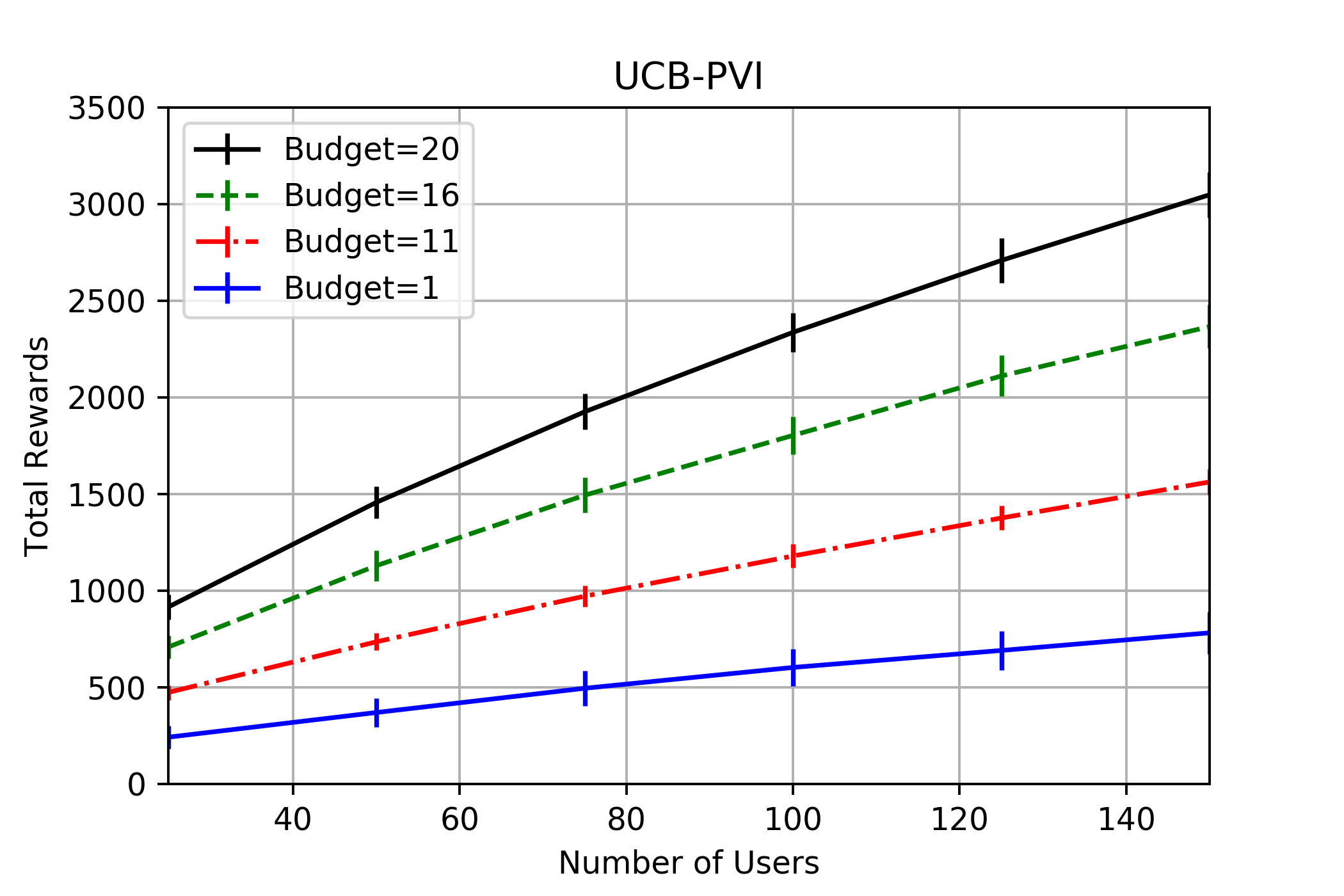}
\end{subfigure}
\caption{Performance evaluation of UCB-PVI-HF}
\label{fig:UCBPVI}
\end{figure}
\begin{figure}
       \centering
\begin{subfigure}{}
\includegraphics[width=4.5 in]{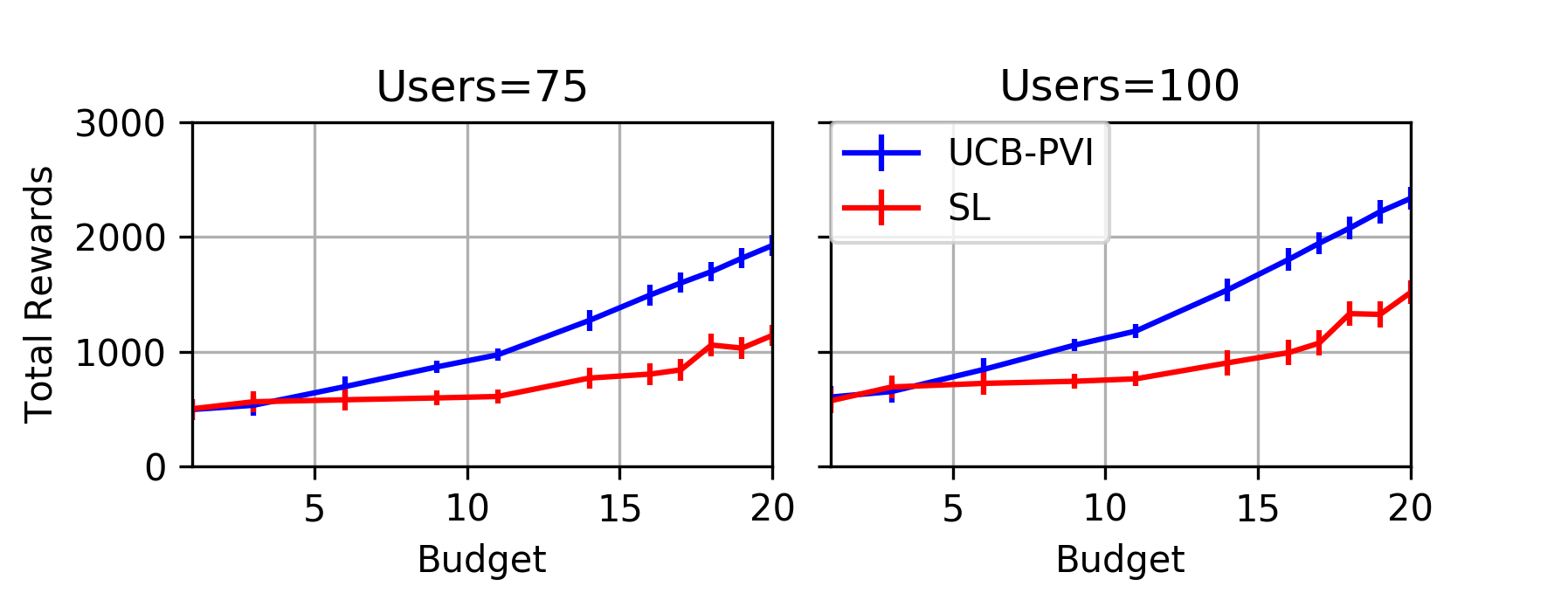}
\end{subfigure}
\begin{subfigure}{}
\includegraphics[width=4.5 in]{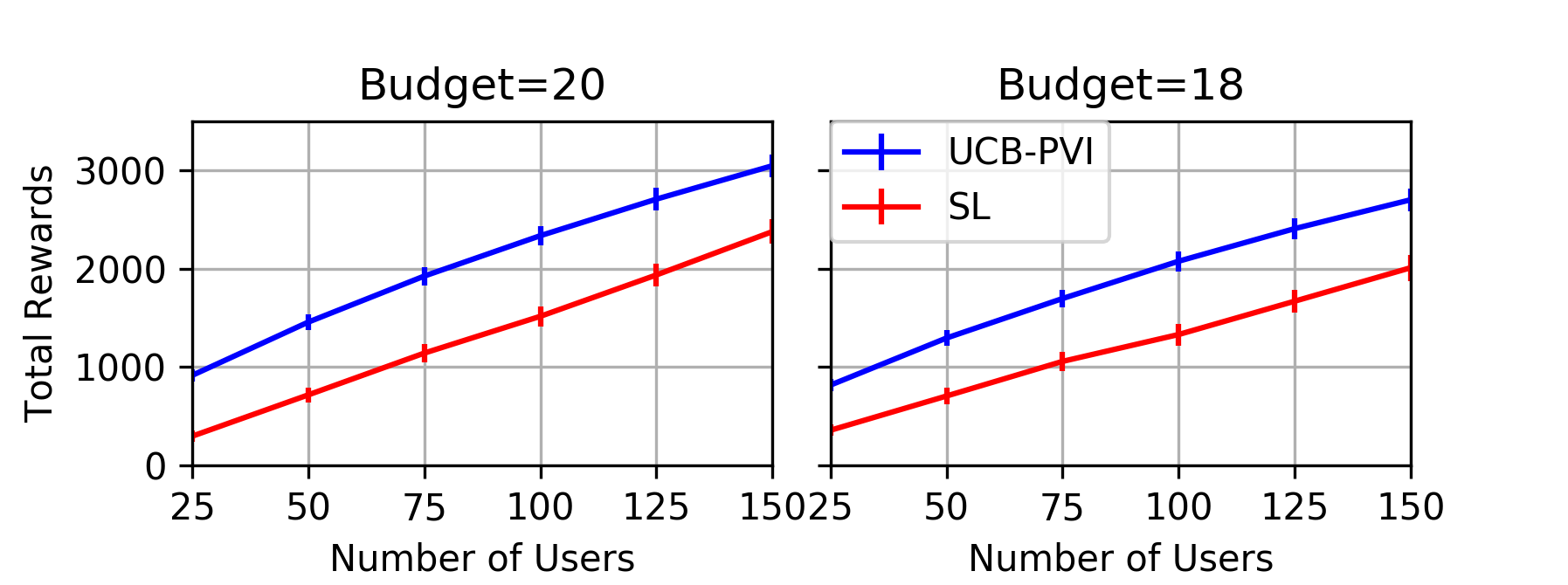}
\end{subfigure}
\caption{Comparison of UCB-PVI-HF and SL}
\label{fig:UCBPVIvsSL}
\end{figure}

\section{Conclusion}
We studied a novel  Bandit setting where a platform personalizes users' experience to maximize its rewards and the platform-user interaction is based on a TLM.
We studied two different  feedback models representing the information pattern available to the platform at each action, and propose two bandit algorithms for the feedback models, and showed that they are order optimal in $N$, up to a logarithmic factor. Also, our results immediately generalize to noisy thresholds, user-dependent initial budgets and patience depletion costs, and to mixed feedback models. 

The model studied here can be extended in various ways. First, the model can be extended to personalize users' experience if additional information about the user's characteristics is available to the platform. This context about the user can influence the user's threshold, the feedback model followed by the user and the patience budget $B$ of the user. This future direction corresponds to studying ``Contextual Sequential Choice Bandits". Second, similar to non-stochastic settings in \cite{auer2002nonstochastic,rangi2019online}, the learning problem can be studied when the users' thresholds are  assigned by an adversary, can be defined as ``Adversarial Sequential Choice Bandits". Finally, one could also study the TLM model in a social setting where the users interact with the platform in a group and adapt their thresholds depending on other users' experiences in their group. In conclusion, the model  studied in this work is a step towards utilizing the intermediate feedbacks from the users in recommendation systems, advertisement platforms and negotiations, and  can be extended in different directions depending on the specific nature of any platform.




\appendix
\section{Theorems and Proofs}
\subsection{Soft Feedback model}
\subsubsection{Proof of Theorem
\ref{thm:exploration}}\label{sec:thm1}

\begin{proof}
In Algorithm 1, letting $[\ell(j), u(j)]$ be the UI after  $j^{th}$ round of  Linear Search Exploration and $I_{j}=u(j)-\ell(j)$. 
Let $J$ be the first instance at which $u(J)-\ell(J)\leq \beta$, namely
\begin{equation}\label{eq:StopTimePNSF}
    J=\min\{j: u(j)-\ell(j)\leq \beta\}=\min\{j: I_j\leq \beta\}.
\end{equation}
{Since $K$ is the number of users in $\mathcal{L}$ that do not abandon the platform, we have $J<\infty$ with probability one for these users.}
The $\delta$-regret for these users is 
\begin{equation}\label{eq:newRegret}
    R_{\delta}(K)\leq \sum_{n=1}^{K}\mathbf{E}\bigg[\sum_{t=1}^{\infty}( \gamma^{t-1}r(\theta_{n})-r(y_{n}(t))\bigg|\theta_{n}\bigg],
\end{equation}
because given $\theta_n$, $r(\theta_{n})$ is the maximum achievable reward for a user and $\gamma<1$.  

{Assuming $J<\infty$, namely there exists a $J$ such that $u(J)-\ell(J)\leq \beta$.} We divide the regret analysis into two regions: until round $J$ and after round $J$. 

At round $j$, the input UI is $[\ell(j-1),u(j-1)]$. Let $Z_1$ be a random variable denoting the number of actions below the threshold, and $Z_2$ be a random variable denoting the number of actions above the threshold.  Then, we have
\begin{equation}\label{eq:sumOfRV}
    Z_1+Z_2=\phi+1.
\end{equation}
Let 
\begin{equation}
    E_1=\{\mbox{Feedback is received for action $\ell(j-1)$ only }\},
\end{equation}
\begin{equation}
    E_2=\{\mbox{Feedback is received for action $u(j-1)$ only }  \}.
\end{equation}
Then, 
\begin{equation}
    P(E_1)=p_1(1-p_1)^{z_1-1}(1-p_2)^{z_2}, P(E_2|Z_1,Z_2)=(1-p_1)^{z_1}(1-p_2)^{z_2-1}p_2. 
\end{equation}
Then, the probability that reduction is uncertainty is at least $1/\phi$
\begin{equation}\label{eq:lowerBoundProb1}
\begin{split}
       P(I_j\leq (1-1/\phi)I_{j-1})&=1-P(E_1)-P(E_2),\\
       &=1-p_1(1-p_1)^{z_1-1}(1-p_2)^{z_2}-(1-p_1)^{z_1}(1-p_2)^{z_2-1}p_2,\\
       &\geq 1-(1-p^*)^{\phi}p_1-(1-p^*)^{\phi}p_2= 1-(p_1+p_2)(1-p^*)^{\phi},
\end{split}
\end{equation}
where $p^*=\min\{p_1,p_2\}$, and the last inequality follows from \eqref{eq:sumOfRV}. 

Now, let us define a random variable $X_j=I_j/I_{j-1}$. Using \eqref{eq:lowerBoundProb1} and the fact that $X_j$ is a discrete random variable, we have
\begin{equation}\label{eq:prob1}
    P(X_j\leq (1-1/\phi))\geq 1-(p_1+p_2)(1-p^*)^{\phi},
\end{equation}
\begin{equation}\label{eq:prob2}
      P(X_j=1)\leq (p_1+p_2)(1-p^*)^{\phi}.
\end{equation}
Using \eqref{eq:StopTimePNSF} and the fact that $I_0=1$, we have
\begin{equation}
\begin{split}
    J&=\min\{j: I_j\leq \beta\},\\
    &=\min\{j: \prod_{i=1}^{j}X_i\leq \beta\},\\
    &=\min\{j: \sum_{i=1}^{j}\log(X_i)\leq \log(\beta)\},\\
    &=\min\{j: \sum_{i=1}^{j}\log(1/X_i)\geq \log(1/\beta)\},\\
\end{split}
\end{equation}
where the last equality follows from the fact that $X_i\leq 1$ and $\beta\leq 1$. Using Wald's equation, we have
\begin{equation}\label{eq:checkJ}
   E[J]\leq \frac{\log(1/\beta)}{E[\log(1/X_i)]}+1. 
\end{equation}
Now, using \eqref{eq:prob1} and \eqref{eq:prob2}, we have 
\begin{equation}
    E[\log(1/X_i)]\geq (1-(p_1+p_2)(1-p^*)^{\phi})\log(\phi/(\phi-1)),
\end{equation}
which implies along with \eqref{eq:checkJ} that
\begin{equation}\label{eq:UppBoundJ}
     E[J]\leq \frac{\log(1/\beta)}{ (1-(p_1+p_2)(1-p^*)^{\phi})\log(\phi/(\phi-1))}=\frac{\log_{\tilde{\phi}}(1/\beta)}{1-(p_1+p_2)(1-p^*)^{\phi}},
\end{equation}
where $\tilde{\phi}=\phi/(\phi-1)$. 
Now, at $j^{th}$ round of LSE, let $Z_j$ be the random variable denoting the expected number of actions performed by the platform above the threshold. Since the platform halts LSE as soon as it receives a negative feedback, we have
\begin{equation}
    P(Z_j=k)=(1-p_2)^{k-1}p_2.
\end{equation}
Since $Z_j\leq \phi+1$, we have that
\begin{equation}\label{eq:ThresholdAbove}
\begin{split}
    E[Z_j]&\leq\sum_{k=1}^{\phi+1} k(1-p_2)^{k-1}p_2,\\
    &\stackrel{(a)}{\leq}\frac{1-(1-p_2)^{\phi+1}}{p_2},
\end{split}
\end{equation}
where $(a)$ follows from the fact that for $r<1$, we have  $\sum_{n=1}^{N}nr^{n}\leq r(1-r^N)/(1-r)^2$. Given $J<\infty$, combining \eqref{eq:UppBoundJ} and \eqref{eq:ThresholdAbove}, the expected number of actions above the threshold are at most
\begin{equation}\label{eq:UppBound}
    \frac{1-(1-p_2)^{\phi+1}}{p_2}\cdot \frac{\log_{\tilde{\phi}}(1/\beta)}{1-(p_1+p_2)(1-p^*)^{\phi}},
\end{equation}
which is less than $B$ from the assumption in the theorem. This is also the expected regret from actions above the threshold.

Now, at round $j$, the regret from actions below the threshold is at most $L_r(\phi+1)$. This implies the expected regret  from the actions below the threshold until round $J$ are at most
\begin{equation}\label{eq:belowThresholdRegret}
  \frac{L_r(\phi+1)\log_{\tilde{\phi}}(1/\beta)}{1-(p_1+p_2)(1-p^*)^{\phi}}. 
\end{equation}
Combining \eqref{eq:ThresholdAbove} and \eqref{eq:belowThresholdRegret}, we have that the expected regret until round $J$ is at most
\begin{equation}\label{eq:untillJ}
    \frac{\log_{\tilde{\phi}}(1/\beta)}{1-(p_1+p_2)(1-p^*)^{\phi}}\bigg(L_r(\phi+1)+\frac{1-(1-p_2)^{\phi+1}}{p_2}\bigg). 
\end{equation}
The expected regret after round $J$ is at most
\begin{equation}\label{eq:sf11}
    \beta L_r/(1-\gamma).
\end{equation}
Now, let us consider the users with $J=\infty$,  namely the user abandon before the residual uncertainity becomes less than $\beta$. Given $J=\infty$, the regret for this user is at most
\begin{equation}\label{eq:JinftyNSF}
    L_{r}(\phi+1)B+B.
\end{equation}
Combining \eqref{eq:untillJ} and \eqref{eq:JinftyNSF}, the regret is at most
\begin{equation}
    |\mathcal{L}|\frac{\log_{\tilde{\phi}}(1/\beta)}{1-(p_1+p_2)(1-p^*)^{\phi}}\bigg(L_r(\phi+1)+\frac{1-(1-p_2)^{\phi+1}}{p_2}+\frac{\beta L_r}{1-\gamma}\bigg) +|\mathcal{L}|(L_{r}(\phi+1)B+B).
\end{equation}

\end{proof}

%

\subsubsection{Proof of Theorem \ref{thm:exploitation}}\label{sec:thm2} 
\begin{proof} 
Let $G^*(B)=V^*_{\delta}(0,1,B)$ be the  expected reward  received by the $\delta$-policy of the MDP for patience budget $B$ for each user.  
Given the estimate $\hat{F}^{K}(x)$ and estimate $\hat{B_r}$ of remaining budget for users in $\mathcal{E}$, $\hat{G}(\hat{B_r})=\hat{V}^*(0,1,\hat{B_r})$ is the estimate of expected rewards for each user in $\mathcal{E}$.   $G(B_r)$ 
 is the actual expected rewards  of the platform's policy for each user and $B_r$ is the actual remaining budget. Also, at $t=0$, for UI $[0,1]$, $\hat{B_r}=B_r=B$.  In the notations, we drop the parameter $\delta$ for simplicity.

Thus, the cumulative regret over all users in $\mathcal{E}$ with respect to $\delta$-optimal policy is 
\begin{equation}\label{eq:MainCheck}
\begin{split}
        R_{\delta}(E) &= |\mathcal{E}|( G^*(B)-G(B)),\\
                &=|\mathcal{E}|(G^*(B)-\hat{G}(B)+\hat{G}(B)-G(B)).
\end{split}
\end{equation}
In the above equation, we bound the two terms $R^{+}(B)=G^*(B)-\hat{G}(B)$ and $R^{++}(B)=\hat{G}(B)-G(B)$ individually.

For $R^{+}(B)$, given $B_{r}\leq B$ and the estimate $\hat{B_r}$, we define 
\begin{equation}
\begin{split}
     E^{+}(B_{r},\hat{B_r})&= \sup_{0\leq \ell\leq u\leq 1} \big( {V}^*_{\delta}(\ell,u,B_{r})-\hat{V}^*(\ell,u,\hat{B_r})\big),
\end{split}
\end{equation}
and
\begin{equation}
\begin{split}
     E^{+}(B_{r})&= \sup_{0\leq\ell\leq u\leq 1} \big( {V}^*_{\delta}(\ell,u,B_{r})-\hat{V}^*(\ell,u,{B_r})\big).
\end{split}
\end{equation}
Given $B_r$, $E^{+}(B_{r})$ is the maximum error in estimation of value function.
For any $B_{r}\leq B$ and $x\geq 0$, the following holds 
\begin{equation}\label{eq:maximumErrorBound1.1.1}
\begin{split}
     E^{+}(B_{r})&=\sup_{0\leq \ell\leq u\leq 1} \big( {V}^*_{\delta}(\ell,u,B_{r})-\hat{V}^*(\ell,u,{B_r})\big),\\
     &\stackrel{(a)}{\leq} \sup_{0\leq\ell\leq u\leq 1} \bigg( {V}^*_{\delta}(\ell,u,B_{r}-x)-\hat{V}^*(\ell,u,{B_r}-x) + \frac{x}{1-\gamma}\bigg),\\
     &=E^{+}(B_{r}-x)+\frac{x}{1-\gamma},
\end{split}
\end{equation}
where $(a)$ follows from the following facts:  for all $B_{r}< 0$, $V^*_{\delta}(\ell,u,B_{r})=\hat{V}(\ell,u,B_r)=0$; $r(y_{n}(t))\leq 1$ and the expectation is performed with respect to conditional probabilities $P(y|\ell,u)$ and $P_{U}^{K}(y|\ell,u)$ which lies in $[0,1]$; the maximum reward due to  platform-user interactions in $x$ budget is $x/(1-\gamma)$.
 Also, using the fact that the rewards  function $r(y)$ is non-decreasing in $y$,  for $u_{1}\leq \ell_{2}$ and $B_{1}\leq B_{2}$, we have that
\begin{equation}\label{eq:boundValueFunction1.1.1}
    V^*_{\delta}(\ell_2,u_2,B_{2})-V^*_{\delta}(\ell_1,u_1,B_{1})\leq \frac{B_{2}-B_{1}}{1-\gamma}.
\end{equation}
Likewise, 
\begin{equation}\label{eq:boundValueFunction2.1}
    \hat{V}^*(\ell_2,u_2,B_{2})-\hat{V}^*(\ell_1,u_1,B_{1})\leq \frac{B_{2}-B_{1}}{1-\gamma}.
\end{equation}
Also, we have
\begin{equation}\label{eq:Budget1}
\begin{split}
     E^{+}(B_{r},\hat{B_r})&=\sup_{0\leq \ell\leq u\leq 1} \big( {V}^*_{\delta}(\ell,u,B_{r})-\hat{V}^*(\ell,u,\hat{B_r})\big),\\
     &=\sup_{0\leq\ell\leq u\leq 1} \big( {V}^*_{\delta}(\ell,u,B_{r})-\hat{V}^*(\ell,u,{B_r})+\hat{V}^*(\ell,u,{B_r})-\hat{V}^*(\ell,u,\hat{B_r})\big),\\
     &\stackrel{(a)}{\leq}E^{+}(B_{r})+\frac{|B_r-\hat{B}_r|}{1-\gamma},
\end{split}
\end{equation}
where $(a)$ follows from the fact that maximum reward due to platform-user interaction in one unit budget is $1/1-\gamma$. 

Now, let us first bound $|B_r-\hat{B}_r|$.  Given $B_r$, $\hat{B}_{r}$ and feedback is not received, we have
\[B_{r}=B_{r}-1 \mbox{ with probability } \frac{P(A_2|\ell,u)(1-p_2)}{P(A_1|\ell,u)(1-p_1)+P(A_2|\ell,u)(1-p_2)},\]
and 
\[B_{r}=B_{r} \mbox{ with probability } \frac{P(A_1|\ell,u)(1-p_1)}{P(A_1|\ell,u)(1-p_1)+P(A_2|\ell,u)(1-p_2)}.\]
Likewise, $\hat{B}_r$ evolves as 
\[\hat{B}_{r}=\hat{B}_{r}-1 \mbox{ with probability } \frac{P^{K}_{L}(A_2|\ell,u)(1-\hat{p}_2)}{P^{K}_{U}(A_1|\ell,u)(1-\hat{p}_1)+P^{K}_{L}(A_2|\ell,u)(1-\hat{p}_2)},\]
and 
\[\hat{B}_{r}=\hat{B}_{r} \mbox{ with probability } \frac{P^{K}_{U}(A_1|\ell,u)(1-\hat{p}_1)}{P^{K}_{U}(A_1|\ell,u)(1-\hat{p}_1)+P^{K}_{L}(A_2|\ell,u)(1-\hat{p}_2)}.\]
$B_r$ and $\hat{B}_r$ can be modeled as two random walk bounded between $[0,B]$. We also have
{\small\begin{equation}\label{eq:RWbound1}
\begin{split}
&\bigg|\frac{P(A_2|\ell,u)(1-p_2)}{P(A_1|\ell,u)(1-p_1)+P(A_2|\ell,u)(1-p_2)}-\frac{P^{K}_{L}(A_2|\ell,u)(1-\hat{p}_2)}{P^{K}_{U}(A_1|\ell,u)(1-\hat{p}_1)+P^{K}_{L}(A_2|\ell,u)(1-\hat{p}_2)}\bigg| \\
&\stackrel{(a)}{\leq} \bigg|\frac{P(A_2|\ell,u)(1-p_2)P^{K}_{U}(A_1|\ell,u)(1-\hat{p}_1)-P^{K}_{L}(A_2|\ell,u)(1-\hat{p}_2)P(A_1|\ell,u)(1-p_1) }{(1-p_1-p_2)(1-\hat{p}_1-\hat{p}_2)}\bigg|\\
&\stackrel{(b)}{\leq} \frac{|P(A_2|\ell,u)- P^{K}_{L}(A_2|\ell,u)|+|p_1-\hat{p}_1|+|p_2-\hat{p}_2|+|P^{K}_{U}(A_1|\ell,u)- 
P(A_1|\ell,u)|}{(1-p_1-p_2)(1-\hat{p}_1-\hat{p}_2)}\\
&\stackrel{(c)}{\leq} \frac{2}{(1-p_1-p_2)^2-(1-p_1-p_2)\eta_{\tilde{K}}}\bigg( 
\frac{(\eta_{\tilde K}+2\beta L_h)}{L_c\cdot \delta }+\eta_{\tilde{K}}
\bigg)\\
&\stackrel{(d)}{\leq} \frac{6}{(1-p_1-p_2)^2}\frac{(\eta_{\tilde K}+2\beta L_h)}{L_c\cdot \delta },
\end{split}
\end{equation}}
with probability at least $1-3\epsilon\lambda$, where $(a)$ follows from the fact that for $x,y\in(0,1)$, we have $1-xy\geq 1-x$ $(b)$ follows from the fact that the terms in the product are less than unity, $(c)$ follows from Theorems \ref{thm:UCBandLCB} and \ref{thm:feedbackEstimate}, and $(d)$ follows from the 
assumption $\eta_{\tilde{K}}\leq (1-p_1-p_2)/3$ in the theorem. Likewise, we have
{\small\begin{equation}\label{eq:RWbound2}
\begin{split}
    &\bigg|\frac{P^{K}_{U}(A_1|\ell,u)(1-\hat{p}_1)}{P^{K}_{U}(A_1|\ell,u)(1-\hat{p}_1)+P^{K}_{L}(A_2|\ell,u)(1-\hat{p}_2)}-\frac{P(A_1|\ell,u)(1-p_1)}{P(A_1|\ell,u)(1-p_1)+P(A_2|\ell,u)(1-p_2)}\bigg|\\
    &\leq \frac{6}{(1-p_1-p_2)^2}\frac{(\eta_{\tilde K}+2\beta L_h)}{L_c\cdot \delta },
\end{split}
\end{equation}}
with probability $1-3\lambda\epsilon$.

At the start of interaction between platform and user i.e. $t=0$, $\hat{B}_{r}=B_r=B$. For all $0\leq\ell\leq u\leq 1$, using \eqref{eq:RWbound1} and \eqref{eq:RWbound2}, the expected difference at next interaction $t=1$ is 
{{\small
\begin{equation}\label{eq:Bestimate}
\begin{split}
       &\mathbf{E}[B_r-\hat{B}_r]\\
       &\leq 2\bigg|\frac{P(A_2|\ell,u)(1-p_2)}{P(A_1|\ell,u)(1-p_1)+P(A_2|\ell,u)(1-p_2)}-\frac{P^{K}_{L}(A_2|\ell,u)(1-\hat{p}_2)}{P^{K}_{U}(A_1|\ell,u)(1-\hat{p}_1)+P^{K}_{L}(A_2|\ell,u)(1-\hat{p}_2)}\bigg|\\
       &\qquad +2\bigg|\frac{P^{K}_{U}(A_1|\ell,u)(1-\hat{p}_1)}{P^{K}_{U}(A_1|\ell,u)(1-\hat{p}_1)+P^{K}_{L}(A_2|\ell,u)(1-\hat{p}_2)}-\frac{P(A_1|\ell,u)(1-p_1)}{P(A_1|\ell,u)(1-p_1)+P(A_2|\ell,u)(1-p_2)}\bigg|,\\
       &\leq \frac{12}{(1-p_1-p_2)^2}\frac{(\eta_{\tilde K}+2\beta L_h)}{L_c\cdot \delta },
\end{split}
\end{equation}
}}
with probability at least $(1-3\lambda\epsilon)$.
Since the budget is $B$, for all $B_{r}$ and $\hat{B}_{r}$, the expected difference between the estimates is at most
\begin{equation}\label{eq:ExpectRandomWalk}
\begin{split}
    \mathbf{E}[B_r-\hat{B}_r]&\leq \frac{12 B}{(1-p_1-p_2)^2}\frac{(\eta_{\tilde K}+2\beta L_h)}{L_c\cdot \delta }.
\end{split}
\end{equation}
Combining \eqref{eq:ExpectRandomWalk} and the fact that the variance of random walk is $O(B)$, we have
\begin{equation}\label{eq:BudgetBound}
    \mathbf{E}[|B_r-\hat{B}_r|]\leq \sqrt{B}+\frac{12 B}{(1-p_1-p_2)^2}\frac{(\eta_{\tilde K}+2\beta L_h)}{L_c\cdot \delta }. 
\end{equation}

Now, we  bound $R^{+}(B)=G^*(B)-\hat{G}(B)$ in terms of $E^{+}(B_r,\hat{B_r})$. Also, $B=\hat{B_r}=B_r$ at $t=0$. Now, suppose
\begin{equation}\label{eq:Assu1}
    P_{U}^{K}(y|\ell,u)=P(y|\ell,u)+v(\ell,u,y),    \Hat{p}_1=p_1+c_1 \mbox{ and } \Hat{p}_2=p_2+c_2.
\end{equation}

For all $u\in [0,1]$ such that $u-\ell>\delta$, $B^\prime\leq B$, and estimate $\hat{B}^{\prime}=B^\prime-b$ the lower bound on $\hat{V}^*(u,B^\prime)$ is as follows:
{\small\begin{equation}
\begin{split}
    &\hat{V}^*(\ell,u,\hat{B}^{\prime})\\
    &\stackrel{(a)}{\geq}  \hat{V}^{K}_{U}(\ell,u,B^\prime-b,y^*_{\delta}(\ell,u,B^\prime)),\\
                 &= P_{U}^{K}\big(y^*_{\delta}(\ell,u,B^\prime)|\ell,u\big)\bigg(r({y}^*_{\delta}(\ell,u,B^\prime))+\gamma(1-\hat{p}_1)\hat{V}^*(\ell,u,B^\prime-b)+ \gamma\hat{p}_1 \hat{V}^*(y^*_{\delta}(\ell,u,B^\prime),u,B^\prime-b)\bigg),\\
                 &\qquad +\big(1-P_{U}^{K}(y^*_{\delta}(\ell,u,B^\prime)|\ell,u)\big)\gamma\bigg(\hat{p}_2\hat{V}^*(\ell,y^*_{\delta}(u,B^\prime),B^\prime-b-1)+(1-\hat{p}_2)\hat{V}^*(\ell,u,B^\prime-b-1)\bigg),\\
       &\stackrel{(b)}{=} \bigg(P\big({y}^*_{\delta}(\ell,u,B^\prime)|\ell,u\big)+v(\ell,u,y^*_{\delta}(\ell,u,B^\prime))\bigg)\big(r({y}^*_{\delta}(\ell,u,B^\prime))+\gamma (1-p_1-c_1)\hat{V}^*(\ell,u,B^\prime-b) \\
       &\qquad+\gamma (p_1+c_1)\hat{V}^*({y}^*_{\delta}(\ell,u,B^\prime),u,B^\prime-b)
       \big)\\
       &\quad+\bigg(1-P\big({y}^*_{\delta}(\ell,u,B^\prime)|\ell,u\big)-v(\ell,u,y^*_{\delta}(\ell,u,B^\prime))\bigg)\gamma\bigg(({p}_2+c_2)\hat{V}^*(\ell, y^*_{\delta}(\ell,u,B^\prime),B^\prime-b-1)\\
       &\qquad +(1-{p}_2-c_2)\hat{V}^*(\ell,u,B^\prime-b-1)\bigg),\\
       &\stackrel{(c)}{\geq}  \bigg(P\big({y}^*_{\delta}(\ell,u,B^\prime)|\ell,u\big)+v(\ell,u,y^*_{\delta}(\ell,u,B^\prime))\bigg)\bigg(r({y}_{\delta}^*(\ell,u,B^\prime))+\gamma(1-p_1-c_1){V}^*_{\delta}(\ell,u,B^\prime)\\
       &\qquad +\gamma (p_1+c_1) {V}^*_{\delta}({y}_{\delta}^*(\ell,u,B^\prime), u,B^\prime)
       -\gamma (1-p_1-c_1) E^{+}(B^\prime,\hat{B}^\prime)-\gamma (p_1+c_1) E^{+}(B^\prime,\hat{B}^\prime)\bigg)\\
       &\qquad+\bigg(1-P\big({y}_{\delta}^*(\ell,u,B^\prime)|\ell,u\big)-v(\ell,u,y_{\delta}^*(\ell,u,B^\prime))\bigg)\gamma\bigg(({p}_2+c_2){V}^*_{\delta}(\ell,y^*_{\delta}(\ell,u,B^\prime),B^{\prime}-1)\\
       &\qquad +(1-{p}_2-c_2){V}^*_{\delta}(\ell, u,B^\prime-1) - ({p}_2+c_2)E^{+}(B^\prime-1,\hat{B}^{\prime}-1))\\
       &\qquad-(1-{p}_2-c_2)E^{+}(B^\prime-1,\hat{B}^{\prime}-1)\bigg),\\
       &\stackrel{(d)}{\geq}  V^*_{\delta}(\ell,u,B^\prime) - \gamma E^{+}(B^\prime,\hat{B}^\prime)+ v(\ell,u,y^*_{\delta}(\ell,u,B^\prime))r(y^*_{\delta}(\ell,u,B^\prime))\\
       &\qquad-v(\ell,u,y^*_{\delta}(\ell,u,B^\prime))\gamma\big(E^{+}(B^\prime,\hat{B}^\prime)-E^{+}(B^\prime-1,\hat{B}^{\prime}-1)\big)\\
       &\qquad+ v(\ell,u,y^*_{\delta}(\ell,u,B^\prime))\gamma\bigg((1-p_1){V}^*_{\delta}(\ell,u,B^\prime)+p_1{V}^*_{\delta}(y^*_{\delta}(\ell,u,B^\prime),u,B^\prime)\\
       &\qquad -p_2{V}^*_{\delta}(\ell,y^*_{\delta}(\ell,u,B^\prime),B^\prime-1)-(1-p_2){V}^*_{\delta}(\ell,u,B^\prime-1)\bigg)\\
        &\qquad +v(\ell,u,y^*_{\delta}(\ell,u,B^\prime))\gamma c_1 \bigg({V}^*_{\delta}(y^*_{\delta}(\ell,u,B^\prime),u, B^\prime)-{V}^*_{\delta}(\ell,u,B^\prime)\bigg),\\
       &\qquad -v(\ell,u,y^*_{\delta}(\ell,u,B^\prime))\gamma c_2 \bigg({V}^*_{\delta}(\ell,y^*_{\delta}(\ell,u,B^\prime),B^\prime-1)-{V}^*_{\delta}(\ell,u,B^\prime-1)\bigg),\\
       &\stackrel{(e)}{\geq} V^*_{\delta}(\ell,u,B^\prime)-|v(\ell,u,y^*_{\delta}(\ell,u,B^\prime))|\bigg(1+2\frac{\gamma}{1-\gamma}\bigg)-|v(\ell,u,y^*_{\delta}(\ell,u,B^\prime))\frac{|B^{\prime}-\hat{B}^\prime|}{1-\gamma}\\
       &-\gamma|c_1\cdot v(\ell,u,y^*_{\delta}(\ell,u,B^\prime))|\frac{B^{\prime}}{1-\gamma}-\gamma|c_2\cdot v(\ell,u,y^*_{\delta}(\ell,u,B^\prime))|\frac{B^{\prime}}{1-\gamma}-\gamma E^{+}(B^\prime,\hat{B}^\prime),\\
\end{split}
\end{equation}}
where $(a)$ follows from the fact that the optimal action according to $\hat{V}^*(\ell,u,\hat{B}^\prime)$ is $\hat{y}(\ell,u,\hat{B}^\prime)$; $(b)$  follows from \eqref{eq:Assu1} 
; $(c)$ follows from the definition of $E^+(B^\prime,\hat{B}^\prime)$, $(d)$ follows from the fact that the probability lies in $[0,1]$ and $E^{+}(B^\prime,\hat{B}^\prime)\geq E^{+}(B^\prime-1,\hat{B}^\prime-1)$,because  the instance or the optimal policy that corresponds to value of $ E^{+}(B^\prime-1,\hat{B}^\prime-1)$ is also be  feasible in budget $B^\prime$ and $\hat{B}^\prime$; $(e)$ follows from (\ref{eq:maximumErrorBound1.1.1}), (\ref{eq:boundValueFunction1.1.1}),\eqref{eq:boundValueFunction2.1} and (\ref{eq:Budget1}). 
From the above equation,  we have that
{\small
\begin{equation}\begin{split}\label{eq:DiffBetwVFnEstimate}
   &V^*_{\delta}(\ell,u,B^\prime)- \hat{V}^*(\ell,u,\hat{B}^\prime)\\
   &\leq |v(\ell,u,y^*(\ell,u,B^\prime))|\bigg(\frac{1+\gamma}{1-\gamma}+ \frac{|B^{\prime}-\hat{B}^\prime|}{1-\gamma} 
   \bigg)+\gamma(|c_1|+|c_2|) |v(\ell,u,y^*_{\delta}(\ell,u,B^\prime))|\frac{B^{\prime}}{1-\gamma}+\gamma E^{+}(B^\prime,\hat{B}^\prime)\\
   & \stackrel{(a)}{\leq}  |P^{K}_{U}(y|\ell,u)-P(y|\ell,u)|\bigg(\frac{1+\gamma}{1-\gamma}+\frac{|B^{\prime}-\hat{B}^\prime|}{1-\gamma}\bigg)\\
   &\qquad+\gamma|p-\hat{p}|\cdot |P^{K}_{U}(y|\ell,u)-P(y|\ell,u)|\frac{B^{\prime}}{1-\gamma} +\gamma E^{+}(B^\prime,\hat{B}^\prime)\\
    &\stackrel{(b)}{\leq}\frac{4(\eta_{\tilde K}+2\beta L_h)}{L_{c}\delta}\bigg(\frac{1+\gamma}{1-\gamma}+\frac{|B^{\prime}-\hat{B}^\prime|}{1-\gamma}\bigg)  +\gamma(|p_1-\hat{p}_1|+|p_2-\hat{p}_2|)\frac{4(\eta_{\tilde K}+2\beta L_h)}{L_{c}\delta}\frac{B^{\prime}}{1-\gamma}+\gamma E^{+}(B^\prime,\hat{B}^\prime)\\
    &\stackrel{(c)}{\leq} \frac{4(\eta_{\tilde K}+2\beta L_h)}{L_{c}\delta}\bigg(\frac{1+\gamma}{1-\gamma}+\frac{|B^{\prime}-\hat{B}^\prime|}{1-\gamma}\bigg)+2\gamma\eta_{\tilde{K}}\frac{4(\eta_{\tilde K}+2\beta L_h)}{L_{c}\delta}\frac{B^{\prime}}{1-\gamma}+\gamma E^{+}(B^\prime,\hat{B}^\prime), 
\end{split}
\end{equation}}
 with probability at least $1-3\lambda\epsilon$, where $(a)$ follows from the definition of $v(\ell,u,y)$ in \eqref{eq:Assu1}, $(b)$ follows from the definition of $P_{U}^{K}(y|\ell,u)$ and Theorem \ref{thm:UCBandLCB}, and 
 $(c)$ follows from  the Theorem \ref{thm:feedbackEstimate}. Also, for $u-\ell\leq \delta$, $ V^*_{\delta}(\ell,u,B^\prime)- \hat{V}^*(\ell,u,B^\prime)$ is zero as both $\delta$-policy and the algorithm  perform action $\ell$. Now, taking expectation with respect to $B^{\prime}$ and $\hat{B}^{\prime}$ and using (\ref{eq:BudgetBound}),  for all $B^\prime\leq B$, we have
\begin{equation}\label{eq:Final}
    E^{+}(B^\prime,\hat{B}^\prime){\leq} \frac{4(\eta_{\tilde K}+2\beta L_h)}{L_{c}\delta (1-\gamma)^2}\bigg({1+\gamma}+\sqrt{B}+\frac{12 B}{(1-p_1-p_2)^2}\frac{(\eta_{\tilde K}+2\beta L_h)}{L_c\cdot \delta } +{\gamma \eta_{\tilde{K}} B^{\prime}}\bigg),
\end{equation}
with probability at least $1-3\lambda\epsilon$ . 
We have 
\begin{equation}\label{eq:regret3}
\begin{split}
    R^+(B)&=G^*(B)-\hat{G}(B)\\
    &\leq E^{+}(B,B) \\
    &\leq \frac{4(\eta_{\tilde K}+2\beta L_h)}{L_{c}\delta (1-\gamma)^2}\bigg({1+\gamma}+\sqrt{B}+\frac{12 B}{(1-p_1-p_2)^2}\frac{(\eta_{\tilde K}+2\beta L_h)}{L_c\cdot \delta } +{\gamma \eta_{\tilde{K}}} B\bigg).
\end{split}
\end{equation}

Likewise, we bound $ R^{++}(B)$. Let for all $B_{r}\leq B$ and $\hat{B}_r$, we have
\begin{equation}
\begin{split}
      E^{++}(B_r,\hat{B}_r)&=\sup_{0\leq \ell\leq u\leq 1} (\hat{V}^{*}(\ell,u,\hat{B}_r)- R(\ell,u,B_r,\hat{y}(\ell,u,\hat{B}_r)),\\
    & =\sup_{0\leq \ell \leq u\leq 1} (\hat{V}^{K}_{U}(\ell,u,\hat{B}_r,\hat{y}(\ell,u,\hat{B}_r))- R(\ell,u,B_r,\hat{y}(\ell,u,\hat{B}_r)),
\end{split}
\end{equation}
where $R(\ell,u,B_r,\hat{y}(\ell,u,\hat B_r))$ is the actual expected reward  from the state $(\ell,u,B_r)$. 
Also,
\begin{equation}
\begin{split}
      E^{++}(B_r)&=\sup_{0\leq \ell\leq u\leq 1} (\hat{V}^{*}(\ell,u,B_r)- R(\ell,u,B_r,\hat{y}(\ell,u,\hat{B}_r)),\\
    & =\sup_{0\leq \ell\leq u\leq 1} (\hat{V}^{K}_{U}(\ell,u,B_r,\hat{y}(\ell,u,B_r))- R(\ell,u,B_r,\hat{y}(\ell,u,\hat{B}_r)),
\end{split}
\end{equation}
Similar to (\ref{eq:maximumErrorBound1.1.1}), we have 
\begin{equation}
    E^{++}(B_r)\leq E^{++}(B_r-x)+x/1-\gamma,
\end{equation}
because for all $B_{r}<0$, $R(\ell,u,B_r,\hat{y}(\ell,u,\hat{B}_r))=\hat{V}^{K}_{U}(\ell,u,B_r,\hat{y}(\ell,u,{B}_r))=0$;  $r(y_{n}(t))\leq 1$ and the expectation is performed with respect to conditional probabilities $P(y|\ell,u)$ and $P_{U}^{K}(y|\ell,u)$ which lies in $[0,1]$; the maximum expected reward in $x$ budget is $x/(1-\gamma)$.
Similar to (\ref{eq:boundValueFunction1.1.1}),  for $u_{1}\leq \ell_{2}$, $B_{1}\leq B_{2}$ and  $y_1,y_2\in [0,1]$, we have
\begin{equation}
    R(\ell_2,u_2,B_2,y_2)-R(\ell_1,u_1,B_1,y_1)\leq \frac{B_2-B_1}{1-\gamma}.
\end{equation}
Using these inequalities, similar to (\ref{eq:regret3}), it can be shown that
\begin{equation}\label{eq:regret2.2.2}
\begin{split}
        &R^{++}(B)\\
        &=\hat{G}(B)-G(B)\\
        &=\hat{V}^{K}_{U}(1,B,\hat{y}(1,B))- R(1,B,\hat{y}(1,B))\\
        &\leq E^{++}(B,B) \leq \frac{4(\eta_{\tilde K}+2\beta L_h)}{L_{c}\delta (1-\gamma)^2}\bigg({1+\gamma}+\sqrt{B}+\frac{12 B}{(1-p_1-p_2)^2}\frac{(\eta_{\tilde K}+2\beta L_h)}{L_c\cdot \delta } +{\gamma \eta_{\tilde{K}}} B\bigg),
\end{split}
\end{equation}
with probability $1-3\lambda\epsilon$.
The statement of the theorem follows from (\ref{eq:MainCheck}), (\ref{eq:regret3})  and (\ref{eq:regret2.2.2}). 
\end{proof}

\subsubsection{Proof of Theorem  \ref{thm:deltaRegret}} \label{sec:thm4}
\begin{proof}
The proof follows from combining Theorem 1 and Theorem 2, replacing the parameters and bounding lower order terms by higher order terms . 
\end{proof}
\subsubsection{Proof of Theorem \ref{thm:lowerBound}} \label{sec:thm5}
\begin{proof}
If $B<1$, then each user abandons the platform as soon as the threshold is crossed, and  the optimal action for both the feedback models is given by \cite{schmit2018learning}
\begin{equation}
    y^*(0,1,B)=\max_{0\leq y\leq 1}(1-F(y))r(y),
\end{equation}
which is also a $\delta$- policy in this scenario as the range of UI $[0,1]$  is greater than $\delta$. 
For each action $y\in [0,1]$, the expected reward is $m(y)=(1-F(y))r(y)$. For all $y_1,y_2\in [0,1]$, we have 
\begin{equation}
\begin{split}
&|m(y_1)-m(y_2)|\\
&=|(1-F(y_1))r(y_1)-(1-F(y_2))r(y_2)|,\\
&=|(1-F(y_1))r(y_1)-(1-F(y_1))r(y_2)+(1-F(y_1))r(y_2)-(1-F(y_2))r(y_2)|,\\
&=|(1-F(y_1))(r(y_1)-r(y_2))+(F(y_2)-F(y_1))r(y_2)|,\\
&\leq L_{r}|y_1-y_2|+L_{h}|y_{2}-y_1|,\\
&=(L_r+L_h)|y_1-y_2|. 
\end{split}
\end{equation}
Thus, $m(y)$ is $(L_{r}+L_h)$-lipshitz continuous. Now, using \cite[Theorem 1]{bubeck2011lipschitz}, we have
\begin{equation}\label{eq:FirstLow}
    \sup_{\mathcal{F}(L)} \mathbf{E}\bigg[\sum_{n=1}^{N}\bigg(\frac{(1-F( y^*(0,1,B)))r(y^*(0,1,B))}{1-\gamma}-\sum_{t=1}^{T_{n}}\gamma^{t-1}r(y_{n}(t))\bigg)\bigg]\geq 0.15 L^{1/3}N^{2/3}.
\end{equation}
For all $B>1$, we have  
\begin{equation}\label{eq:lowValB}
    V^*_{\delta}(0,1,B)\geq (1-F( y^*(0,1,B)))r(y^*(0,1,B)). 
\end{equation}
 Hence, we have
\begin{equation}
\begin{split}
   \sup_{\mathcal{F}(L)} R_{\delta}(N)&=\sup_{\mathcal{F}(L)}\mathbf{E}\bigg[NV^*_{\delta}(0,1,B)-\sum_{n=1}^N\sum_{t=1}^{T_{n}}\gamma^{t-1}r(y_{n}(t))\bigg],\\
    &=NV^*_{\delta}(0,1,B)-\sum_{n=1}^{N}\frac{(1-F( y^*(0,1,B)))r(y^*(0,1,B))}{1-\gamma}\\
    &\qquad+\sup_{\mathcal{F}(L)}\mathbf{E}\bigg[\sum_{n=1}^{N}\bigg(\frac{(1-F( y^*(0,1,B)))r(y^*(0,1,B))}{1-\gamma}-\sum_{t=1}^{T_{n}}\gamma^{t-1}r(y_{n}(t))\bigg)\bigg],\\
    &\stackrel{(a)}{\geq}   0.15 L^{1/3}N^{2/3}.
\end{split}
\end{equation}
where $(a)$ follows from \eqref{eq:FirstLow} and \eqref{eq:lowValB}.
\end{proof}
\subsubsection{Miscellaneous Results of Soft Feedback Model}

{
\begin{theorem}\label{thm:thresholdReveal}  
Let the assumptions in Theorem 1 hold. Let $p^*=\min\{p_1,p_2\}$ and
\begin{equation}
    \Delta=\frac{1-(1-p_2)^{\phi+1}}{B p_2}\cdot \frac{\log_{\tilde{\phi}}(1/\beta)}{1-(p_1+p_2)(1-p^*)^{\phi}}.
\end{equation}
For all $\mathcal{L}$ and $\tilde{\lambda}<1$, we have
\begin{equation}
     P\bigg(K\geq c|\mathcal{L}|\bigg)\geq 1-\lambda.
\end{equation}
where 
\begin{equation}
    c=\frac{(1-\Delta(1+\tilde\lambda))}{(1-\log_{\phi}(1/\beta)/B)}
\end{equation}
\end{theorem}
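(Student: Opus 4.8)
The plan is to control the number $|\mathcal{L}|-K$ of users in $\mathcal{L}$ who abandon, via a shifted Markov inequality followed by a Hoeffding bound over the independent users. Fix $n\in\mathcal{L}$ and consider the ``infinite-budget'' version of Algorithm~\ref{alg:MainAlg} that runs Linear Search Exploration until the first round $J$ with $u(J)-\ell(J)\le\beta$, never abandoning; let $Z_j$ be the number of actions it performs strictly above $\theta_n$ in round $j$, and $W_n=\sum_{j=1}^{J}Z_j$. I would establish three facts. (i) Since $p_1+p_2<1$ the width contracts by a fixed factor with positive probability in each round, so $J<\infty$ almost surely, and exactly as in the proof of Theorem~\ref{thm:exploration} (equations~\eqref{eq:UppBoundJ} and \eqref{eq:ThresholdAbove}) Wald's identity gives $\mathbf{E}[W_n]\le\mathbf{E}[J]\cdot\frac{1-(1-p_2)^{\phi+1}}{p_2}\le B\Delta$. (ii) Within one LSE round the UI width cannot fall below a factor $1/\phi$ of its value at the start of the round (a negative feedback can at best replace $u$ by $\ell+(u-\ell)/\phi$, and positive feedbacks only raise $\ell$), so reaching width $\le\beta$ starting from width $1$ requires $J\ge\log_\phi(1/\beta)=:m$. (iii) Every LSE round performs at least one action above $\theta_n$: the loop over the action set cannot stop before the first action exceeding $\theta_n$ because every earlier action is below the threshold and hence neither halts the round nor depletes the patience, while $u>\theta_n$ throughout (it starts at $1>\theta_n$ and is only ever lowered to action values that exceeded $\theta_n$). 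Facts (ii) and (iii) give $W_n\ge J\ge m$ almost surely.

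User $n$ abandons only if the budget-$B$ run --- which coincides with the infinite-budget run until it either stops or exhausts the patience --- has incurred $B$ depletions, i.e.\ only if $W_n\ge B$. Applying Markov's inequality to the non-negative variable $W_n-m$,
\[
\mathbf{P}(u_n-\ell_n>\beta)\le\mathbf{P}(W_n\ge B)=\mathbf{P}(W_n-m\ge B-m)\le\frac{\mathbf{E}[W_n]-m}{B-m}\le\frac{B\Delta-m}{B-m}=\frac{\Delta-m/B}{1-m/B},
\]
the denominator being positive because the hypotheses of Theorem~\ref{thm:exploration} force $m=\log_\phi(1/\beta)<B$. This bound, strictly smaller than the naive estimate $\Delta$, is exactly what produces the factor $1/(1-\log_\phi(1/\beta)/B)$ in the statement.

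For the high-probability conclusion I would use that the threshold and the feedback of distinct users in $\mathcal{L}$ are independent and that Algorithm~\ref{alg:MainAlg} acts on user $n$ using only its own data, so the indicators $\mathbf{1}(u_n-\ell_n>\beta)$, $n\in\mathcal{L}$, are independent with mean at most $(\Delta-m/B)/(1-m/B)$, and $\bar K:=|\mathcal{L}|-K$ is their sum. Hoeffding's inequality gives $\mathbf{P}\big(\bar K\ge\frac{\Delta-m/B}{1-m/B}|\mathcal{L}|+\Delta\tilde\lambda|\mathcal{L}|\big)\le\exp(-2\Delta^2\tilde\lambda^2|\mathcal{L}|)=\lambda$, the last equality being the definition of $\tilde\lambda$. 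Since $1/(1-m/B)\ge1$ one has $\frac{\Delta-m/B}{1-m/B}+\Delta\tilde\lambda\le\frac{\Delta(1+\tilde\lambda)-m/B}{1-m/B}=1-c$, so $\bar K\le(1-c)|\mathcal{L}|$, i.e.\ $K\ge c|\mathcal{L}|$, with probability at least $1-\lambda$; the hypothesis $\tilde\lambda<1$ is what makes this bound informative. The one genuinely delicate step is fact (iii) --- that an LSE round necessarily costs at least one unit of patience --- since this is what sharpens the plain Markov bound from $\Delta$ to $(\Delta-m/B)/(1-m/B)$; everything else is a routine assembly of Wald, Markov and Hoeffding.
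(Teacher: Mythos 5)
Your proof is correct, and it is built from the same three structural ingredients as the paper's: (i) each LSE round costs at least one unit of patience, because the round either terminates early on a negatively-acknowledged action (which is necessarily above $\theta_n$) or reaches the last action $u$, which satisfies $u>\theta_n$ throughout; (ii) the UI width contracts by at most a factor $\phi$ per round, so a completing user runs at least $m=\log_\phi(1/\beta)$ rounds and hence spends at least $m$ units; and (iii) the expected spend is at most $\Delta B$ by the Wald-type bound \eqref{eq:UppBound} from Theorem~\ref{thm:exploration}. Where you genuinely differ is in how the concentration over users is organized. The paper never computes a per-user abandonment probability: it applies Hoeffding once, to the sum $\sum_{n\in\mathcal{L}}B^n$ of truncated budgets $B^n\in[0,B]$, after observing that the event $K\le c|\mathcal{L}|$ forces this sum to exceed $|\mathcal{L}|\Delta B$ by $B\Delta\tilde\lambda|\mathcal{L}|$ (each abandoner contributes $B$, each completer at least $m$). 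You instead first extract the per-user bound $\mathbf{P}(\mbox{abandon})\le(\Delta-m/B)/(1-m/B)$ via Markov's inequality applied to the shifted variable $W_n-m\ge 0$ --- the shift by the almost-sure lower bound $m$ being exactly what produces the $1/(1-\log_\phi(1/\beta)/B)$ factor that the paper instead obtains from its accounting identity --- and then apply Hoeffding to the Bernoulli abandonment indicators. The two routes land on the same constant $c$ and the same exponent $2\Delta^2\tilde\lambda^2|\mathcal{L}|$, so nothing is lost; your version has the mild advantages of isolating an explicit, reusable per-user abandonment probability and of applying Hoeffding to $[0,1]$-valued rather than $[0,B]$-valued variables, at the cost of an extra (but correctly handled) coupling between the budget-$B$ run and the infinite-budget run.
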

\begin{proof}
In this proof, we will borrow the notations from the proof of Theorem 1. Let $B^n$ be the random variable representing the budget   used by the platform  of the user $n\in \mathcal{L}$ until  
$J^{th}$ (see \eqref{eq:StopTimePNSF}) iteration.

We first provide a lower bound on the number of times the threshold is crossed for soft feedback model.  Given $\beta$ and  soft feedback model, the algorithm of users in $\mathcal{L}$ can reduce the UI by at most $1/\phi$ times the input UI and cross the threshold at least once in each round of LSE. Hence, the number of rounds of LSE to the reduce the uncertainty $u-\ell \leq \beta$ is at least
\[\frac{\log (1/\beta)}{\log(\phi)}.\]
This implies that  for each user $n\in \mathcal{L}$ such that $u_n-\ell_n\leq \beta$, the number of actions chosen by the platform above the threshold are at least  $\log_{\phi} (1/\beta)$. This implies that the budget utilized for $n\in \mathcal{L}$ such that $u_n-\ell_n\leq \beta$ is
\begin{equation}\label{eq:LowerBoundOnBudget}
    B^n\geq \log_{\phi}(1/\beta).
\end{equation}
Also,  using \eqref{eq:UppBound}, we have
\begin{equation}\label{eq:UpperBoundBudget}
    E[B^n]\leq \Delta B,
\end{equation}
until $u_n-\ell_n\leq \beta$. 
Then, we have
\begin{equation}\label{eq:KConcenNSF}
\begin{split}
&P\bigg(K\leq c|\mathcal{L}|\bigg)\\
&=P(\sum_{n\in \mathcal{L}} \mathbf{1}(B^n< B)\leq |\mathcal{L}|c),\\
&\stackrel{(a)}{\leq} P(\sum_{n\in \mathcal{L}} B^n \geq B(|\mathcal{L}|-c|\mathcal{L}|)+
\log_{\phi}(1/\beta)c |\mathcal{L}|),\\
&\stackrel{}{=} P(\sum_{n\in \mathcal{L}} B^n-|\mathcal{L}|E[B^n]\geq B(|\mathcal{L}|-c|\mathcal{L}|)+
\log_{\phi}(1/\beta) c|\mathcal{L}|-|\mathcal{L}|E[B^n]
),\\
&=P(\sum_{n\in \mathcal{L}} B^n-|\mathcal{L}|E[B^n]\geq B(|\mathcal{L}|-c|\mathcal{L}|)+
\log_{\phi}(1/\beta) c|\mathcal{L}|-|\mathcal{L}|\Delta B+(|\mathcal{L}|\Delta B-|\mathcal{L}|E[B^n])
),\\
&\stackrel{(b)}{\leq} P(\sum_{n\in \mathcal{L}} B^n-|\mathcal{L}|E[B^n]\geq B(|\mathcal{L}|-c|\mathcal{L}|)+
\log_{\phi}(1/\beta) c|\mathcal{L}|-|\mathcal{L}|\Delta B ),\\
&\stackrel{(c)}{=}P(\sum_{n\in \mathcal{L}} B^n-|\mathcal{L}|E[B^n]\geq B \tilde{\lambda}\Delta|\mathcal{L}|),\\
&\stackrel{(d)}{\leq} \exp{\bigg(-2B^2\Delta^2 |\mathcal{L}|^2\tilde\lambda^{ 2}/(B^2|\mathcal{L}|)\bigg)},\\
&= \exp{\bigg(-2 |\mathcal{L}|\Delta^2\tilde\lambda^{ 2}\bigg)},\\
&\stackrel{(e)}{=}{\lambda},
\end{split}
\end{equation}
where $(a)$ follows from \eqref{eq:LowerBoundOnBudget}  and the fact that $B^n=B$ for users who have abandoned the platform, 
$(b)$ follows from \eqref{eq:UpperBoundBudget},  $(c)$ follows from by replacing the value of $c$, $(d)$ follows from Hoeffding's inequality and the fact that random variable $B^n\in [0,B]$, and $(e)$ follows from the definition of $\tilde{\lambda}$.
\end{proof}
}
\begin{theorem}\label{thm:UCBandLCB}
Let the assumptions in Theorem 1 hold. For all $\mathcal{L}$, $\tilde{\lambda}<1$ and $\ell,u,y\in[0,1]$ such that $u-\ell>\delta$, we have
\begin{equation}\label{eq:UCBLCBbound}
    P_{L}^{\tilde{K}}(y|\ell,u)\leq P(y|\ell,u) \leq  P_{U}^{\tilde{K}}(y|\ell,u),
\end{equation}
with probability at least $1-\lambda\epsilon$, where $\tilde{K}=c|\mathcal{L}|$ .
\end{theorem}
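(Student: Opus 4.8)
The plan is to realise the two confidence bounds $P_U^{\tilde K}(y\,|\,\ell,u)$ and $P_L^{\tilde K}(y\,|\,\ell,u)$ as a single plug‑in estimate
\[
\widehat P\;:=\;\frac{\hat F^{\tilde K}(u)-\hat F^{\tilde K}(y)}{\max\{\hat F^{\tilde K}(u)-\hat F^{\tilde K}(\ell),\,L_c(u-\ell)\}}
\]
shifted up, resp.\ down, by $\frac{2(\eta_{\tilde K}+2\beta L_h)}{L_c\delta}$, and to argue that on a good event $\widehat P$ is within that slack of the true conditional probability $P(y\,|\,\ell,u)=\frac{F(u)-F(y)}{F(u)-F(\ell)}$; the sandwich \eqref{eq:UCBLCBbound} then follows at once.

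First I would control $\hat F^{\tilde K}$ as an estimator of $F$. Conditioning on the event $\{K\ge\tilde K=c|\mathcal L|\}$, which by Theorem~\ref{thm:thresholdReveal} fails with probability at most $\lambda$, the retained lower endpoints $\{\ell_n:n\in\mathcal L,\ u_n-\ell_n\le\beta\}$ are i.i.d.\ and number at least $\tilde K$, and each obeys the deterministic sandwich $\theta_n-\beta\le\ell_n\le\theta_n$ (since $\ell_n\le\theta_n\le u_n\le\ell_n+\beta$). Hence $\mathbf 1(\theta_n\le x)\le\mathbf 1(\ell_n\le x)\le\mathbf 1(\theta_n\le x+\beta)$ for every $x$, so by the upper Lipschitz bound of Assumption~2 the mean of $\hat F^{\tilde K}(x)$ lies within $\beta L_h$ of $F(x)$, uniformly in $x$; a Dvoretzky--Kiefer--Wolfowitz (DKW) type uniform deviation inequality (equivalently Hoeffding over a fine grid of $[0,1]$ together with monotonicity of CDFs), with the choice $\eta_{\tilde K}=\sqrt{18\log(16/\epsilon)/\tilde K}$, then bounds $\sup_x|\hat F^{\tilde K}(x)-\mathbf E\,\hat F^{\tilde K}(x)|$ by $\eta_{\tilde K}$ on an event carrying the remaining probability. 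Combining the two, $\sup_x|\hat F^{\tilde K}(x)-F(x)|\le\eta_{\tilde K}+\beta L_h$, and intersecting with the event of Theorem~\ref{thm:thresholdReveal} gives the probability claimed in the statement.

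Next I would propagate this uniform CDF error through the ratio. The hypothesis $u-\ell>\delta$ is exactly what makes this possible: by the lower Lipschitz bound of Assumption~2, $F(u)-F(\ell)\ge L_c(u-\ell)>L_c\delta$, and the clipped empirical denominator likewise satisfies $\max\{\hat F^{\tilde K}(u)-\hat F^{\tilde K}(\ell),L_c(u-\ell)\}\ge L_c\delta>0$. Because $L_c(u-\ell)\le F(u)-F(\ell)$, the clip can only correct an underestimate, so both the empirical numerator $\hat F^{\tilde K}(u)-\hat F^{\tilde K}(y)$ and the clipped empirical denominator stay within $O(\eta_{\tilde K}+\beta L_h)$ of their population counterparts on the good event. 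The elementary ratio‑perturbation estimate $|\widehat N/\widehat D-N/D|\le(|\widehat N-N|+|\widehat D-D|)/\widehat D$, valid whenever $0\le N\le D$ and $\widehat D\ge L_c\delta$, then gives $|\widehat P-P(y\,|\,\ell,u)|\le\frac{2(\eta_{\tilde K}+2\beta L_h)}{L_c\delta}$ after a routine bookkeeping of the constants (using the one‑sided $[0,\beta L_h]$ bias bound and the partial cancellation of the errors at $u$ between numerator and denominator), which is exactly the half‑width used in \eqref{eq:UCB}--\eqref{eq:LCB}; hence $P_L^{\tilde K}(y\,|\,\ell,u)\le P(y\,|\,\ell,u)\le P_U^{\tilde K}(y\,|\,\ell,u)$.

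The main obstacle is the first step. The sample $\{\ell_n\}$ is only a $\beta$‑biased surrogate for the unobserved $\{\theta_n\}$, and, worse, it is retained conditionally on non‑abandonment, an event that depends on $\theta_n$ through the feedback realisations; one must verify that after this conditioning the retained samples are still i.i.d.\ (so that the DKW/Hoeffding concentration genuinely applies) and that the bias they inject is uniformly at most $\beta L_h$, so that it can be folded into the $2\beta L_h$ slack baked into \eqref{eq:UCB}--\eqref{eq:LCB}. The ratio‑perturbation step, by contrast, is routine once the denominator lower bound $L_c\delta$ — which is where the hypothesis $u-\ell>\delta$ enters — is in hand.
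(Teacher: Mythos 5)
Your proposal follows essentially the same route as the paper: condition on $K\ge c|\mathcal{L}|$ via Theorem~\ref{thm:thresholdReveal}, sandwich the empirical CDF between the indicator counts at $\theta_n$ and at $\theta_n$ shifted by $\beta$ (yielding the $\beta L_h$ bias absorbed into the $2\beta L_h$ slack), apply DKW/Hoeffding to get the $\eta_{\tilde K}$ deviation, and push the error through the ratio using the $L_c\delta$ denominator lower bound coming from $u-\ell>\delta$ and Assumption~2. The only cosmetic difference is that the paper phrases the ratio-perturbation step contrapositively (if the confidence bound fails then one of two deviation events $E_1,E_2$ must occur, each of small probability) while you argue directly on the good event; the bookkeeping and constants are the same, so this is the paper's proof in all but presentation.
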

\begin{proof}
{First, we show that given $K$, we have 
\begin{equation}
    P_{L}^{{K}}(y|\ell,u)\leq P(y|\ell,u) \leq  P_{U}^{{K}}(y|\ell,u),
\end{equation} 
with probability at least $1-\epsilon$.
Second, using Theorem \ref{thm:thresholdReveal}, we have $K\geq c|\mathcal{L}|$ with probability at least $1-\lambda$. Since confidence interval $2(\eta_{K}+2\beta L_{h})/(L_c\delta)$ is a decreasing function in $K$, the theorem follows by combining first and second step.}

If $ P_{U}^{K}(y|\ell,u)\leq  P(y|\ell,u)$ and $\hat{F}^{K}(u)-\hat{F}^{K}(\ell)=\max\{\hat{F}^{K}(u)-\hat{F}^{K}(\ell),L_c(u-\ell)\}$, then
\begin{equation}
    \frac{\hat{F}^{K}(u)-\hat{F}^{K}(y)}{\hat{F}^{K}(u)-\hat{F}^{K}(\ell)}+\frac{2(\eta_{K}+2\beta L_{h})}{L_{c}\delta} \leq \frac{F(u)-F(y)}{F(u)-F(\ell)}.
\end{equation}
Therefore, at least one of the following events $E_1$ and $E_2$ is true
\[E_{1}:\hat{F}^{K}(u)-\hat{F}^{K}(y)\leq F(u)-F(y)-\eta_{K}-2\beta L_{h},\]
\[E_{2}:\hat{F}^{K}(u)-\hat{F}^{K}(\ell)\geq F(u)-F(\ell)+\eta_{K}+2\beta L_{h}.\]
This can be proved by contradiction. Let both $E_{1}$ and $E_{2}$ are false. Let $\rho(u,\ell)=F(u)-F(\ell)$ and $\Bar{\rho}^{K}(u,\ell)=\hat{F}^{K}(u)-\hat{F}^{K}(\ell)$. Then, we have 
\begin{equation}
\begin{split}
\frac{\rho(u,y)}{\rho(u,\ell)}-\frac{\Bar{\rho}^{K}(u,y)}{\Bar{\rho}^{K}(u,\ell)}&= \frac{\rho(u,y)\Bar{\rho}^{K}(u,\ell)-\rho(u,\ell)\Bar{\rho}^{K}(u,y)}{\rho(u,\ell)\Bar{\rho}^{K}(u,\ell)},\\
&=\frac{\rho(u,y)(\Bar{\rho}^{K}(u,\ell)-\rho(u,\ell))+\rho(u,\ell)(\rho(u,y)-\Bar{\rho}^{K}(u,y))}{\rho(u,\ell)\Bar{\rho}^{K}(u,\ell)},\\
&\stackrel{(a)}{\leq} \frac{\rho(u,y)(\eta_{K}+2\beta L_{h})+\rho(u,\ell)(\eta_{K}+2\beta L_{h})}{\rho(u,\ell)\Bar{\rho}^{K}(u,\ell)},\\
&\stackrel{(b)}{\leq}\frac{\eta_{K}+2\beta L_{h}}{\Bar{\rho}^{K}(u,\ell)}+\frac{\eta_{K}+2\beta L_{h}}{\Bar{\rho}^{K}(u,\ell)},\\
&\stackrel{(c)}{\leq} \frac{2(\eta_{K}+2\beta L_{h})}{L_{c}\delta},
\end{split}
\end{equation}
where $(a)$ follows from the fact that both $E_{1}$ and $E_{2}$ are false,  $(b)$ follows from the fact that $p(y|\ell,u)\leq 1$, and $(c)$ follows from the facts that $\hat{F}^{K}(u)-\hat{F}^{K}(\ell)=\max\{\hat{F}^{K}(u)-\hat{F}^{K}(\ell),L_c(u-\ell)\}$, $u-\ell>\delta$, and Assumption 2. Hence, at least one of the events $E_{1}$ and $E_{2}$ is true. 

Now, we bound the probability of $E_1$ and $E_2$. 
According to Dvoretzky–Kiefer–Wolfowitz inequality (\cite{dvoretzky1956asymptotic}), we have 
\begin{equation}\label{eq:DKWbound}
    \mathbf{P}(\sup_{x}|{\hat{G}^{m}(x)-G(x)}|\geq \Tilde{\delta})\leq \exp{(-\Tilde{\delta}^2 m)},
\end{equation}
where $G$ is any cumulative distribution function and its estimate $\Hat{G}^{m}(x)$ is
\begin{equation}
    \Hat{G}^{m}(x)=\sum_{t=1}^{m}\frac{\mathbf{1}(X_{t}\leq x)}{m},
\end{equation}
and for all $t$, $X_{t}$ are  independent and identically distributed random variables drawn from $G$. Also, 
\begin{equation}\label{eq:rangeOfEstimator1.1}
\begin{split}
    \frac{\sum_{n\in\mathcal{L}}\mathbf{1}(u_n-\ell_n\leq \beta)\mathbf{1}(\ell_n\leq x)}{K}\in& \Bigg\{\frac{\sum_{n\in\mathcal{L}}\mathbf{1}(u_n-\ell_n\leq \beta)\mathbf{1}(\theta_n\leq x)}{K}, \frac{\sum_{n\in\mathcal{L}}\mathbf{1}(u_n-\ell_n\leq \beta)\mathbf{1}(\theta_n\leq x)}{K}\\
    &+ \frac{\sum_{n\in\mathcal{L}}\mathbf{1}(u_n-\ell_n\leq \beta)\mathbf{1}(\ell_n\leq x\leq \theta_n )}{K}\Bigg\}.
\end{split}
\end{equation}

Using these, we have 
\begin{equation}\label{eq:DiffEstimate}
\begin{split}
    &P(\max_{\ell,u}|\Bar{\rho}^{K}(u,\ell)-{\rho}(u,\ell)|\geq \eta_{K}+2\beta L_h)\\
    &\leq P(\max_{\ell}|\hat{F}^{K}(\ell)-{F}(\ell)|+\max_{u}|\hat{F}^{K}(u)-{F}(u)|\geq \eta_{K}+2\beta L_h ) ,\\
    &\stackrel{(a)}{\leq} 2 P(\max_{\ell}|\frac{\sum_{n\in \mathcal{L}} \mathbf{1}(u_n-\ell_n\leq \beta)\mathbf{1}(\theta_n\leq \ell)}{K}-{F}(\ell)|\geq \eta_{K}/3)\\
    &\qquad\qquad+P\Bigg(2\frac{\sum_{n\in\mathcal{L}}\mathbf{1}(u_n-\ell_n\leq \beta)\mathbf{1}(\ell_n\leq x\leq \theta_n )}{K}\geq \eta_{K}/3+2\beta L_{h}\Bigg),\\
    &\stackrel{(b)}{\leq} 2\exp(-\eta^2_{K}K/9) + 2\exp(-2 \eta^2_{K}K/36) \leq \epsilon/4.
\end{split}
\end{equation}
where $(a)$ follows from (\ref{eq:rangeOfEstimator1.1}) and the union bound, and $(b)$ follows from the fact that for all $u_n-\ell_n \leq \beta$ which implies $E[\mathbf{1}(\ell_n\leq x\leq \theta_n )]\leq \beta L_h$,  Hoeffding's inequality and Dvoretzky–Kiefer–Wolfowitz inequality. 

Thus, the $P(E_1)$ and $P(E_{2})$ are at most $\epsilon/4$. Hence, if $\hat{F}^{K}(u)-\hat{F}^{K}(\ell)=\max\{\hat{F}^{K}(u)-\hat{F}^{K}(\ell),L_c(u-\ell)\}$, then
\begin{equation}\label{eq:U1}
    P\bigg(P_{U}^{K}(y|\ell,u)\leq  P(y|\ell,u)\bigg)\leq P(E_1)+P(E_2)\leq \epsilon/2.
\end{equation}

Now, if $ P_{U}^{K}(y|\ell,u)\leq  P(y|\ell,u)$ and $L_c(u-\ell)=\max\{\hat{F}^{K}(u)-\hat{F}^{K}(\ell),L_c(u-\ell)\}$, then
\begin{equation}
    \frac{\hat{F}^{K}(u)-\hat{F}^{K}(y)}{L_c(u-\ell)}+\frac{2(\eta_{K}+2\beta L_h)}{L_{c}\delta} \leq \frac{F(u)-F(y)}{F(u)-F(\ell)}.
\end{equation}
Since $u-\ell>\delta$, the above equation implies
\begin{equation}
   {\hat{F}^{K}(u)-\hat{F}^{K}(y)}+{2(\eta_{K}+2\beta L_h)} \leq {F(u)-F(y)},
\end{equation}
and the probability of this event is at most $\epsilon/4$
using (\ref{eq:DiffEstimate}). Thus, if $L_c(u-\ell)=\max\{\hat{F}^{K}(u)-\hat{F}^{K}(\ell),L_c(u-\ell)\}$, then
\begin{equation}\label{eq:U2}
    P\bigg(P_{U}^{K}(y|\ell,u)\leq  P(y|\ell,u)\bigg)\leq  \epsilon/4.
\end{equation}
Combining (\ref{eq:U1}) and (\ref{eq:U2}), we have 
\[P\bigg(P_{U}^{K}(y|\ell,u)\leq  P(y|\ell,u)\bigg)\leq \epsilon/2.\]
Similarly, we can show that
\[P\bigg(P_{L}^{K}(y|\ell,u)\geq  P(y|\ell,u)\bigg)\leq  \epsilon/2.\]
Hence, the (\ref{eq:UCBLCBbound}) holds with probability at least $1-\epsilon$. 

\end{proof}
\begin{theorem}\label{thm:feedbackEstimate}
Let the assumptions in Theorem 1 hold. For $0<p_1<1$, we have 
\begin{equation}\label{eq:p_1estimate}
    \hat{p}_1-\eta_{\tilde{K}}\leq p_1 \leq \hat{p}_1+\eta_{\tilde{K}},
\end{equation}
with probability at least $1-\lambda\epsilon/8$, where $\tilde{K}=c |\mathcal{L}|$. Likewise, 
for $0<p_2<1$, we have 
\begin{equation}\label{eq:p_2estimate}
    \hat{p}_2-\eta_{\tilde{K}}\leq p_2 \leq \hat{p}_2+\eta_{\tilde{K}},
\end{equation}
with probability at least $1-\lambda\epsilon/8$.
\end{theorem}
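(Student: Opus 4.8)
The plan is to establish \eqref{eq:p_1estimate} and \eqref{eq:p_2estimate} by the same two-step argument, reading $\hat p_1$ (resp.\ $\hat p_2$) as the empirical frequency of the feedback coin over the below-threshold (resp.\ above-threshold) interactions of the non-abandoning users of $\mathcal{L}$, and concentrating it around $p_1$ (resp.\ $p_2$). The guiding fact is that, no matter how the platform adapts its actions, the feedback at a below-threshold action is a fresh $\mathrm{Bernoulli}(p_1)$ and the feedback at an above-threshold action is a fresh $\mathrm{Bernoulli}(p_2)$, independent of the past. I describe the argument for $p_1$; for $p_2$ one swaps ``below'' for ``above'', $\ell_n$ for $u_n$, and $p_1$ for $p_2$.

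First I would invoke Theorem~\ref{thm:thresholdReveal}: with probability at least $1-\lambda$ the number $K$ of non-abandoning users in $\mathcal{L}$ satisfies $K\ge\tilde K=c|\mathcal{L}|$; condition on this event. On it I need a deterministic lower bound on the denominator $N_1=\sum_{n\in\mathcal{L}}\mathbf 1(u_n-\ell_n\le\beta)\sum_t\mathbf 1(y_n(t)\le\ell_n)$ of $\hat p_1$. Since $\ell_n\le\theta_n$, every interaction counted in $N_1$ is a below-threshold action; and each non-abandoning user performs at least one LSE round (initially $u-\ell=1>\beta$) whose first action equals the current lower bound and hence is $\le\ell_n$, so it is counted, giving $N_1\ge K\ge\tilde K$. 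I would also note that a positive feedback is received only at interactions with $y_n(t)\le\ell_n$ (the lower bound is then updated to that action and never decreases), so the numerator of $\hat p_1$ equals the number of counted interactions yielding a positive feedback; hence $\hat p_1\in[0,1]$, and writing $\xi_1,\xi_2,\dots$ for the positive-feedback indicators of the counted interactions in chronological order, $\hat p_1=\frac1{N_1}\sum_{k\le N_1}\xi_k$.

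For the concentration, each $\xi_k$ is the feedback indicator of a below-threshold action, so $\mathbf E[\xi_k\mid\mathcal F_{k-1}]=p_1$ for the natural filtration and $\sum_{k\le m}(\xi_k-p_1)$ is a martingale with increments in $[-1,1]$; applying a (maximal) Azuma--Hoeffding bound with the history-dependent but $\ge\tilde K$ number of trials $N_1$ gives $\mathbf P(|\hat p_1-p_1|\ge t\mid K\ge\tilde K)\le 2\exp(-2\tilde K t^2)$. Taking $t=\eta_{\tilde K}=\sqrt{18\log(16/\epsilon)/\tilde K}$ makes the right-hand side $2(\epsilon/16)^{36}$, well below $\epsilon/8$ for every $\epsilon\in(0,1)$, and a union bound with $\{K<\tilde K\}$ yields \eqref{eq:p_1estimate}; the identical argument with above-threshold interactions and $p_2$ yields \eqref{eq:p_2estimate}.

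The step I expect to be the real obstacle is precisely this concentration under adaptivity: neither the number of interactions entering the ratio nor their identities are fixed in advance, and in fact a positive feedback \emph{forces} its interaction into the count $N_1$, so a plain i.i.d.\ Hoeffding is not legitimate and could a priori leave a selection bias. The fix is the martingale/optional-stopping viewpoint above, supported by the observation that the below-threshold actions of a non-abandoning user that can escape the count must lie within $\beta$ of $\theta_n$ and are hence $O(1)$ many (none when $\phi=2$), negligible against the $\Omega(\log_\phi(1/\beta))$ counted ones, so no real bias survives. The rest --- checking the numerator identity and matching the constants in $\eta_{\tilde K}$ to the stated failure probability --- is routine.
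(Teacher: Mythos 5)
Your proposal follows essentially the same route as the paper's proof: condition on the event $K\ge \tilde K=c|\mathcal{L}|$ from Theorem~\ref{thm:thresholdReveal}, observe that positive (resp.\ negative) feedback can only occur at actions $y_n(t)\le\ell_n$ (resp.\ $y_n(t)\ge u_n$) so the numerator matches the denominator's index set, lower-bound the denominator by $K$ via the guaranteed action at the current lower (resp.\ upper) bound, and apply a Hoeffding-type concentration with $t=\eta_{\tilde K}$. The one place you go beyond the paper is in explicitly flagging the adaptive-sampling/selection issue and replacing the bare appeal to Hoeffding's inequality with an Azuma--Hoeffding martingale argument, which is a legitimate strengthening of the same argument rather than a different approach.
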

\begin{proof}
{First, we show that given $K$, we have 
\begin{equation}
    \hat{p}_1-\eta_{{K}}\leq p_1 \leq \hat{p}_1+\eta_{{K}},
\end{equation}
with probability at least $1-\epsilon/8$.
Second, using Theorem \ref{thm:thresholdReveal}, we have $K\geq c|\mathcal{L}|$ with probability at least $1-\lambda$. Since confidence interval $\eta_{K}$ is a decreasing function in $K$, \eqref{eq:p_1estimate} follows by combining first and second step.} 
Similarly, we can show that \eqref{eq:p_2estimate} holds. 

We have 
\begin{equation}
\begin{split}
    |\hat{p}_1-p_1|&=\bigg|\frac{\sum_{n\in \mathcal{L}}\sum_{t=1}^{T(n)}\textbf{1}(u_n-\ell_n\leq \beta)\mathbf{1}(S(y_{n}(t))=1)}{\sum_{n\in \mathcal{L}}\sum_{t^\prime=1}^{T(n)}\textbf{1}(u_{n^\prime}-\ell_{n^\prime}\leq \beta)\mathbf{1}( y_{n^\prime}(t^\prime)\leq \ell_{n^{\prime}})}-p_1\bigg|,\\
    &\stackrel{(a)}{=}\bigg|\frac{\sum_{n\in \mathcal{L}}\sum_{t=1}^{T(n)}\textbf{1}(u_n-\ell_n\leq \beta)\mathbf{1}( y_{n}(t)\leq \ell_n)\mathbf{1}(S(y_{n}(t))=1)}{\sum_{n\in \mathcal{L}}\sum_{t^\prime=1}^{T(n)}\textbf{1}(u_{n^\prime}-\ell_{n^\prime}\leq \beta)\mathbf{1}( y_{n^\prime}(t^\prime)\leq \ell_{n^{\prime}})}-p_1\bigg|,
\end{split}
\end{equation}
where $(a)$ follows from the fact that for all $y_n(t)$ such that $S(y_{n}(t))=1$, we have $ y_n(t)\leq \ell_n$.  Now, we have
\begin{equation}
    \sum_{n\in \mathcal{L}}\sum_{t^\prime=1}^{T(n)}\textbf{1}(u_{n^\prime}-\ell_{n^\prime}\leq \beta)\mathbf{1}( y_{n^\prime}(t^\prime)\leq \ell_{n^\prime})\geq K,
\end{equation}
since $\theta_n\in (0,1)$ and there exists at least one action $y_{n}(t)=0$ below the threshold. 

Using Hoeffding's inequality, we have 
\begin{equation}
    P(|\hat{p}_1-p_1|\geq \eta_{{K}})\leq \epsilon/8.
\end{equation}

We have 
\begin{equation}
\begin{split}
    |\hat{p}_2-p_2|&=\bigg|\frac{\sum_{n\in \mathcal{L}}\sum_{t=1}^{T(n)}\textbf{1}(u_n-\ell_n\leq \beta)\mathbf{1}(S(y_{n}(t))=0)}{\sum_{n\in \mathcal{L}}\sum_{t^\prime=1}^{T(n)}\textbf{1}(u_{n^\prime}-\ell_{n^\prime}\leq \beta)\mathbf{1}(u_{n^{\prime}}\leq y_{n^\prime}(t^\prime))}-p_2\bigg|,\\
    &\stackrel{(a)}{=}\bigg|\frac{\sum_{n\in \mathcal{L}}\sum_{t=1}^{T(n)}\textbf{1}(u_n-\ell_n\leq \beta)\mathbf{1}(u_{n}\leq  y_{n}(t))\mathbf{1}(S(y_{n}(t))=0)}{\sum_{n\in \mathcal{L}}\sum_{t^\prime=1}^{T(n)}\textbf{1}(u_{n^\prime}-\ell_{n^\prime}\leq \beta)\mathbf{1}(u_{n^{\prime}}\leq y_{n^\prime}(t^\prime))}-p_2\bigg|,
\end{split}
\end{equation}
where $(a)$ follows from the fact that for all $y_n(t)$ such that $S(y_{n}(t))=0$, we have $u_{n}\leq y_n(t)$. Now, we have
\begin{equation}
    \sum_{n\in \mathcal{L}}\sum_{t^\prime=1}^{T(n)}\textbf{1}(u_{n^\prime}-\ell_{n^\prime}\leq \beta)\mathbf{1}(u_{n^{\prime}}\leq y_{n^\prime}(t^\prime))\geq K,
\end{equation}
since $\theta_n\in (0,1)$ and there exists at least one action $y_{n}(t)=1$ above the threshold. 
Using Hoeffding's inequality, we have 
\begin{equation}
    P(|\hat{p}_2-p_2|\geq \eta_{{K}})\leq \epsilon/8.
\end{equation}
Hence, the statement of the theorem follows. 
\end{proof}
\subsection{Hard Feedback Model}
\begin{theorem}\label{thm:explorationHF}  For all $\delta\geq 0$,  $B$ such that $\log_{\phi}(1/\beta)+1\leq B$, $\phi>1$, the $\delta$-regret of the platform  over the exploration set $\mathcal{L}$ in UCB-PVI-HF for hard feedback model is
\begin{equation}
\begin{split}
    R_{\delta}(|\mathcal{L}|)&\leq |\mathcal{L}|\bigg(     (\phi+1)L_{r}\log_{{\phi}}(\log_{{\phi}}(1/{\beta}) +1)+\log_{{\phi}}(1/{\beta})+1 +(\phi+1)L_r 
+  \frac{\beta L_r}{1-\gamma}\bigg).
\end{split}
\end{equation}
\end{theorem}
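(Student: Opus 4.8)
The plan is to follow the template of the proof of Theorem~\ref{thm:exploration}, exploiting the fact that in the hard feedback model a feedback is received for \emph{every} action, so that both the shrinkage of the uncertainty interval and the consumption of the budget become deterministic. I would re-use the notation of that proof: let $[\ell(j),u(j)]$ be the UI after the $j$-th call to LSE (Algorithm~\ref{alg:AdversaryStrategy}), let $I_j=u(j)-\ell(j)$, and let $J=\min\{j:I_j\le\beta\}$.

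\textbf{Step 1 (deterministic shrinkage and budget).} In a call to LSE the action set $A$ partitions the current interval into $\phi$ sub-intervals of length $I_{j-1}/\phi$; since each action returns a feedback, the first grid point exceeding $\theta_n$ is located and the UI is replaced by the length-$I_{j-1}/\phi$ sub-interval containing $\theta_n$. Hence $I_j=\phi^{-j}$ and $J\le\log_\phi(1/\beta)+1$ deterministically. Moreover at most one action per call lies above $\theta_n$ — the one that triggers the break — so the total budget spent before $u(j)-\ell(j)\le\beta$ is at most $J\le\log_\phi(1/\beta)+1\le B$ by hypothesis; consequently no user in $\mathcal{L}$ abandons, and after round $J$ Algorithm~\ref{alg:MainAlg} plays the conservative action $\ell\le\theta_n$ forever at no budget cost. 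In particular every user reaches the steady state $u-\ell\le\beta$, so, unlike in Theorem~\ref{thm:exploration}, there is no abandonment term.

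\textbf{Step 2 (regret decomposition).} As in~\eqref{eq:newRegret} I would bound $R_\delta(|\mathcal{L}|)\le\sum_{n\in\mathcal{L}}\mathbf{E}\big[\sum_{t}\big(\gamma^{t-1}r(\theta_n)-r(y_n(t))\mathbf{1}(y_n(t)\le\theta_n)\big)\mid\theta_n\big]$ and split each user's contribution into: (i) actions above $\theta_n$ up to round $J$, (ii) actions below $\theta_n$ up to round $J$, and (iii) all actions after round $J$. For (i) there are at most $J$ such actions, each contributing at most $r(\theta_n)\le1$, for a total at most $\log_\phi(1/\beta)+1$. For (ii), each of the at most $\phi+1$ below-threshold actions in round $j$ lies, together with $\theta_n$, inside $[\ell(j-1),u(j-1)]$, so Assumption~1 bounds its regret by $L_rI_{j-1}=L_r\phi^{-(j-1)}$; summing over $j=1,\dots,J$ gives $(\phi+1)L_r\sum_{j=1}^{J}\phi^{-(j-1)}\le(\phi+1)L_r\big(1+\log_\phi J\big)$, using $\phi^{-(j-1)}\le1/(j-1)$ for $j\ge2$ together with $\sum_{i=1}^{J-1}1/i\le\log_\phi J$, exactly as in the corresponding step of the proof of Theorem~\ref{thm:exploration}. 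For (iii), the conservative action satisfies $r(\theta_n)-r(\ell)\le L_r(\theta_n-\ell)\le L_r\beta$, and the discount factor turns the total into a geometric sum bounded by $L_r\beta/(1-\gamma)$.

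\textbf{Step 3 (conclusion).} Adding the three bounds, substituting $J\le\log_\phi(1/\beta)+1$ into the $\log_\phi J$ term, and multiplying by $|\mathcal{L}|$ gives the claimed inequality. The only mildly delicate point is Step~2(ii): the crude bound $\sum_jI_{j-1}\le J$ used for the soft-feedback model would only yield an $O(\log(1/\beta))$ contribution, whereas here the exact geometric decay $I_{j-1}=\phi^{-(j-1)}$ must be summed against a harmonic-type estimate to obtain the sharper $O(\log\log(1/\beta))$ dependence of the theorem; the remaining bookkeeping is strictly simpler than in the soft-feedback case, since there is neither feedback randomness nor abandonment to account for.
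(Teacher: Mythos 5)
Your proposal is correct and follows essentially the same route as the paper's own proof: the deterministic shrinkage $I_j=I_{j-1}/\phi$, the per-round bound $L_rI_{j-1}(\phi+1)+1$, the budget count $J\le\log_\phi(1/\beta)+1\le B$ ruling out abandonment, the harmonic-sum trick $\phi^{-(j-1)}\le 1/(j-1)$ yielding the $\log_\phi(\log_\phi(1/\beta)+1)$ term, and the $\beta L_r/(1-\gamma)$ tail after round $J$ all appear identically in the paper's argument. No gaps.
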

\begin{proof} 
Letting $[\ell(j), u(j)]$ be the UI after  $j$ rounds of Linear Search Exploration and $I_{j}=u(j)-\ell(j)$. Let $J$ be the first instance at which $u(J)-\ell(J)\leq \beta$, namely
\begin{equation}\label{eq:StopTimeP}
    J=\min\{j: u(j)-\ell(j)\leq \beta\}.
\end{equation}

{Since $K$ is the number of users in $\mathcal{L}$ that do not abandon the platform, we have $J<\infty$ with probability one for these users.}

The $\delta$-regret for these users is  
\begin{equation}\label{eq:newRegret1}
    R_{\delta}(K)\leq \sum_{n=1}^{K}\mathbf{E}\bigg[\sum_{t=1}^{\infty}( \gamma^{t-1}r(\theta_{n})-r(y_{n}(t))\bigg|\theta_{n}\bigg],
\end{equation}
because given $\theta_n$, $r(\theta_{n})$ is the maximum achievable reward for a user and the platform and $\gamma<1$. 


{Assuming $J<\infty$, namely there exists a $J$ such that $u(J)-\ell(J)\leq \beta$.} We divide the regret analysis into two regions: until round $J$ and after round $J$.

 For all $j\leq J$, the residual uncertainty $I_{j}$ at round $j$ is reduced by a factor of $1/\phi$ compared to the residual uncertainty $I_{j-1}$ at the previous round for hard feedback case, namely
\begin{equation}\label{eq:redUI}
I_{j}=\frac{I_{j-1}}{\phi},   
\end{equation}
because given $I_{j}$ at round $j$, the actions in set $A=\{\ell(j-1), \ell(j-1)+I_{j},\ell(j-1)+2I_{j},\ldots, u(j-1)\}$ are performed sequentially at this round, and feedback is received for each action. The regret, as defined in (\ref{eq:newRegret1}),  at round $j$ of LSE is at most
\[L_{r}I_{j-1}(\phi+1)+1,\]
because 
the platform performs $\phi +1$ actions in LSE and the threshold can be crossed at most one time, which contributes to unity in the above expression. Combining the fact $I_{0}=1$ and \eqref{eq:redUI}, we have
\begin{equation}\label{eq:PHardFeedbakc}
    J\leq \log_{{\phi}}(1/{\beta}) +1.
\end{equation}
The  budget utilized until $J$ is at most 
\begin{equation}\label{eq:UppBudnget1}
    \log_{{\phi}}(1/{\beta})+1,
\end{equation}
and 
$\log_{{\phi}}(1/{\beta})+1 \leq B$ using the assumption in the theorem. Thus, $J$ is finite for all users in $\mathcal{L}$, namely $K=|\mathcal{L}|$. 

Thus, the total regret until round $J$ of LSE is
{\small
\begin{equation}\label{eq:1}
\begin{split}
   &\sum_{j=1}^{J} \bigg(\big(1+\phi\big)L_r I_{j-1}+ 1\bigg),\\
   &\leq \sum_{j=1}^{J} \bigg(\frac{(1+\phi)L_r}{\phi^{j-1}}+ 1\bigg),\\
   &\stackrel{}{\leq} (1+\phi)L_r+\sum_{j=2}^{J} \frac{(\phi+1)L_{r}}{ {(j-1)}}+ J,\\
   &\leq  (\phi+1)L_{r}\log_{\phi}(J)+J+(\phi+1)L_r, \\
   &\leq (\phi+1)L_{r}\log_{\phi}(\log_{{\phi}}(1/{\beta}) +1)+\log_{{\phi}}(1/{\beta}) +1+(\phi+1)L_r,
\end{split}
\end{equation}}
where last inequality follows from \eqref{eq:PHardFeedbakc}. 
The regret following $J$ iterations of LSE is at most
\begin{equation}\label{eq:2}
\beta L_{r}/(1-\gamma).
\end{equation}


Combining (\ref{eq:1}) and (\ref{eq:2}), the regret for each user $n\in\mathcal{L}$ is at most
\begin{equation}
\begin{split}
&\mathbf{E}\bigg[\sum_{t=1}^{\infty}( \gamma^{t-1}r(\theta_{n})-r(y_{n}(t))\bigg|\theta_{n}\bigg]\\
     &\leq   (\phi+1)L_{r}\log_{{\phi}}(\log_{{\phi}}(1/{\beta}) +1)+\log_{{\phi}}(1/{\beta})+1 +(\phi+1)L_r 
+  \frac{\beta L_r}{1-\gamma}.\\
\end{split}
\end{equation}
The statement of the theorem now follows by summing over all the users in the exploration set.
\end{proof}
\subsubsection{Proof of Theorem \ref{thm:HFexploitation}}\label{sec:thm3}
\begin{proof} 

Let $G^*(B)=V^*_{\delta}(0,1,B)$ be the  expected reward  received by the $\delta$-policy for patience budget $B$ for each user. 
Given the estimate $\hat{F}^{K}(x)$ and $B_r$,  $\hat{G}(B)=\hat{V}^*(0,1,B)$ is the estimate of expected rewards using UCB-PVI-HF for each user in $\mathcal{E}$.   $G(B)$ 
 is the actual expected rewards  of the platform's policy in UCB-PVI-HF for each user in $\mathcal{E}$. In hard feedback model, $\hat B_r=B_r$ as the $B_r$ can be perfectly estimated from the feedback. In the notations, we drop the parameter $\delta$ for simplicity.

Thus, the cumulative regret over all users in $\mathcal{E}$ with respect to $\delta$-policy is 
\begin{equation}
\begin{split}
        R_{\delta}(E) &= |\mathcal{E}|( G^*(B)-G(B))=|\mathcal{E}|(G^*(B)-\hat{G}(B)+\hat{G}(B)-G(B)).
\end{split}
\end{equation}
In the above equation, we bound the two terms $R^{+}(B)=G^*(B)-\hat{G}(B)$ and $R^{++}(B)=\hat{G}(B)-G(B)$ individually. 

For $R^{+}(B)$, given $B_{r}\leq B$, we define 
\begin{equation}
\begin{split}
     E^{+}(B_{r})&= \sup_{0\leq\ell\leq u\leq 1} \big( {V}^*_{\delta}(\ell,u,B_{r})-\hat{V}^*(\ell,u,B_{r})\big).
\end{split}
\end{equation}
Given $B_r$, $E^{+}(B_{r})$ is the maximum error in estimation of value function for a remaining budget $B_{r}$, which is observable in hard feedback model.
For any $B_{r}\leq B$ and $x\geq 0$, the following holds 
{\small
\begin{equation}\label{eq:maximumErrorBound}
\begin{split}
     E^{+}(B_{r})&=\sup_{0\leq\ell\leq u\leq 1} \big( {V}^*_{\delta}(\ell,u,B_{r})-\hat{V}^*(\ell,u,B_{r})\big)\\
     &\stackrel{(a)}{\leq} \sup_{0\leq\ell\leq u\leq 1} \bigg( {V}^*_{\delta}(\ell,u,B_{r}-x)-\hat{V}^*(\ell,u,B_{r}-x) + \frac{x}{1-\gamma}\bigg)\\
     &=E^{+}(B_{r}-x)+\frac{x}{1-\gamma},
\end{split}
\end{equation}}
where $(a)$ follows from the following facts:  for all $B_{r}< 0$, $V^*_{\delta}(\ell,u,B_{r})=\hat{V}(\ell,u,B_r)=0$; $r(y_{n}(t))\leq 1$ and the expectation is performed with respect to conditional probabilities $P(y|\ell,u)$ and $P_{U}^{K}(y|\ell,u)$ which lies in $[0,1]$; the maximum reward due to  platform-user interactions in $x$ budget is $x/(1-\gamma)$.
 Also, using the fact that the rewards  function $r(y)$ is non-decreasing in $y$,  for $u_{1}\leq \ell_{2}$ and $B_{1}\leq B_{2}$, we have 
 {\small
\begin{equation}\label{eq:boundValueFunction}
    V^*_{\delta}(\ell_2,u_2,B_{2})-V^*_{\delta}(\ell_1,u_1,B_{1})\leq \frac{B_{2}-B_{1}}{1-\gamma}.
\end{equation}
Likewise, 
\begin{equation}
    \hat{V}^*(\ell_2,u_2,B_{2})- \hat{V}^*(\ell_1,u_1,B_{1})\leq \frac{B_{2}-B_{1}}{1-\gamma},
\end{equation}
Next, we  bound $R^{+}(B)=G^*(B)-\hat{G}(B)$ in terms of $E^{+}(B)$. Now, suppose 
\begin{equation}\label{eq:definePU}
    P_{U}^{K}(y|\ell,u)=P(y|\ell,u)+v(\ell,u,y).
\end{equation}}
For all $\ell,u\in [0,1]$ such that $\ell-u>\delta$ and $B^\prime\leq B$, the lower bound on $\hat{V}^*(\ell,u,B^\prime)$ is as follows:
{\small
\begin{equation}
\begin{split}
    &\hat{V}^*(\ell,u,B^\prime)\\
    &\stackrel{(a)}{\geq}  \hat{V}^{K}_{U}(\ell,u,B^\prime,y^*_{\delta}(\ell,u,B^\prime)),\\
                 &= P_{U}^{K}\big(y^*_{\delta}(\ell,u,B^\prime)|\ell,u\big)\big(r({y}^*_{\delta}(\ell,u,B^\prime))+\gamma\hat{V}^*({y}^*_{\delta}(\ell,u,B^\prime),u,B^\prime)\big),\\
                 &\qquad +\big(1-P_{U}^{K}(y^*_{\delta}(\ell,u,B^\prime)|\ell,u)\big)\gamma\hat{V}^*(\ell,y^*_{\delta}(\ell,u,B^\prime),B^\prime-1)\\
       &\stackrel{(b)}{=} \bigg(P\big({y}^*_{\delta}(\ell,u,B^\prime)|\ell,u\big)+v(\ell,u,y^*_{\delta}(\ell,u,B^\prime))\bigg)\big(r({y}^*_{\delta}(\ell,u,B^\prime))+\gamma\hat{V}^*({y}^*_{\delta}(\ell,u,B^\prime),u,B^\prime)\big)\\
       &\quad+\bigg(1-P\big({y}^*_{\delta}(\ell,u,B^\prime)|\ell,u\big)-v(\ell,u,y^*_{\delta}(\ell,u,B^\prime))\bigg)\gamma\hat{V}^*(\ell,y^*_{\delta}(\ell,u,B^\prime),B^\prime-1)\\
       &\stackrel{(c)}{\geq}  \bigg(P\big({y}^*_{\delta}(\ell,u,B^\prime)|\ell,u\big)+v(\ell,u,y^*_{\delta}(\ell,u,B^\prime))\bigg)\bigg(r({y}_{\delta}^*(\ell,u,B^\prime))+\gamma{V}^*_{\delta}({y}^*_{\delta}(\ell,u,B^\prime),u,B^\prime)\\
       &-\gamma E^{+}(B^\prime)\bigg)+\bigg(1-P\big({y}_{\delta}^*(\ell,u,B^\prime)|\ell,u\big)-v(\ell,u,y_{\delta}^*(\ell,u,B^\prime))\bigg)\bigg(\gamma{V}^*_{\delta}(\ell,y^*_{\delta}(\ell,u,B^\prime),B^{\prime}-1)\\
       &\qquad -\gamma E^{+}(B^{\prime}-1)\bigg)\\
       &\stackrel{(d)}{\geq}  V^*_{\delta}(\ell,u,B^\prime) - \gamma E^{+}(B^\prime)+ v(\ell,u,y^*_{\delta}(\ell,u,B^\prime))r(y^*_{\delta}(\ell,u,B^\prime))\\
       &\qquad-v(\ell,u,y^*_{\delta}(\ell,u,B^\prime))\gamma\big(E^{+}(B^\prime)-E^{+}(B^\prime-1)\big)\\
       &\qquad+ v(\ell,u,y^*_{\delta}(\ell,u,B^\prime))\gamma\bigg({V}^*_{\delta}({y}^*_{\delta}(\ell,u,B^\prime),u,B^\prime)-{V}^*_{\delta}(\ell,y^*_{\delta}(\ell,u,B^\prime),B^\prime-1)\bigg)\\
       &\stackrel{(e)}{\geq} V^*_{\delta}(\ell,u,B^\prime)-|v(\ell,u,y^*_{\delta}(\ell,u,B^\prime))|\bigg(1+2\frac{\gamma}{1-\gamma}\bigg)-\gamma E^{+}(B^\prime),\\
\end{split}
\end{equation}
}
where $(a)$ follows from the fact that the optimal action according to $\hat{V}^*(\ell,u,B^\prime)$ is $\hat{y}(\ell,u,B^\prime)$; $(b)$  follows from \eqref{eq:definePU}; $(c)$ follows from the definition of $E^+(B^\prime)$, $(d)$ follows from the fact that the probability lies in $[0,1]$ and $E^{+}(B^\prime)\geq E^{+}(B^\prime-1)$,because  the instance or the optimal policy that corresponds to value of $ E^{+}(B^\prime-1)$ is also be  feasible in budget $B^\prime$; $(e)$ follows from (\ref{eq:maximumErrorBound}) and (\ref{eq:boundValueFunction}). 
From the above equation,  we have that  
\begin{equation}\begin{split}\label{eq:DiffBetwVFnEstimate_1}
   V^*_{\delta}(\ell,u,B^\prime)- \hat{V}^*(\ell,u,B^\prime)&\leq |v(\ell,u,y^*(\ell,u,B^\prime))|\bigg(\frac{1+\gamma}{1-\gamma}\bigg)+\gamma E^{+}(B^\prime),\\
   & \stackrel{(a)}{\leq} \sup_{\ell,u,y} |P^{K}_{U}(y|\ell,u)-P(y|\ell,u)|\bigg(\frac{1+\gamma}{1-\gamma}\bigg) +\gamma E^{+}(B^\prime),\\
    &\stackrel{(b)}{\leq} \frac{4(\eta_{K}+2\beta L_h)}{L_{c}\delta}\bigg(\frac{1+\gamma}{1-\gamma}\bigg)+\gamma E^{+}(B^\prime),
\end{split}
\end{equation}
with probability at least $1-\epsilon$, 
where $(a)$ follows from the definition of $v(\ell,u,y)$ in \eqref{eq:definePU}, $(b)$ follows from the definition of $P_{U}^{K}(y|\ell,u)$ and 
Theorem \ref{thm:UCBLCB_HF}.
Also, for $u-\ell\leq \delta$, we have that $ V^*_{\delta}(\ell,u,B^\prime)- \hat{V}^*(\ell,u,B^\prime)$ is zero as both $\delta$-policy and the algorithm perform action $\ell$. 
In conclusion, we have that for all $B^{\prime}\leq B$
\begin{equation}\label{eq:Final_1}
    E^{+}(B^\prime){\leq} \frac{4(\eta_{K}+2\beta L_h)}{L_{c}\delta}\bigg(\frac{1+\gamma}{(1-\gamma)^2}\bigg),
\end{equation}
with probability at least $1-\epsilon$ . 
We have 
\begin{equation}\label{eq:regret1_HF}
    R^+(B)=G^*(B)-\hat{G}(B) \leq E^{+}(B) \leq \frac{4(\eta_{K}+2\beta L_h)}{L_{c}\delta}\bigg(\frac{1+\gamma}{(1-\gamma)^2}\bigg).
\end{equation}

Likewise, we bound $ R^{++}(B)$. Let for all $B_{r}\leq B$,
\begin{equation}
\begin{split}
      E^{++}(B_r)&=\sup_{0\leq \ell\leq u\leq 1} (\hat{V}^{*}(\ell,u,B_r)- R(\ell,u,B_r,\hat{y}(\ell,u,B_r)),\\
    & =\sup_{0\leq \ell\leq u\leq 1} (\hat{V}^{K}_{U}(\ell,u,B_r,\hat{y}(\ell,u,B_r))- R(\ell,u,B_r,\hat{y}(\ell,u,B_r)),
\end{split}
\end{equation}
where $R(\ell,u,B_r,\hat{y}(\ell,u,B_r))$ is the actual expected reward  from the state $(\ell,u,B_r)$ using UCB-PVI-HF. 
Similar to (\ref{eq:maximumErrorBound}), we have 
\begin{equation}
    E^{++}(B_r)\leq E^{++}(B_r-x)+x/1-\gamma,
\end{equation}
because for all $B_{r}<0$, $R(\ell,u,B_r,\hat{y}(\ell,u,B_r))=\hat{V}^{K}_{U}(\ell,u,B_r,\hat{y}(\ell,u,B_r))=0$;  $r(y_{n}(t))\leq 1$ and the expectation is performed with respect to conditional probabilities $P(y|\ell,u)$ and $P_{U}^{K}(y|\ell,u)$ which lies in $[0,1]$; the maximum expected reward in  budget $x$ is $x/(1-\gamma)$.
Similar to (\ref{eq:boundValueFunction}),  for $u_{1}\leq \ell_{2}$, $B_{1}\leq B_{2}$ and  $y_1,y_2\in [0,1]$, we have
\begin{equation}
    R(\ell_{2},u_2,B_2,y_2)-R(\ell_1,u_1,B_1,y_1)\leq \frac{B_2-B_1}{1-\gamma}.
\end{equation}
Using these inequalities, similar to (\ref{eq:regret1_HF}), it can be shown that
\begin{equation}\label{eq:regret2_HF}
\begin{split}
        R^{++}(B)&=\hat{G}(B)-G(B),\\
        &=\hat{V}^{K}_{U}(0,1,B,\hat{y}(0,1,B))- R(0,1,B,\hat{y}(0,1,B)),\\
        &\leq E^{++}(B) \leq \frac{4(\eta_{K}+2\beta L_h)}{L_{c}\delta}\bigg(\frac{1+\gamma}{(1-\gamma)^2}\bigg).
\end{split}
\end{equation}
The statement of the theorem  follows from \eqref{eq:regret1_HF} and \eqref{eq:regret2_HF}. 
\end{proof}

\begin{theorem}\label{thm:deltaHF} For {$|\mathcal{L}|= (\log(16/\epsilon))^{1/2}N^{2/3}(1-\tilde{\lambda})^{1/2}/( B^{1/3}) $} and $\beta = \eta_{|\mathcal{L}|}/2 L_h$, 
there exists a constant $M$ such that  with probability at least $1-\epsilon$, the $\delta$-Regret  of UCB-PVI-HF for hard feedback model is $\tilde{O}(N^{2/3})$. 
\end{theorem}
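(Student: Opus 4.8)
The plan is to bound the $\delta$-regret by splitting the $N$ users into the exploration set $\mathcal{L}$ and the exploitation set $\mathcal{E}$, so that $R_{\delta}(N) = R_{\delta}(|\mathcal{L}|) + R_{\delta}(|\mathcal{E}|)$, and then to invoke Theorem~\ref{thm:explorationHF} for the first summand and Theorem~\ref{thm:HFexploitation} for the second. All that is left is to substitute the prescribed values $\beta = \eta_{|\mathcal{L}|}/(2L_h)$ and $|\mathcal{L}| = (\log(16/\epsilon))^{1/2}N^{2/3}(1-\tilde\lambda)^{1/2}/B^{1/3}$, verify that the hypotheses of the two component theorems hold under these choices, and check that all the resulting terms collapse to $\tilde{O}(N^{2/3})$; the structure mirrors the proof of Theorem~\ref{thm:deltaRegret}.

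First I would use $\beta = \eta_{|\mathcal{L}|}/(2L_h)$, which gives $2\beta L_h = \eta_{|\mathcal{L}|}$ and hence $\eta_{|\mathcal{L}|} + 2\beta L_h = 2\eta_{|\mathcal{L}|}$. Plugging this into Theorem~\ref{thm:HFexploitation} and using $|\mathcal{E}|\le N$ and $\eta_{|\mathcal{L}|} = \sqrt{18\log(16/\epsilon)/|\mathcal{L}|}$ yields, with $L_c,L_h,\delta,\gamma$ treated as constants,
\begin{equation}
R_{\delta}(|\mathcal{E}|) \le |\mathcal{E}|\,\frac{16\,\eta_{|\mathcal{L}|}}{L_c\delta}\cdot\frac{1+\gamma}{(1-\gamma)^2} = O\!\left(N\sqrt{\tfrac{\log(16/\epsilon)}{|\mathcal{L}|}}\right).
\end{equation}
For the exploration part, note that $\beta = \eta_{|\mathcal{L}|}/(2L_h) = \Theta(1/\sqrt{|\mathcal{L}|})$ with $|\mathcal{L}|\le N$, so $\log_{\phi}(1/\beta) = O(\log N)$, the correction term $\log_{\phi}(\log_{\phi}(1/\beta)+1) = O(\log\log N)$ is dominated, and $\beta L_r/(1-\gamma) = o(1)$; hence Theorem~\ref{thm:explorationHF} gives $R_{\delta}(|\mathcal{L}|) = O(|\mathcal{L}|\log N)$.

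Next I would insert the prescribed $|\mathcal{L}|$, treating $B$, $\tilde\lambda\in(0,1)$, $\epsilon$ as constants (or at worst keeping the polynomial-in-$B$ dependence explicit). Then $R_{\delta}(|\mathcal{L}|) = O\!\left(N^{2/3}\log N \cdot (\log(16/\epsilon))^{1/2}(1-\tilde\lambda)^{1/2}/B^{1/3}\right)$, and since $\sqrt{|\mathcal{L}|} = \Theta\!\big((\log(16/\epsilon))^{1/4}(1-\tilde\lambda)^{1/4}N^{1/3}/B^{1/6}\big)$ we get $R_{\delta}(|\mathcal{E}|) = O\!\left((\log(16/\epsilon))^{1/4}B^{1/6}(1-\tilde\lambda)^{-1/4}\,N^{2/3}\right)$. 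Both summands are $N^{2/3}$ times a polynomial in $B$ and a polylogarithmic factor in $N$ and $1/\epsilon$, so there is a constant $M$ absorbing $B,\gamma,\delta,L_c,L_h,L_r,\phi,\epsilon,\lambda$ with $R_{\delta}(N)\le M\,N^{2/3}\,\mathrm{polylog}(N)$, i.e. $\tilde{O}(N^{2/3})$. The probabilistic guarantee is inherited: the bound of Theorem~\ref{thm:explorationHF} is deterministic (in the hard-feedback model every user of $\mathcal{L}$ reaches the steady state since $\log_{\phi}(1/\beta)+1\le B$, so $K=|\mathcal{L}|$), while Theorem~\ref{thm:HFexploitation} holds with probability at least $1-\epsilon$; combining them keeps probability at least $1-\epsilon$.

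Rather than a deep obstacle, the main thing to be careful about is the internal consistency of the parameter choice: $\beta$ is defined through $\eta_{|\mathcal{L}|}$, which itself depends on $|\mathcal{L}|$, so one must confirm that the resulting $\beta$ lies in $(0,1)$ and that the standing hypotheses $\log_{\phi}(1/\beta)+1\le B$ and $\phi>1$ of Theorems~\ref{thm:explorationHF} and~\ref{thm:HFexploitation}, together with $|\mathcal{L}|=o(N)$ (so that $|\mathcal{E}|=\Theta(N)$, which uses $N^{2/3}=o(N)$), all hold for $N$ large enough. Once this bookkeeping is settled, the remaining work is just dominating the $\log\log N$ and $o(1)$ contributions by $N^{2/3}\log N$, exactly as in Theorem~\ref{thm:deltaRegret}.
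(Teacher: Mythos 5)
Your proposal is correct and follows exactly the route of the paper, whose proof of this theorem is simply the one-line observation that it follows by combining Theorem~\ref{thm:explorationHF} (exploration set) with Theorem~\ref{thm:HFexploitation} (exploitation set) under the stated choices of $|\mathcal{L}|$ and $\beta$; your substitutions and order-of-magnitude checks are consistent with what that combination yields. You actually supply more detail than the paper does, including the correct observation that the exploration bound is deterministic (so the overall guarantee inherits the $1-\epsilon$ probability from the exploitation bound alone).
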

\begin{proof}
The proof follows from Theorem \ref{thm:explorationHF} and Theorem 3.
\end{proof}
\subsubsection{Miscellaneous result}
\begin{theorem}\label{thm:UCBLCB_HF}
Let the assumptions in Theorem \ref{thm:explorationHF} hold. For all $\mathcal{L}$, and $\ell,u,y\in[0,1]$ such that $u-\ell>\delta$, we have
\begin{equation}\label{eq:UCBLCBbound_HF}
    P_{L}^{|\mathcal{L}|}(y|\ell,u)\leq P(y|\ell,u) \leq  P_{U}^{|\mathcal{L}|}(y|\ell,u),
\end{equation}
with probability at least $1-\epsilon$.
\end{theorem}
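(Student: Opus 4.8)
The plan is to mirror the proof of Theorem~\ref{thm:UCBandLCB} almost verbatim, exploiting the one structural simplification afforded by the hard feedback model: under the hypothesis $\log_{\phi}(1/\beta)+1\leq B$ inherited from Theorem~\ref{thm:explorationHF}, every user $n\in\mathcal{L}$ reaches residual uncertainty $u_n-\ell_n\leq\beta$ after $J\leq\log_{\phi}(1/\beta)+1$ rounds of LSE while spending at most $\log_{\phi}(1/\beta)+1\leq B$ of its budget, so no user abandons and $K=|\mathcal{L}|$ \emph{deterministically}. Consequently there is no need to invoke the concentration bound on $K$ from Theorem~\ref{thm:thresholdReveal}, which is why the failure probability improves from $1-\lambda\epsilon$ to $1-\epsilon$; otherwise the estimator $\hat{F}^{|\mathcal{L}|}$ and the confidence half-width $2(\eta_{|\mathcal{L}|}+2\beta L_h)/(L_c\delta)$, with $\eta_{|\mathcal{L}|}=\sqrt{18\log(16/\epsilon)/|\mathcal{L}|}$, are exactly as in the soft-feedback case.

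\textbf{Step 1 (reduction to CDF-increment deviations).} Fix $\ell,u,y\in[0,1]$ with $u-\ell>\delta$. I would argue by contradiction exactly as in Theorem~\ref{thm:UCBandLCB}. If $P_U^{|\mathcal{L}|}(y|\ell,u)\leq P(y|\ell,u)$ and $\hat{F}^{|\mathcal{L}|}(u)-\hat{F}^{|\mathcal{L}|}(\ell)$ attains the maximum in the denominator, then expanding $\rho(u,y)/\rho(u,\ell)-\bar{\rho}^{|\mathcal{L}|}(u,y)/\bar{\rho}^{|\mathcal{L}|}(u,\ell)$ and using $\bar{\rho}^{|\mathcal{L}|}(u,\ell)\geq L_c(u-\ell)\geq L_c\delta$ (Assumption~2 together with $u-\ell>\delta$) forces at least one of
\[E_1:\ \hat{F}^{|\mathcal{L}|}(u)-\hat{F}^{|\mathcal{L}|}(y)\leq F(u)-F(y)-\eta_{|\mathcal{L}|}-2\beta L_h,\]
\[E_2:\ \hat{F}^{|\mathcal{L}|}(u)-\hat{F}^{|\mathcal{L}|}(\ell)\geq F(u)-F(\ell)+\eta_{|\mathcal{L}|}+2\beta L_h.\]
In the complementary case where $L_c(u-\ell)$ attains the maximum, dividing through and again invoking $u-\ell>\delta$ collapses the bad event to the single deviation $\hat{F}^{|\mathcal{L}|}(u)-\hat{F}^{|\mathcal{L}|}(y)+2(\eta_{|\mathcal{L}|}+2\beta L_h)\leq F(u)-F(y)$.

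\textbf{Step 2 (deviation bound via DKW plus Hoeffding).} The crux is to bound $\mathbf{P}(\sup_{\ell,u}|\bar{\rho}^{|\mathcal{L}|}(u,\ell)-\rho(u,\ell)|\geq\eta_{|\mathcal{L}|}+2\beta L_h)$. Since all users satisfy $u_n-\ell_n\leq\beta$ here, $\hat{F}^{|\mathcal{L}|}(x)=|\mathcal{L}|^{-1}\sum_{n\in\mathcal{L}}\mathbf{1}(\ell_n\leq x)$, and writing $\mathbf{1}(\ell_n\leq x)=\mathbf{1}(\theta_n\leq x)+\mathbf{1}(\ell_n\leq x<\theta_n)$ exhibits this as the empirical CDF of the i.i.d.\ samples $\{\theta_n\}_{n\in\mathcal{L}}$ plus a nonnegative bias term whose summands are Bernoulli with mean at most $\beta L_h$, because $\ell_n\leq x<\theta_n$ together with $\theta_n-\ell_n\leq u_n-\ell_n\leq\beta$ puts $\theta_n$ in an interval of $F$-mass at most $L_h\beta$ by Assumption~2. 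Applying the Dvoretzky--Kiefer--Wolfowitz inequality to the empirical-CDF part and Hoeffding's inequality to the bias part gives this probability at most $\epsilon/4$, hence $\mathbf{P}(E_1),\mathbf{P}(E_2)\leq\epsilon/4$. Combining the two cases of Step~1, $\mathbf{P}(P_U^{|\mathcal{L}|}(y|\ell,u)\leq P(y|\ell,u))\leq\epsilon/2$; the symmetric argument gives $\mathbf{P}(P_L^{|\mathcal{L}|}(y|\ell,u)\geq P(y|\ell,u))\leq\epsilon/2$, and a union bound yields \eqref{eq:UCBLCBbound_HF} with probability at least $1-\epsilon$.

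\textbf{Main obstacle.} As in Theorem~\ref{thm:UCBandLCB}, the only genuine work is the uniform-in-$(\ell,u,y)$ control of the bias of $\hat{F}^{|\mathcal{L}|}$ by $\beta L_h$ and the bookkeeping of the case split on which term attains the maximum in the denominator (and the corresponding one-sided event when $L_c(u-\ell)$ wins); everything else is a direct transcription of the soft-feedback argument, with the $\lambda$-layer and the reliance on Theorem~\ref{thm:thresholdReveal} simply dropped because $K=|\mathcal{L}|$ holds with certainty here.
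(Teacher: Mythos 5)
Your proposal is correct and matches the paper's argument: the paper proves this theorem by citing the conditional bound \eqref{eq:UCBLCBbound} from Theorem~\ref{thm:UCBandLCB} and observing that under $\log_{\phi}(1/\beta)+1\leq B$ the hard-feedback model gives $K=|\mathcal{L}|$ deterministically, so the $\lambda$-layer from Theorem~\ref{thm:thresholdReveal} is unnecessary. You simply unfold the contradiction/DKW--Hoeffding details of Theorem~\ref{thm:UCBandLCB} rather than citing them, which is the same route.
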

\begin{proof}
Using \eqref{eq:UCBLCBbound}, given $K$, we have 
\begin{equation}
    P_{L}^{{K}}(y|\ell,u)\leq P(y|\ell,u) \leq  P_{U}^{{K}}(y|\ell,u),
\end{equation} 
with probability at least $1-\epsilon$. Using \eqref{eq:UppBudnget1} and the assumption that $\log_{{\phi}}(1/{\beta})+1 \leq B$, in hard feedback model, we have
\begin{equation}
    K=|\mathcal{L}|. 
\end{equation}
The statement of the theorem follows.
\end{proof}

\vskip 0.2in
\bibliographystyle{theapa}
\bibliography{refereces}

\end{document}